\documentclass{article}

\newif\ifarxiv
\arxivtrue            

\usepackage{comment}
\usepackage{microtype}
\usepackage{graphicx}
\usepackage[caption=false,font=footnotesize]{subfig}
\usepackage{booktabs}
\usepackage{amsmath}
\usepackage{amssymb}
\usepackage{mathtools}
\usepackage{amsthm}
\usepackage{enumitem}
\usepackage{hyperref}
\usepackage[capitalize,noabbrev]{cleveref}
\usepackage{algorithm}
\usepackage{algorithmic}

\ifarxiv
\else
  \usepackage[textsize=tiny]{todonotes}
\fi

\ifarxiv
  \usepackage[margin=1in]{geometry}
  \usepackage[numbers]{natbib}
\else
  \usepackage{icml2026}
\fi

\newcommand{\p}{\mathbb{P}}

\newcommand{\suml}{\sum_{l=0}^{L}}
\newcommand{\suminf}{\sum_{l=0}^{\infty}}

\newcommand{\argmax}{\operatorname{argmax}}
\newcommand{\Ne}{N_{\epsilon}}
\newcommand{\bigO}{\mathcal{O}}
\newcommand{\eqn}{r_0 + r_0^{1.5} = 4}

\theoremstyle{plain}
\newtheorem{theorem}{Theorem}[section]
\newtheorem{proposition}[theorem]{Proposition}
\newtheorem{lemma}[theorem]{Lemma}
\newtheorem{corollary}[theorem]{Corollary}
\theoremstyle{definition}
\newtheorem{definition}[theorem]{Definition}

\theoremstyle{remark}
\newtheorem{remark}[theorem]{Remark}
\newtheorem{claim}{Claim}
\newtheorem*{claimproof}{Proof of Claim}

\ifarxiv
  \title{Box Thirding: Anytime Best Arm Identification under Insufficient Sampling\\
  \large Preprint. Under review.}
  \author{
    Seohwa Hwang \and Junyong Park\\
    Department of Statistics, Seoul National University, Seoul, Republic of Korea\\
    \texttt{junyongpark@snu.ac.kr}
  }
  \date{} 
\else
  \icmltitlerunning{Box Thirding: Anytime BAI under Insufficient Sampling}
\fi
\begin{document}
\ifarxiv
  \maketitle

  \begin{center}
  \small\textbf{Keywords:} Multi-arm Bandit, Best Arm Identification, Anytime, Insufficient budget constraint
  \end{center}

\else
  \twocolumn[
  \icmltitle{Box Thirding: Anytime Best Arm Identification under Insufficient Sampling}

  \icmlsetsymbol{equal}{*}

  \begin{icmlauthorlist}
  \icmlauthor{Seohwa Hwang}{yyy}
  \icmlauthor{Junyong Park}{yyy}
  \end{icmlauthorlist}

  \icmlaffiliation{yyy}{Department of Statistics, Seoul National University, Seoul, Republic of Korea}
  \icmlcorrespondingauthor{Junyong Park}{junyongpark@snu.ac.kr}

  \icmlkeywords{Multi-arm Bandit, Best Arm Identification, Anytime, Insufficient budget constraint}

  \vskip 0.3in
  ]
  \printAffiliationsAndNotice{}
\fi

\begin{abstract}
We introduce Box Thirding (B3), a flexible and efficient algorithm for Best Arm Identification (BAI) under fixed budget constraints. It is designed for both anytime BAI and scenarios with large \( N \), where the number of arms is too large for exhaustive evaluation within a limited budget \( T \). The algorithm employs an iterative ternary comparison: in each iteration, three arms are compared—the best-performing arm is explored further, the median is deferred for future comparisons, and the weakest is discarded. Even without prior knowledge of \( T \), B3 achieves an \(\epsilon\)-best arm misidentification probability comparable to SH, which 
requires T as a predefined parameter, applied to a randomly selected subset of \( c_0 \) arms that fit within the budget. Empirical results show that B3 outperforms existing methods for the limited budget constraint in terms of simple regret, as demonstrated on the New Yorker Cartoon Caption Contest dataset.
\end{abstract}

\section{Introduction}
{
In the standard multi-armed bandit (MAB) setting, a learner sequentially selects from a finite set of 
$N$ arms, each associated with an unknown reward distribution, with the goal of identifying the arm with the highest expected reward under a limited sampling budget. Best Arm Identification (BAI) formalizes this objective, most commonly in the fixed-budget setting, where the total number of pulls is predetermined and the probability of misidentifying the best arm is minimized \cite{bai,bai2}.
}

In many real-world applications, however, the sampling budget is neither fixed nor known in advance. Examples include clinical trials with fluctuating enrollment, large-scale crowdsourcing platforms, and online recommendation systems with dynamically expanding candidate pools. This motivates the study of \emph{anytime} BAI algorithms, which maintain a valid estimate of the best arm at any stopping time without prior knowledge of the budget. As argued by \citet{bai2}, the anytime formulation is a natural and practically relevant extension of BAI.

A fundamental challenge arises when the available budget is severely limited. In particular, when the total budget \(T\) is smaller than the number of arms \(N\), the learner enters a qualitatively different regime: some arms may never be sampled, and failure to identify the best arm may occur not because of estimation error, but because promising arms are never given the opportunity to compete. In this regime, the dominant difficulty lies in the algorithm's ability to \emph{screen and retain sufficiently many viable candidates}, rather than in refining reward estimates.

Many existing anytime algorithms implicitly assume a minimum budget that allows each arm to be pulled at least once, which limits their effectiveness in such settings~\cite{anytimeGAI, anytimeM, anyMOSS}. Several recent methods address this limitation by adapting the anytime framework to situations where resources are insufficient to sample all arms even once, often through bracketing or subsampling strategies~\cite{bucb, bsh}. While these approaches provide theoretical guarantees on the misidentification probability under limited budgets, they typically define the ``data-poor regime'' solely in terms of the budget size \(T\), without accounting for the algorithm-dependent ability to evaluate and retain promising arms.

To capture this distinction, we introduce the concept of the \textit{data-poor condition}, which occurs when the available budget is insufficient for a given algorithm to meaningfully evaluate all potentially optimal arms. Crucially, this condition is algorithm-dependent: under the same budget constraint, different algorithms may retain vastly different sets of plausible candidates. The data-poor condition therefore delineates the regime in which the dominant source of error shifts from estimation noise to limited screening capacity and directly determines lower bounds on the probability of misidentifying the best arm. We formally define the data-poor condition and analyze its implications for anytime BAI in Section~\ref{sec:candidate_set}.



To address this challenge, we propose \emph{Box Thirding} (B3), a novel anytime BAI algorithm tailored to operate effectively under the data-poor condition. B3 organizes arms into a hierarchical structure and performs repeated ternary comparisons, in which strong arms are promoted for further evaluation, weak arms are discarded, and uncertain arms are deferred for future reconsideration. This suspension mechanism prevents premature elimination while continuously allocating more budget to promising candidates. Importantly, B3 is fully anytime, requires no prior knowledge of the budget \(T\), and introduces no tuning parameters. By design, B3 encounters the data-poor condition only when \(T \lesssim N\), which is unavoidable since at least \(T = N\) pulls are required to sample each arm once. The B3 algorithm is described in detail in Section~\ref{sec:algorithm}.


Our theoretical analysis is structured around a decomposition of the overall error probability into two components: a non-inclusion probability, corresponding to the event that the best arm is never retained as a candidate under limited screening capacity, and a misidentification probability, capturing estimation error conditional on inclusion. This decomposition allows us to explicitly quantify the role of screening capacity in anytime BAI under limited budgets. We show that B3 achieves optimal screening capacity up to constant factors and attains sharp upper bounds on the probability of returning a suboptimal arm. Theoretical guarantees and comparisons with existing anytime methods are provided in Section~\ref{sec:main}, and numerical experiments validating our results are presented in Section~\ref{sec:sim}.

\section{Preliminaries}
\subsection{Setup and Notations}
{The problem involves \( N \) arms and a total sampling budget of \( T \) pulls. Each arm \( i \in \{1, \dots, N\} \) generates rewards independently from a 1-sub-Gaussian distribution with an unknown mean \( \mu_i \). That is, if \( X \) denotes the reward obtained from pulling arm \( i \), the sub-Gaussian assumption ensures that the probability of large deviations from the mean decays at least as fast as $\exp(-t^2/2)$}.
The mean rewards satisfy the ordering \( \mu_1 > \mu_2 \geq \ldots \geq \mu_N \), which is unknown to the player.

For any subset of arms \(A \subset \{1, 2, \ldots, N\}\), we define \(\mu_*(A)\) as the highest mean among the arms in \(A\):
\[
\mu_*(A) = \max_{i \in A} \mu_i.
\]
{The \textit{estimated best arm}, denoted by 
$a_T,$}
is the arm selected as the candidate for the best arm after
$T$ pulls. This selection is based on the algorithm and the observed average rewards.

Finding the \textit{true best arm} (\(\mu_1\)) is challenging with an unknown budget \(T\), as it may not be sufficient to evaluate all \(N\) arms. Instead, we aim to identify an \(\epsilon\)-best arm, which provides a practical and achievable alternative. An \(\epsilon\)-best arm is  an arm whose mean reward satisfies \(\mu_1 - \mu_i <\epsilon\). This relaxed objective ensures that the mean of the estimated best arm is close to $\mu_1$, even when exhaustive evaluations of all arms are infeasible.

{
We use standard asymptotic notation. 
For positive $A$ and $B$, 
$A = \bigO(B)$ (resp., $A = \Omega(B)$) means that there exist constant 
$c > 0$  such that $A \le c\,B$ 
(resp., $A \ge c\,B$). 
We write $A = \Theta(B)$ if both $A = \bigO(B)$ and $A = \Omega(B)$ hold.

}

\subsection{Related Work}
\subsubsection{Theoretical Results of BAI}

In the fixed-budget BAI problem,
the goal is to minimize the probability of misidentifying the best arm
under a limited sampling budget~\citep{bai2}.
Unlike standard multi-armed bandit problems that focus on cumulative regret,
BAI algorithms are typically evaluated using simple regret,
defined as $\mathbb{E}(\mu_1 - \mu_{a_T})$.

Another common evaluation criterion is the $(\epsilon,\delta)$-sample complexity,
which characterizes the budget required to identify an $\epsilon$-best arm
with high probability~\citep{pac,pac2,sample_complexity}.
Recent work has shown that this notion can be adapted to the fixed-budget setting
by directly controlling the misidentification probability
$\p(\mu_1 - \mu_{a_T} > \epsilon)$ as a function of the budget $T$~\citep{bsh}.
Such bounds simultaneously characterize both simple regret
and $(\epsilon,\delta)$-sample complexity.

Regarding fundamental limits,
\citet{lb_bai} established lower bounds on the misidentification probability
for fixed-budget BAI.
In particular, when reward distributions are unknown,
any algorithm satisfies
\begin{equation}\label{eqn:lb_unknown}
\p(\mu_1 \neq \mu_{a_T})
\geq \exp\left\{-\bigO\left(\frac{T}{H_2 \log_2 N}\right)\right\},    
\end{equation}
highlighting the intrinsic difficulty of large-scale BAI problems.
Subsequent work further shows that these lower bounds are not uniformly achievable
across all problem instances, especially when the number of arms is large~\citep{lb_bai2}.

\subsubsection{Comparison of key BAI Algorithms in Fixed Budget Setting}

\textbf{Uniform Sampling (US)} is the simplest approach, pulling all arms equally and selecting the arm with the largest average reward. 
{Despite its simplicity, \citet{uf} showed that US is admissible in the sense that for any algorithm, there exists at least one reward distribution on which US performs better.}

The \textbf{UCB-E} algorithm \cite{bai2} selects the arm with the highest upper confidence bound, calculated using the estimated mean reward and a tuning parameter. Unlike US, UCB-E adaptively allocates more samples to promising arms. Empirically, it significantly reduces the probability of misidentification compared to US, particularly when \( N \gg 1 \).

{\textbf{Sequential Halving (SH)} \cite{sh} iteratively allocates the budget across elimination rounds,
pulling each remaining arm equally and discarding the bottom half at each stage.
With an appropriate budget schedule, SH achieves a misidentification probability
for unknown reward distributions that is bounded as in \eqref{eqn:lb_unknown}.}




\subsubsection{Strategies for algorithms under anytime setting/data-poor regime}
{
In the fixed-budget setting, the total sampling budget must be specified in advance,
whereas anytime algorithms operate without prior knowledge of the budget and must
produce a valid recommendation at any stopping time.

A standard approach for converting fixed-budget BAI algorithms into anytime
procedures is \textit{the doubling strategy}, which repeatedly runs a fixed-budget algorithm
under exponentially increasing budget guesses (e.g., $T_0, 2T_0, 4T_0$), restarting
the algorithm until the true budget is exhausted \citep{bai2,dsh,bsh}.

\textit{The bracketing strategy} extends doubling to data-poor regimes by applying it to
subsets of arms with increasing sizes.
Rather than operating on all $N$ arms at each phase, bracketing selects a subset,
runs doubling on that subset, and repeats the process with larger subsets if
additional budget remains.
The final recommendation is chosen by comparing the empirical best arms obtained
from each subset.

By progressively increasing the subset size, bracketing enables anytime operation
even when $T < N$, as in BUCB \citep{bucb} and BSH \citep{bsh}.
However, since information from earlier subsets is not fully integrated,
performance is limited under severe budget constraints.
}

\section{The Box Thirding Algorithm}\label{sec:algorithm}

\begin{figure}[ht]
    \centering
    \includegraphics[width=0.6\linewidth]{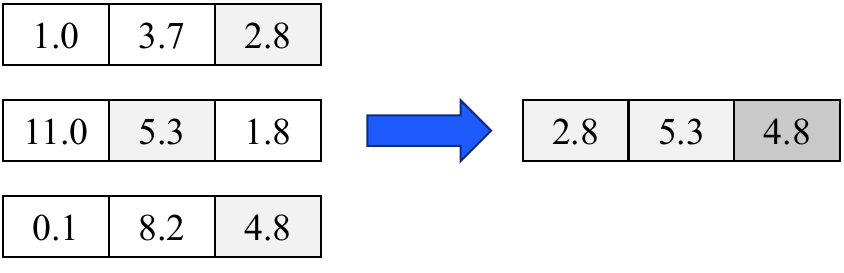}
    \caption{Toy example of remedian estimation: partition the data into three blocks and take
their within-block medians \((2.8,\,5.3,\,4.8)\); taking the median of these medians
yields \(4.8\). Repeating this hierarchical “median-of-medians” construction produces
an estimator that converges in probability to the population median.}
    \label{fig:remedian_estimation}
\end{figure}
The B3 algorithm is motivated by a hierarchical selection idea originating
from robust statistics.
Figure~\ref{fig:remedian_estimation} illustrates this idea of the remedian estimator
\citep{remedian_est}, which partitions data into small blocks and summarizes them by
local medians.
These intermediate medians act as coarse selection devices; nevertheless, repeated
hierarchical selection yields an estimator that converges in probability to the true
median.

In SH, a median-based screening mechanism appears: at each stage,
the empirical median of rewards serves as a threshold, promoting arms above it and
discarding those below.
Although SH relies on the exact median of observed rewards, precise estimation is
unnecessary due to reward noise and the corrective effect of subsequent rounds.

B3 adopts this selection philosophy in a local and hierarchical manner.
By performing median-based elimination within boxes and combining the survivors across
levels, it yields a consistent estimator of the effective median threshold while
allowing early, noisy promotion errors to be corrected through further comparisons.

\subsection{Box Operations and Main Algorithm}

B3 algorithm processes arms iteratively by organizing them into hierarchical structures called \emph{boxes}. {Each box, denoted as Box($l,j$),} stores up to three arms together with their average rewards, and serves as the basic unit for comparison and decision making.

The two parameters indicate the following:
\begin{itemize}[noitemsep, topsep=2pt]
    \item \textbf{Level (\(l\))}: Level represents the algorithm's current evidence of being optimal in an arm, where higher levels correspond to stronger candidates.
    \item \textbf{Deferment count (\(j\))}: Deferment count records the number of times the decision to promote or eliminate an arm has been postponed.
\end{itemize}

Whenever a box becomes full, B3 applies a ternary selection rule,
implemented by the procedure \textsc{ARRANGE\_BOX}.
Given three arms in Box\((l,j)\), the procedure ranks them by their empirical means
and assigns distinct roles with different sampling implications.

\begin{algorithm}[h!]
   \caption{\textsc{ARRANGE\_BOX}}
   \label{alg:arrange_box}
\begin{algorithmic}
   \STATE {\bfseries Input:} level \(l\), deferment count \(j\), discard set \(D\)
   \STATE \textsc{LIFT}(the largest arm in Box(\(l,j\)), \(l+1\))
   \STATE \textsc{SHIFT}(the median arm in Box(\(l,j\)), \(l, j+1\))
   \STATE \textsc{DISCARD}(the smallest arm in Box(\(l,j\)), \(D\))
\end{algorithmic}
\end{algorithm}

The arm with the largest empirical mean is \textsc{LIFT}ed: it receives an additional
\(\lceil r_0^l\rceil\) samples, its empirical mean is updated, and it is promoted to
Box\((l+1,0)\).
The arm with the smallest empirical mean is \textsc{DISCARD}ed and permanently removed
from further consideration.
The remaining arm is \textsc{SHIFT}ed: it receives no additional samples and is moved
to Box\((l,j+1)\), deferring the promotion or elimination decision to a later comparison.

\begin{figure}[htbp]
    \centering
    \includegraphics[width=0.7\linewidth]{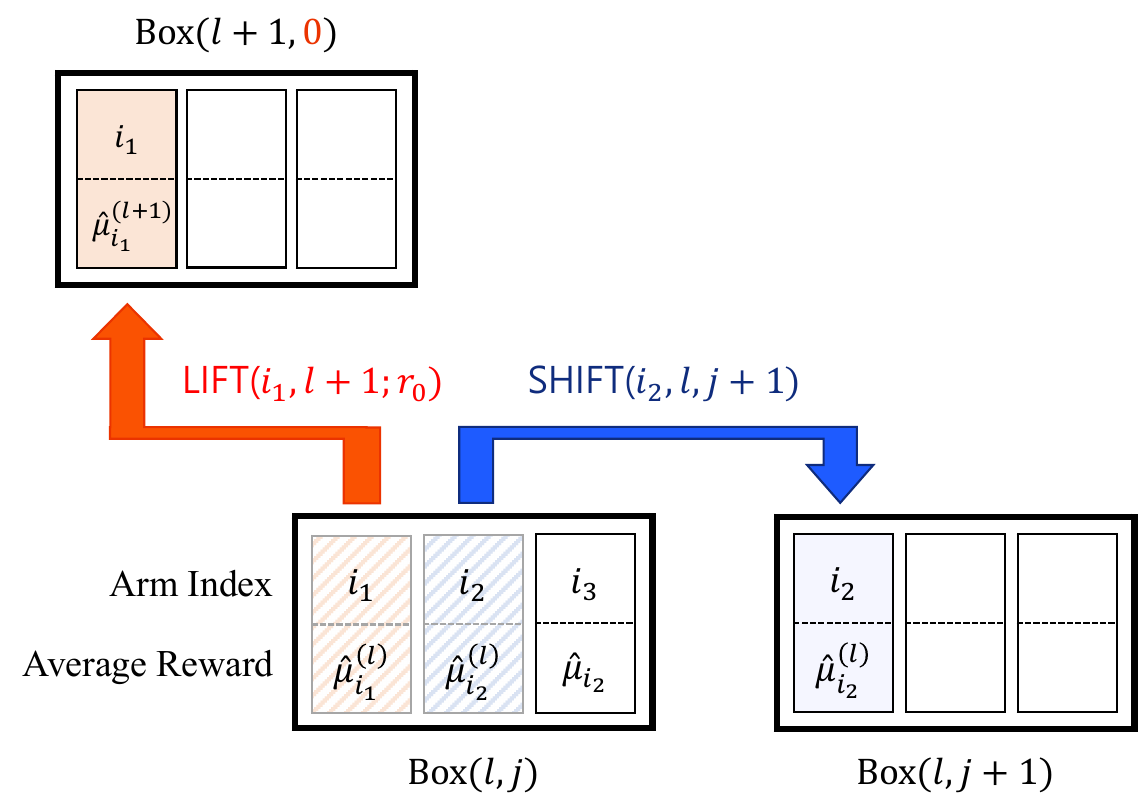}
    \caption{Illustration of \textsc{ARRANGE\_BOX}(\(l,j;D\)) when \(\hat{\mu}_{i_1} > \hat{\mu}_{i_2} > \hat{\mu}_{i_3}\). The \textsc{DISCARD} operation is omitted for clarity.}
\end{figure}

The B3 algorithm, described in Algorithm~\ref{alg:b3}, operates by repeatedly applying \textsc{ARRANGE\_BOX} in two coordinated phases that prioritize promising arms while continuously introducing new candidates:
\begin{enumerate}[noitemsep, topsep=2pt]
    \item \textbf{Top-down evaluation sweep}: The algorithm scans boxes from the highest level downwards, ensuring that stronger candidates are prioritized for additional sampling and vacant higher-level boxes are promptly filled.
    \item \textbf{Base-level replenishment}: After the sweep, if Box(\(0,0\)) has available capacity and unexamined arms remain, a new arm is introduced and lifted to the base level.
\end{enumerate}

This iterative process enables B3 to continuously introduce new candidates while refining the estimates of promising arms, ensuring that a strong candidate for the best arm is maintained at any stopping time.

\begin{algorithm}[h!]
   \caption{Box Thirding (B3)}
   \label{alg:b3}
\begin{algorithmic}
   \STATE {\bfseries Initialize:} $L \gets 0$, $J_L \gets 0$, $t \gets 0$, $D \gets \emptyset$
   \STATE Set $r_0$ which solves $\eqn$
   \WHILE{The budget remains}
       \FOR{$l \in \{L, L-1, \ldots, 0\}$}
            \FOR{$j \in \{J_l, J_l-1, \ldots, 0\}$}
                \IF{Box$(l,j)$ is full and Box$(l+1,0)$, Box$(l,j+1)$ are not full}
                    \STATE \textsc{ARRANGE\_BOX}$(l,j;D)$
                \ENDIF
                \STATE Update $J_l \gets \max\{j : \text{Box}(l,j) \text{ is not empty}\}$
            \ENDFOR
       \ENDFOR
       \STATE Update $L \gets \max\{l : \text{Box}(l,0) \text{ is not empty}\}$
       \STATE Define $J_L \gets \max\{j : \text{Box}(L,j) \text{ is not empty}\}$
       \IF{Box$(0,0)$ is not full}
            \STATE Select $i \notin D \sqcup (\bigsqcup_{l,j} \text{Box}(l,j))$ uniformly
            \STATE \textsc{LIFT}$(i,0)$
       \ENDIF
   \ENDWHILE
   \STATE {\bfseries Return:} The arm in Box$(L,0)$ with the largest average
\end{algorithmic}
\end{algorithm}

The choice of the base sampling parameter \(r_0\) is guided by optimality considerations and is discussed in Proposition~\ref{prop:sh_optimal}. When the budget is sufficient to examine all arms, B3 reduces to a modified procedure described in Appendix~\ref{appendix:alg}.

\subsection{Justification of the Box Thirding Algorithm}
\label{sec:intuition}

\begin{figure}[h]
    \centering
    \includegraphics[width=0.7\linewidth]{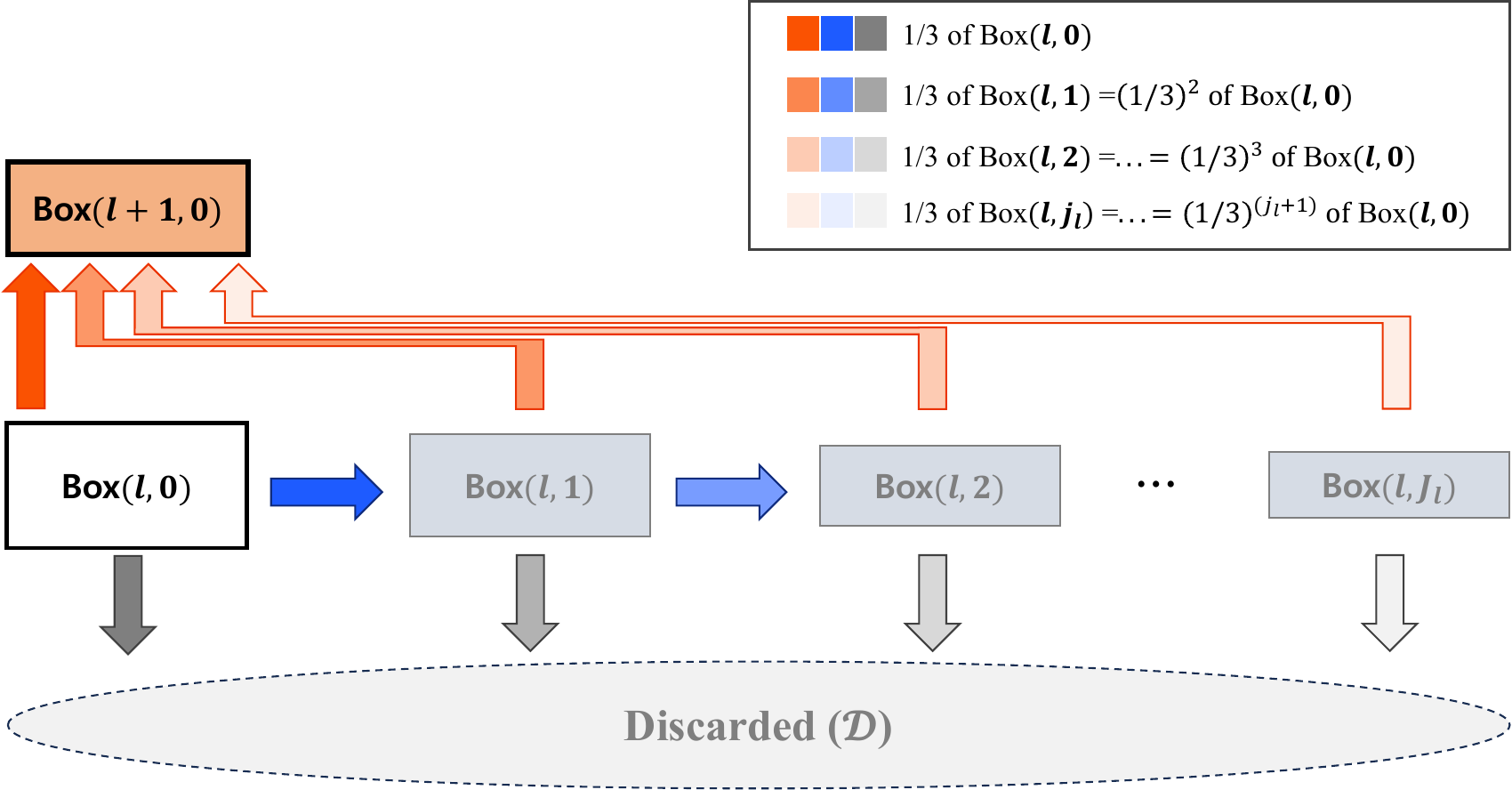}
    \caption{Fraction of arms that are lifted, shifted, and discarded at a fixed level \(l\).}
\end{figure}

B3 implements an implicit halving procedure in an anytime manner.
Despite relying on local ternary comparisons, its long-run screening behavior
matches that of binary elimination schemes.
Fix a level $l$ and consider the flow of arms originating from $\mathrm{Box}(l,0)$.
At each application of \textsc{ARRANGE\_BOX}, one arm is promoted to
$\mathrm{Box}(l+1,0)$, while the remaining arms are either discarded or deferred.
Applying the same ternary rule recursively to deferred arms, the fraction of arms
eventually promoted from level $l$ to $l+1$ is
\[
\frac{1}{3} + \frac{1}{3^2} + \frac{1}{3^3} + \cdots
= \sum_{j=1}^{\infty} \frac{1}{3^j}
= \frac{1}{2}.
\]
Thus, although promotion decisions are made locally through ternary comparisons,
B3 achieves an effective halving rate at the population level.

Crucially, this halving behavior is achieved without discarding past information.
Deferred arms receive no additional samples, yet their empirical means are reused
in subsequent comparisons against progressively refined sets of competitors.
As weaker arms are filtered out, the median within $\mathrm{Box}(l,j+1)$ becomes a
more accurate decision threshold, allowing earlier noisy comparisons to contribute
to increasingly reliable lift or discard decisions.
This hierarchical reuse of past sampling results enables B3 to improve screening
quality without additional sampling, which is essential for anytime operation
under insufficient budgets.

\section{Theoretical Analysis}
This section provides theoretical guarantees for B3 under the data-poor condition. As discussed in Sections~\ref{sec:intuition}, when the sampling budget is insufficient to evaluate all arms, the primary challenge is whether the algorithm is able to retain the best arm as a viable candidate. To formalize this notion, we introduce the concept of a \emph{candidate set}, which represents the collection of arms that remain under consideration at a given time.

Our analysis decomposes the overall error probability into two components: a \emph{non-inclusion probability}, corresponding to the event that the best arm is never retained in the candidate set, and a \emph{misidentification probability}, corresponding to selecting a suboptimal arm among the retained candidates due to estimation error. This decomposition reflects the fundamental trade-off between screening and estimation under limited budgets and forms the basis of our theoretical comparison. We show that B3 maximizes screening capacity under the data-poor condition and achieves sharp upper bounds on the probability of misidentifying an \(\varepsilon\)-best arm.

\subsection{Candidate Set and Data-poor Condition}\label{sec:candidate_set}

To analyze the screening behavior of B3 under limited budgets, we formalize the set of arms that remain under active consideration during the algorithm.

{
\begin{definition}[Candidate Set $C$]
    Let \( (i_1, i_2, \ldots, i_N) \) denote the sequence of arms pulled by algorithm \( \pi \) under budget \( T \).  
    Given this sequence, the candidate set \( C \equiv C^\pi(i_1, \ldots, i_N; T) \) is defined as the collection of arms that could be selected as the best arm with non-zero probability under some reward distribution, i.e.,
    \begin{equation*}
        \begin{aligned}
            \bigcup_{\nu\in\{\text{reward distribution}\}}\left\{ i_k  :  \mathbb{P}_\nu\left(\mu_{i_k} = \mu_{a_T} \mid i_1, \ldots, i_N\right) > 0 \right\}.
        \end{aligned}
    \end{equation*}
    We denote \( c_0 = |C| \), the cardinality of the candidate set.
\end{definition}

For example, consider running B3 with budget \( T = 6 \), and suppose the arms are pulled in the order
\( (1,2,3,4,5) \).

\begin{itemize}[noitemsep, topsep=2pt]
    \item At \( T = 1\text{--}3 \): arms \( (1,2,3) \) are each pulled once and lifted to the base comparison box,
    Box\( (0,0) \).
    
    \item At \( T = 4\text{--}5 \): the three arms in Box\( (0,0) \) are compared.
    The arm with the largest empirical mean is lifted to Box\( (1,0) \) after two additional pulls,
    the arm with the median empirical mean is shifted to Box\( (0,1) \),
    and the arm with the smallest empirical mean is discarded.
    As a result, Box\( (0,0) \) becomes empty.
    
    \item At \( T = 6 \): arm \( 4 \) is pulled once and lifted to Box\( (0,0) \).
\end{itemize}

If the budget is exhausted at this point, B3 returns the arm in Box\( (1,0) \),
since it is the arm residing at the highest level.
Arm \(5\) is never pulled and therefore cannot be selected as the best arm.
Arm \(4\), although pulled once, does not advance to higher levels and thus cannot be returned
under any reward distribution.
In contrast, arms \(1,2,\) and \(3\) each reach a level at which they could be selected as the output
for some realization of rewards.
Hence, the candidate set in this example is
\( C = \{1,2,3\} \), and the candidate set size is \( c_0 = 3 \).

This example highlights two key properties of the candidate set \(C\) and its
cardinality \(c_0\).
While the specific composition of \(C\) depends on the random order in which arms
are pulled—for instance, yielding \(C=\{3,5,4\}\) under the permutation \((3,5,4,1,2)\)—
its size \(c_0 = |C|\) is a deterministic quantity determined solely by the algorithm
and the budget \(T\).
Moreover, not every arm that is sampled necessarily becomes a candidate:
some arms may consume part of the budget yet fail to accumulate sufficient evidence
to advance to higher levels, and therefore can never be selected as the output under
any reward distribution.

When the budget is so restricted that $c_0$ cannot cover all potentially optimal arms, the algorithm enters what we define as the \textbf{data-poor condition}.
}

\begin{definition}[Data-Poor Condition]
The \textit{data-poor condition for \(\epsilon\)} occurs when the size of the Candidate Set, \(c_0\), is smaller than the number of arms that are not \(\epsilon\)-best. That is:
\[
c_0 \leq N - \Ne,
\]
where \(\Ne\) is the number of \(\epsilon\)-best arms.
When \( \epsilon=0 \), we refer to cases where \( c_0 < N \) as simply the data-poor condition.
\end{definition}
Unlike data-poor regime, this definition of the data-poor condition depends on the specific algorithm, as each algorithm determines \(c_0\) differently.

\subsection{Main Result}\label{sec:main}
Our theoretical framework addresses the randomness induced by limited budgets
by explicitly separating failures due to insufficient screening
from errors arising in estimation after screening.
This separation allows us to characterize the fundamental trade-off
between retaining promising arms and accurately identifying the best arm
once it is retained.

\begin{theorem}\label{thm:b3}
Under the data-poor condition for $\epsilon$,
the B3 algorithm satisfies the following upper bound:
\[
\p(\mu_1 - \mu_{a_T} > \epsilon)
\leq \exp\left\{-\Omega\left(
\max\left\{\frac{N_{\epsilon/2}}{N}, \frac{\epsilon^2}{N}\right\} T
\right)\right\},
\]
where $N_{\epsilon/2}$ denotes the number of $\epsilon/2$-best arms.
\end{theorem}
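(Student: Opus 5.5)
The bound will come from splitting the failure event into a screening part and an estimation part,
\[
\p(\mu_1-\mu_{a_T}>\epsilon)\le \p\bigl(C\cap S_{\epsilon/2}=\emptyset\bigr)+\p\bigl(\mu_1-\mu_{a_T}>\epsilon,\ C\cap S_{\epsilon/2}\neq\emptyset\bigr),
\]
where $S_{\epsilon/2}$ is the set of $\epsilon/2$-best arms and $C$ is the candidate set. Each term is handled separately and then combined into the single exponent in the statement.

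\textbf{Screening term.} First I will show that B3 retains $c_0=\Theta(T)$ candidates. By the halving calculation of Section~\ref{sec:intuition}, roughly $n/2^l$ of $n$ introduced arms reach level $l$, and each lift there costs $\lceil r_0^l\rceil$ samples. Since $r_0<2$ (indeed $r_0+r_0^{1.5}=4$ gives $r_0\approx 1.8$), the total cost $\sum_l (n/2^l)r_0^l$ is a convergent geometric series, so $T=\Theta(n)$ and hence $c_0=\Theta(T)$. To get this I will count, level by level, the arms currently sitting in boxes at each stopping time; every box holds at most three arms, so at most $\bigO(\log T)$ arms are in transit. The candidate set is a uniformly random subset of size $c_0$ because arms are introduced uniformly. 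Therefore
\[
\p(C\cap S_{\epsilon/2}=\emptyset)\le \Bigl(1-\tfrac{N_{\epsilon/2}}{N}\Bigr)^{c_0}\le \exp\Bigl(-\Omega\bigl(\tfrac{N_{\epsilon/2}}{N}T\bigr)\Bigr).
\]

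\textbf{Estimation term.} Condition on an $\epsilon/2$-best arm $i^\star$ lying in $C$. For the output to be worse than $\epsilon$-best, some arm with gap greater than $\epsilon/2$ relative to $i^\star$ must beat $i^\star$ (or its surviving descendants) in the comparisons that matter. Comparisons at level $l$ use about $\sum_{k<l}r_0^k\asymp r_0^l$ samples per arm. By the sub-Gaussian tail bound, a pairwise error at gap $\epsilon/2$ then has probability $\exp(-\Omega(\epsilon^2 r_0^l))$. I will follow the SH analysis of \citet{sh}, adapted to the ternary rule. The bad event requires, at some level, that a constant fraction of the comparisons in a box are erroneous, or that the top-level comparison is erroneous. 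The top level $L$ satisfies $r_0^L\asymp T/\text{(number of arms per level)}$, so the top-level error contributes $\exp(-\Omega(\epsilon^2 T/c_0))$. Because we are under the data-poor condition and $c_0=\Theta(T)\le N$, the effective number of competing arms is at most $N$, so this is at most $\exp(-\Omega(\epsilon^2T/N))$. Lower levels are then summed as a geometric series dominated by the top level.

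\textbf{Combining the terms.} The two bounds give $e^{-a}+e^{-b}\le 2e^{-\min(a,b)}$. To obtain the max in the statement I will use the two cases separately. When $N_{\epsilon/2}/N\ge\epsilon^2/N$, the estimation exponent is not worse once I note that it also holds with a rate of $N_{\epsilon/2}$ in place of $\epsilon^2$: with $N_{\epsilon/2}$ good arms among the candidates, the best of them survives a noisy tournament with failure probability driven by their count. In the other case, I use $\epsilon^2$ in place of $N_{\epsilon/2}$ in the screening term, which is valid since $N_{\epsilon/2}\ge1$ and sub-Gaussian $\epsilon\lesssim$ const.

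\textbf{Main obstacle.} The hardest part is rigorously controlling the estimation term through the deferment mechanism. A good arm may be shifted many times, so its number of samples is not tied to its level. Ensuring that the max, rather than the min, of the two rates appears is also delicate. My first approach is to couple B3 with an SH-like tree in which each shifted arm inherits the comparison statistics of its box, and then apply a union bound over the $\bigO(\log T)$ levels.
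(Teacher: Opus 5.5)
Your decomposition and component bounds match the paper's route: the union of non-inclusion and within-candidate misidentification at level $\epsilon/2$, the hypergeometric estimate $\binom{N-N_{\epsilon/2}}{c_0}/\binom{N}{c_0}\le(1-N_{\epsilon/2}/N)^{c_0}$ with $c_0=\Theta(T)$ from the convergent series $\sum_l(r_0/2)^l$, and an SH-style level-wise union bound. The paper's proof consists only of these two bounds added together, and $e^{-a}+e^{-b}\le 2e^{-\min(a,b)}$ gives just $\exp\{-\Omega(\min\{N_{\epsilon/2},\epsilon^2\}T/N)\}$. That $\min$ form is what the paper itself invokes in the appendix to derive the corollaries, and it is what the sample-complexity requirement $T\gtrsim\max\{N/N_{\epsilon/2},N/\epsilon^2\}\ln(1/\delta)$ encodes.

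The genuine gap is your ``Combining the terms'' step, which tries to upgrade to the $\max$. Both halves of it are false, and the $\max$ form fails on simple instances, so it cannot be repaired.
\begin{itemize}
\item[(i)] The estimation error is not controlled by the count $N_{\epsilon/2}$. Put $N/2-1$ arms at $\mu_1-\eta$ with $0<\eta<\epsilon/2$ and the other $N/2$ arms at gap $2\epsilon$, with rewards $\mathcal{N}(\mu_i,1)$ and $T\le N/2$. Then $c_0\le T\le N-N_\epsilon$, so the data-poor condition holds. Let $\epsilon,\eta\to0$ with $N,T$ fixed. The law of the observations tends in total variation to the identical-arms law, under which the returned label is uniform because arms enter in uniformly random order. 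Hence $\p(\mu_1-\mu_{a_T}>\epsilon)\to1/2$, whereas the claimed $\exp\{-\Omega(N_{\epsilon/2}T/N)\}=\exp\{-\Omega(T)\}$ is tiny once $T$ is large.
\item[(ii)] Replacing $N_{\epsilon/2}$ by a larger $\epsilon^2$ in the screening exponent strengthens the bound. Sub-Gaussianity restricts the noise, not the gaps, so $\epsilon\lesssim1$ is not available. Suppose arm $1$ is the only $\epsilon/2$-best arm and every other gap exceeds $\epsilon$. Then failure occurs whenever $1\notin C$, so with $T=N/2$ the failure probability is at least $1-c_0/N\ge1/2$ for every $\epsilon$. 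Meanwhile $\exp\{-\Omega(\epsilon^2T/N)\}\to0$ as $\epsilon\to\infty$.
\end{itemize}
You should stop at the $\min$.

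Your estimation paragraph also has the level ordering backwards: the sum over levels is not dominated by the top level. Per-arm sample sizes grow like $r_0^l$, so the base level, where each arm has a single pull, is the bottleneck. The term you call top-level, $\exp\{-\Omega(\epsilon^2T/c_0)\}$ with $T/c_0=\Theta(1)$, is really the base-level term. In fact $r_0^L\approx c_0^{\log_2 r_0}$ is sublinear in $T$.

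The paper's device is to split the gap as $p_l=(1-r_0^{-1/2})^2(l+1)r_0^{-l/2}$ with $\sum_l p_l\le1$. Level $l$ then contributes $3\exp\{-\Omega(\epsilon^2(l+1)^2)\}$, and the sum is governed by $l=0$. This yields only $\exp\{-\Omega(\epsilon^2)\}$, up to the prefactor handled as in the SH proof. That suffices because $c_0=\Theta(T)\le N$ forces $T/N=\bigO(1)$ under the data-poor condition.

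Finally, the obstacle you flag about deferment does not arise. SHIFT adds no pulls, so an arm's sample count is fixed by its level regardless of $j$.
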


The bound in Theorem~\ref{thm:b3} follows from the observation that the event
\( \mu_1 - \mu_{a_T} > \epsilon \) is contained in the union
\[
\underbrace{\left(\mu_1 - \mu_*(C) > \epsilon/2\right)}_{\text{Non-Inclusion}}
\;\cup\;
\underbrace{\left(\mu_*(C) - \mu_{a_T} > \epsilon/2\right)}_{\text{Misidentification}}.
\]

The non-inclusion term corresponds to the event that the best arm is never retained
in the candidate set, either because it is sampled too late to accumulate sufficient evidence
or not sampled at all, which can occur only under the data-poor condition.
The misidentification term captures estimation error that arises after the best arm
has been included in the candidate set.

To prove the Theorem~\ref{thm:b3}, we will show that 
\begin{itemize}[noitemsep, topsep=2pt]
    \item \(\p(\text{Non-inclusion}) \leq \exp\left\{-\Omega\left(TN_{\epsilon/2}/N\right)\right\} \) in Section~~\ref{sec:non_inclusion}, and
    \item \(\p(\text{Misidentification}) \leq \exp\left\{-\Omega\left(T\epsilon^2/N\right)\right\}\) in Section~\ref{sec:misidentify}.
\end{itemize}

If the main theorem holds, the following corollaries follow directly:

\begin{corollary}[Simple Regret]\label{cor:simple_regret}
Suppose that the number of $\epsilon$-best arms satisfies the approximation $\Ne =\bigO( N\epsilon^{1/\alpha})$.

Then, the simple regret of B3 is bounded by:

\[
\mathcal{O} \left(\max \left\{ \frac{1}{T^\alpha}, \sqrt{\frac{N}{T}} \right\} \right).
\]
\end{corollary}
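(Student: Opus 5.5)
The plan is to bound the simple regret through a tail integral and feed in Theorem~\ref{thm:b3}. Since rewards have bounded mean gaps (or, more generally, the gap \(\mu_1-\mu_{a_T}\) is a nonnegative random variable), we write
\[
\mathbb{E}(\mu_1-\mu_{a_T}) = \int_0^\infty \p(\mu_1-\mu_{a_T}>\epsilon)\,d\epsilon
\le \epsilon_0 + \int_{\epsilon_0}^\infty \p(\mu_1-\mu_{a_T}>\epsilon)\,d\epsilon
\]
for a threshold \(\epsilon_0\) to be chosen. For \(\epsilon\ge\epsilon_0\) the data-poor condition either holds, in which case we apply Theorem~\ref{thm:b3}, or fails, in which case all non-\(\epsilon\)-best arms fit in the candidate set. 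In the latter case we would argue separately, via the misidentification bound of Section~\ref{sec:misidentify} alone, that the same \(\exp\{-\Omega(T\epsilon^2/N)\}\) bound holds. Either way we obtain
\[
\p(\mu_1-\mu_{a_T}>\epsilon)\le \exp\left\{-\Omega\left(\max\left\{N_{\epsilon/2}/N,\ \epsilon^2/N\right\}T\right)\right\}.
\]

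We then bound the integral twice, using each term of the maximum separately, and take the better of the two. Using only the \(\epsilon^2/N\) term,
\[
\int_0^\infty e^{-c\epsilon^2 T/N}\,d\epsilon=\bigO(\sqrt{N/T}),
\]
which gives the \(\sqrt{N/T}\) branch. Using the \(N_{\epsilon/2}/N\) term requires the assumption \(N_\epsilon=\Theta(N\epsilon^{1/\alpha})\), read as a two-sided approximation. The stated \(\bigO\) alone only gives an upper bound, but a lower bound on \(N_{\epsilon/2}\) is what is needed, so we interpret the hypothesis as \(N_{\epsilon/2}\gtrsim N(\epsilon/2)^{1/\alpha}\). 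Then the tail is at most \(\exp\{-c'T\epsilon^{1/\alpha}\}\), and
\[
\int_0^\infty e^{-c'T\epsilon^{1/\alpha}}\,d\epsilon=\alpha\,\Gamma(\alpha)\,(c'T)^{-\alpha}=\bigO(T^{-\alpha})
\]
by the substitution \(u=T\epsilon^{1/\alpha}\).

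Finally, since the tail bound is an exponential of the maximum of the two rates, it is dominated by each individual exponential. The regret is therefore at most the minimum of the two integrals, and hence is \(\bigO(\max\{T^{-\alpha},\sqrt{N/T}\})\) as stated. The min is in fact stronger than the claimed max, so the claim follows a fortiori.

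The main obstacle is the case where the data-poor condition fails for some \(\epsilon\), since Theorem~\ref{thm:b3} is stated only under that condition. I would handle it by noting that when \(c_0>N-\Ne\), the non-inclusion event \(\mu_1-\mu_*(C)>\epsilon/2\) is impossible, because \(C\) must then contain an \(\epsilon\)-best arm. Strictly, this requires the \(\epsilon/2\) split to be adjusted to \(\epsilon\) versus \(\epsilon/2\) thresholds, which only changes constants. The error then reduces to the misidentification term, whose bound from Section~\ref{sec:misidentify} does not use the data-poor condition. The secondary subtlety, the one-sided nature of the \(\bigO\) hypothesis, is handled by the two-sided reading above.
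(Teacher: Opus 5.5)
\textbf{The gap is in the final combination step.} The decomposition behind Theorem~\ref{thm:b3} actually yields a \emph{sum}:
\[
\p(\mu_1-\mu_{a_T}>\epsilon)\le \p(\mu_1-\mu_*(C)>\epsilon/2)+\p(\mu_*(C)-\mu_{a_T}>\epsilon/2)\le e^{-cTN_{\epsilon/2}/N}+e^{-cT\epsilon^2/N}.
\]
The effective exponent is therefore governed by the \emph{smaller} of the two rates. This is also how the appendix restates the theorem (with a $\min$) before proving the corollary.

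A sum of two exponentials is not dominated by each one separately. So you cannot bound the regret by the better of the two integrals.

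Your own case analysis already shows the tension. When the data-poor condition fails, you only recover $e^{-\Omega(T\epsilon^2/N)}$, yet you then assert the $\max$ form ``either way''.

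The literal $\max$ form cannot be right. Suppose half the arms lie within $\epsilon/2$ of $\mu_1$, the rest sit at gap $2\epsilon$ with $\epsilon\ll T^{-1/2}$, and $T$ is a small fraction of $N$ so the data-poor condition holds. Then the $\max$ form predicts failure probability $e^{-\Omega(T)}$. But under unit-variance Gaussian noise, no arm is pulled often enough to resolve a gap of order $\epsilon$, so the failure probability stays of constant order.

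For the same reason, your claimed strengthening to $\bigO(\min\{T^{-\alpha},\sqrt{N/T}\})$ is false. In the data-poor regime $\sqrt{N/T}\gtrsim 1$, so it would assert regret $\bigO(T^{-\alpha})$ for every $\alpha$, regardless of the noise level.

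\textbf{The repair is exactly the paper's proof.} Integrate the two terms separately and \emph{add} them:
\[
\mathbb{E}(\mu_1-\mu_{a_T})\le\int_0^\infty e^{-c'T\epsilon^{1/\alpha}}\,d\epsilon+\int_0^\infty e^{-cT\epsilon^2/N}\,d\epsilon=\bigO(T^{-\alpha})+\bigO\bigl(\sqrt{N/T}\bigr).
\]
This sum is where the $\max$ in the corollary comes from, and your two Gamma-integral computations supply both pieces.

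With the sum form, your case split also becomes routine:
\begin{itemize}
\item The extended version of Theorem~\ref{thm:noninclusion} in the appendix covers every value of $c_0$; in particular, the non-inclusion term vanishes once $c_0+N_{\epsilon/2}>N$.
\item For $N-\Ne<c_0\le N-N_{\epsilon/2}$, the data-poor condition still holds at level $\epsilon/2$, so no re-splitting is needed.
\end{itemize}
The re-splitting into $\epsilon$ versus $\epsilon/2$ thresholds that you sketch would not work anyway. Knowing only $\mu_1-\mu_*(C)<\epsilon$ leaves an arbitrarily small margin for the misidentification event.

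Two of your remarks are correct refinements that the paper's proof glosses over. The hypothesis must indeed be read two-sidedly (the appendix states it with $\Theta$). And it is $N_{\epsilon/2}$, not $\Ne$, that enters the exponent.
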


\begin{corollary}[(\(\epsilon, \delta\))-Sample Complexity]\label{cor:sample_complexity}
    Let $o(N_{\epsilon/2})$ be the order of $N_{\epsilon/2}$ with respect to $\epsilon$.
    The minimum budget required to obtain a \(\p(\mu_1 - \mu_{a_T} \leq \epsilon) \geq 1-\delta\) guarantee is:
    \[
    T \ge  const\cdot\left(\max\left\{\frac{N}{N_{\epsilon/2}}, \frac{N}{\epsilon^2}\right\} \ln\left(\frac{1}{\delta}\right)\right).
    \]
\end{corollary}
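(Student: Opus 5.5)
The plan is to obtain the corollary by inverting the tail bound of Theorem~\ref{thm:b3}. I will work with the two component bounds from which the theorem is assembled, because that gives the sufficient budget in exactly the form stated, with the maximum of the two rates. Recall the inclusion
\[
\{\mu_1-\mu_{a_T}>\epsilon\}\subseteq\{\mu_1-\mu_*(C)>\epsilon/2\}\cup\{\mu_*(C)-\mu_{a_T}>\epsilon/2\}.
\]
By the union bound together with the two estimates announced after Theorem~\ref{thm:b3}, there are absolute constants $c_1,c_2>0$ such that
\[
\p(\mu_1-\mu_{a_T}>\epsilon)\le \exp\!\left(-c_1\frac{N_{\epsilon/2}}{N}T\right)+\exp\!\left(-c_2\frac{\epsilon^2}{N}T\right).
\]

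\textbf{Step 1: make each term at most $\delta/2$.} It suffices to have both
\[
T\ge \frac{N}{c_1N_{\epsilon/2}}\ln\frac{2}{\delta}
\qquad\text{and}\qquad
T\ge \frac{N}{c_2\epsilon^2}\ln\frac{2}{\delta}.
\]
Both hold as soon as
\[
T\ge \max\{c_1^{-1},c_2^{-1}\}\,\max\left\{\frac{N}{N_{\epsilon/2}},\frac{N}{\epsilon^2}\right\}\ln\frac{2}{\delta}.
\]
Under this condition $\p(\mu_1-\mu_{a_T}\le\epsilon)\ge1-\delta$.

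\textbf{Step 2: absorb $\ln 2$ into the constant.} For $\delta\le1/2$ we have $\ln(2/\delta)\le2\ln(1/\delta)$, so Step~1 yields the stated threshold with $const=2\max\{c_1^{-1},c_2^{-1}\}$. For $\delta>1/2$ I would either restrict to $\delta\le 1/2$, as is customary in $(\epsilon,\delta)$ statements, or enlarge the constant after noting that the requirement is monotone in $\delta$.

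\textbf{Step 3: check the standing hypothesis.} The estimates come from Theorem~\ref{thm:b3}, which assumes the data-poor condition for $\epsilon$. Outside that condition, $c_0>N-N_\epsilon$, so the candidate set must contain an $\epsilon$-best arm. In that case I expect the non-inclusion term to be controlled trivially, leaving only the misidentification bound. The displayed threshold remains sufficient, since it dominates $\frac{N}{\epsilon^2}\ln(1/\delta)$.

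\textbf{Main obstacle.} The only delicate point is Step~1. The theorem's single exponent contains a $\max$ of the two rates, and inverting it naively would only require $T$ of order $\min\{N/N_{\epsilon/2},\,N/\epsilon^2\}\ln(1/\delta)$. That is weaker than the stated $\max$, but it is justified only if the theorem's $\max$ form is taken at face value. I will therefore rely on the two separate component bounds. They are what the proof of Theorem~\ref{thm:b3} actually establishes, and with them the $\max$ appearing in the corollary is exactly what guarantees that both error sources are at most $\delta/2$.
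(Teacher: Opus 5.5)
Your proposal is correct and follows the paper's proof essentially line for line: the paper also reads Theorem~\ref{thm:b3} as the sum $\exp\{-\Omega(TN_{\epsilon/2}/N)\}+\exp\{-\Omega(T\epsilon^2/N)\}$ of the non-inclusion and misidentification bounds, makes each term at most $\delta/2$, and merges the two budget requirements through the maximum with $\ln 2$ absorbed into the constant; your remark that inverting the theorem's max-in-the-exponent form directly would only give a $\min$ is well taken. Two minor caveats: first, the \emph{enlarge the constant} fallback for $\delta>1/2$ does not work, since $\ln(1/\delta)\to 0$ as $\delta\to 1$ while the Step~1 requirement stays of order $\max\{N/N_{\epsilon/2},N/\epsilon^2\}\ln 2$, so simply restrict to $\delta\le 1/2$; second, Step~3 claims more than the paper supports, because the non-inclusion event concerns $\epsilon/2$-best arms (so $c_0>N-N_\epsilon$ does not make it trivial) and the B3 misidentification bound is only established under the data-poor condition, so just keep the data-poor hypothesis inherited from Theorem~\ref{thm:b3}.
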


The proofs of Corollary~\ref{cor:simple_regret} and Corollary~\ref{cor:sample_complexity} are provided in Appendix~\ref{pf:simple_regret} and Appendix~\ref{pf:sample_complexity}, respectively.

\begin{remark}
We can obtain upper bounds on the misidentification probability of an \( \epsilon \)-best arm for
US, BSH, and B3 by combining the results of Corollary~\ref{cor:non_inclusion} and Proposition~\ref{cor:b3_bai} as follows:
\begin{itemize}[noitemsep, topsep=2pt]
    \item US: \( \exp\left\{ -\Omega\left( \frac{T N_{\epsilon/2}}{N} \right) \right\} + N \cdot \exp\left\{ -\Omega\left( \frac{T \epsilon^2}{N} \right) \right\} \)
    \item BSH: \( \exp\left\{ -\Omega\left( \max(N_{\epsilon/2}, \epsilon^2) \cdot \frac{T}{(\ln T)^2 N} \right) \right\} \)
    \item B3: \( \exp\left\{ -\Omega\left( \max(N_{\epsilon/2}, \epsilon^2) \cdot \frac{T}{N} \right) \right\} \)
\end{itemize}
\end{remark}

\subsection{Non-Inclusion Probability of the \(\epsilon/2\)-Best Arm}\label{sec:non_inclusion}

The non-inclusion probability of the \(\epsilon/2\)-best arm arises from the \textit{data-poor condition}. The following theorem provides an upper bound for this probability in terms of \(c_0 = |C|\):

\begin{theorem}\label{thm:noninclusion}
Let \(\Ne\) be the number of \(\epsilon\)-good arms, \(C^\pi\) be the candidate set of an algorithm $\pi$, and \(c_0^\pi\) be its cardinality. Under the data-poor condition for $\epsilon$, the non-inclusion probability of $\epsilon$-best arm is bounded as follow: 
$$
\p(\mu_1 - \mu_*(C^{\pi}) > \epsilon)
\leq \exp\left\{-\Omega\left(c_0^\pi \Ne/N\right)\right\}
$$
\end{theorem}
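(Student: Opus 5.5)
The plan is to exploit the fact, noted after the definition of the candidate set, that $c_0^\pi=|C^\pi|$ is deterministic given $\pi$ and $T$. Only the identities of the candidates are random, through the uniformly random order in which arms are introduced. For B3 (and for any algorithm that selects unexamined arms uniformly, as Algorithm~\ref{alg:b3} does), the candidate set $C^\pi$ is therefore a uniformly random subset of $\{1,\dots,N\}$ of size $c_0^\pi$. This holds because the rule deciding which pulled arms become candidates depends only on the positions in the pull sequence, not on the labels. The event $\mu_1-\mu_*(C^\pi)>\epsilon$ is exactly the event that $C^\pi$ contains none of the $\Ne$ $\epsilon$-best arms. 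I will first make this symmetry argument precise, conditioning on the permutation of arm labels and using exchangeability.

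Given the symmetry, the probability is a hypergeometric tail:
\[
\p(\mu_1-\mu_*(C^\pi)>\epsilon)=\binom{N-\Ne}{c_0^\pi}\Big/\binom{N}{c_0^\pi}=\prod_{k=0}^{c_0^\pi-1}\frac{N-\Ne-k}{N-k}.
\]
The data-poor condition $c_0^\pi\le N-\Ne$ guarantees that this expression is well defined and possibly nonzero. Outside that condition the probability is zero and the bound is trivial. Each factor equals $1-\Ne/(N-k)\le 1-\Ne/N$. Hence the product is at most $(1-\Ne/N)^{c_0^\pi}\le\exp(-c_0^\pi\Ne/N)$, which is the claimed $\exp\{-\Omega(c_0^\pi\Ne/N)\}$ with constant $1$.

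The main obstacle I expect is the first step: justifying that $C^\pi$ is uniform over subsets of size $c_0^\pi$ for a general algorithm $\pi$. For B3 this is clear, since new arms are drawn uniformly without replacement and the candidate positions are fixed by $T$. For a general $\pi$ I would state the assumption explicitly: arms are introduced in an order independent of their rewards, as in US, BSH and B3. Alternatively, I would note that for a worst-case labeling, averaging over a random relabeling yields the same bound. The subsequent application to Theorem~\ref{thm:b3} then only requires showing $c_0^{\mathrm{B3}}=\Omega(T)$ and replacing $\epsilon$ by $\epsilon/2$.
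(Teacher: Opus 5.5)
Your proposal is correct and is essentially the paper's own argument (Case~3 of its extended proof). Both write the non-inclusion probability as the hypergeometric ratio $\binom{N-\Ne}{c_0^\pi}/\binom{N}{c_0^\pi}=\prod_{k=0}^{c_0^\pi-1}\bigl(1-\Ne/(N-k)\bigr)$ and then use $N-k\le N$ together with $1-x\le e^{-x}$ to reach $\exp(-c_0^\pi\Ne/N)$. The one thing you add is the exchangeability argument for why $C^\pi$ is a uniform $c_0^\pi$-subset: a uniform random introduction order, with candidate status depending only on positions. The paper takes this for granted when it writes down the hypergeometric ratio, and for a general $\pi$ it does need to be stated as an assumption, as you propose.
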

The proof is provided in Appendix~\ref{pf:noninclusion}.

We analyze the candidate set size \( c_0 \) under the data-poor condition for various algorithms:

\begin{proposition}\label{prop:c}
Under the data-poor conditions, the sizes of the candidate sets (\( c_0 \)) for different algorithms are:
\begin{itemize}[noitemsep, topsep=2pt]
    \item US: $c_0 =\Theta(T)$
    \item BSH: $c_0 =\Theta( T(\log_2T)^{-2})$
    \item B3: $c_0 =\Theta(T)$
\end{itemize}
\end{proposition}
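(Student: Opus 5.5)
We prove the three claims separately. In each case we identify which arms can be returned under some reward distribution, and count the pulls spent on them. Throughout we use that under the data-poor condition $c_0<N$, so the budget $T$ is exhausted before the pool of unexamined arms runs out. In particular, $c_0$ is governed by $T$ alone and not by $N$.

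\textbf{Uniform Sampling.} Under US with $T<N$ (which the data-poor condition forces), every sampled arm is pulled at most $\lceil T/N\rceil$ times, i.e.\ once. Any sampled arm can be the empirical maximum under some distribution, so it belongs to $C$. Hence $c_0=\min\{T,N\}=T$, and when $T\ge N$ with $c_0<N$ the count is still at least $N/2\cdot$const, which is $\Theta(T)$ in the relevant regime.

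\textbf{BSH.} We use the bracketing structure. The $k$-th bracket runs doubling-SH on a subset of size $n_k\approx 2^k$. Within a bracket, SH on $n$ arms with budget $B$ gives every arm a chance only if $B\gtrsim n\log_2 n$. Across the doubling restarts, only the last completed run in each bracket produces a returnable arm. Summing over $\Theta(\log_2 T)$ brackets and $\Theta(\log_2 T)$ doubling stages, the largest subset that is fully processed satisfies $n\log_2 n\cdot\log_2 T\asymp T$. This gives $c_0=\Theta(T/(\log_2 T)^2)$. Here we only cite the budget schedule of \citet{bsh} and do the bookkeeping, checking both the upper and the lower bound.

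\textbf{B3.} This is the main part. For the lower bound we note the following.

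\begin{itemize}
\item An arm is in $C$ as soon as it is placed in Box$(0,0)$ and has a path to the top. In fact every arm entering a box could, under a suitable distribution, win every comparison and so lie in Box$(L,0)$ at the end.
\item The real subtlety is arms like arm $4$ in the example, which entered late and sit below level $L$ when time runs out. These are excluded.
\item So we count the arms that took part in some \textsc{ARRANGE\_BOX} feeding into the current top level. Equivalently, $c_0$ is the number of arms at level $0$ whose comparisons have already fed upward.
\end{itemize}

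Next we compute the cost. Let $n$ arms be introduced. By the halving computation of Section~\ref{sec:intuition}, about $n/2^l$ arms reach level $l$, and each lift at level $l$ costs $\lceil r_0^l\rceil$ pulls. The total cost is therefore $\sum_l (n/2^l) r_0^l$. Since $r_0<2$ (from $r_0+r_0^{1.5}=4$, $r_0\approx1.7$), this sum is a geometric series, giving total cost $\Theta(n)$. Thus $T=\Theta(n)$, so $n=\Theta(T)$.

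It remains to show that a constant fraction of the $n$ introduced arms are candidates. We argue via the priority of the top-down sweep. Higher boxes are processed first, so at any time the number of arms "stuck" in unresolved low-level boxes is bounded. There are at most $3$ arms per box, $O(L)$ deferment indices per level, and $L=O(\log T)$ levels, giving $O(\log^2 T)=o(T)$ arms in total. Hence $c_0\ge n-o(T)=\Omega(T)$. The upper bound $c_0\le T$ is trivial, since a candidate must be pulled at least once.

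\textbf{Main obstacle.} The delicate step is making rigorous that only $o(T)$ pulled arms fail to be candidates. This requires showing that, because deferred arms are eventually resolved, the number of nonempty boxes $(l,j)$ stays polylogarithmic. We would prove $J_l=O(\log T)$ by noting that Box$(l,j)$ fills only $1/3^j$ as often as Box$(l,0)$.
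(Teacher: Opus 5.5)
\textbf{Gap in the B3 lower bound.} Your US and BSH arguments follow the paper's. Your relation $n\log_2 n\cdot\log_2 T\asymp T$ is the paper's condition $(b-1)2^{b-1}+b^2 2^b\le T$. Your geometric-series count showing that $n=\Theta(T)$ arms get introduced under B3 is also sound.

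The gap is the step claiming that only $O(\log^2 T)$ pulled arms fail to be candidates. That bound counts arms physically sitting in boxes. Candidacy, however, is a property of lineage, as your own third bullet says: an arm is in $C$ only if the comparison it took part in is followed by a chain of comparisons ending in a lift into Box$(L,0)$. Arms discarded in comparisons whose descendant positions have not yet reached Box$(L,0)$ sit in $D$, not in any box, and are still non-candidates. For instance, after arms $1$--$9$ have produced the first level-$2$ position, the arm discarded when $10,11,12$ are compared in Box$(0,0)$ cannot be returned. A pending position at level $l$ can stand for a lineage of order $2^l$ such arms. So bounding $J_l$, and hence the number of occupied boxes, does not control $n-c_0$.

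In fact $n-c_0$ is not $o(T)$. When level $L$ is first reached, Box$(L,0)$ holds a single position whose lineage uses only arms introduced before that moment. Every arm introduced afterwards is a non-candidate until the second lift into level $L$. Take the idealized count $m_0=n$, $m_{l+1}=\sum_{j\ge 1}\lfloor m_l/3^j\rfloor$, in which every full box is arranged at once; the real sweep only shifts thresholds by $O(\log T)$ arms. In this count, level $4$ is first reached at $n=45$ and receives its second position only at $n=72$. The ratio of the two thresholds stays near $1.6$ at higher levels. Hence for suitable $T$ roughly a third of the introduced arms are non-candidates, and $c_0\ge n-o(T)$ is false.

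\textbf{What is needed instead.} The true statement, and all that $\Theta(T)$ requires, is $c_0\ge\kappa n$ for some constant $\kappa<1$. To get it you must tie $c_0$ to the height $L$ of the structure. That means lower-bounding the lineage feeding Box$(L,0)$ by a constant times $2^L$, and showing $L\ge\log_2 n-O(1)$. This is the paper's route. It runs the halving recursion $n_{l+1}=\sum_i\lfloor n_l/3^i\rfloor$ on the candidate set itself, with $n_0=c_0$ and $1\le n_L\le 3$, to obtain $n_0/12\le 2^L\le (n_0+1)/2$; in particular $c_0\ge 2^{L+1}-1$. It then links the number of levels to the budget through the same geometric sum $\sum_l n_l r_0^l$ that you used, so that $L\ge \log_2 T-O(1)$. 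Your proposal has no step connecting $c_0$ to $L$, and the box-occupancy bound cannot substitute for it.
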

We deferred the proof of Proposition~\ref{prop:c} in Appendix~\ref{pf:c}.

Using Proposition \ref{prop:c} and Theorem \ref{thm:noninclusion}, we can derive the order of the non-inclusion probabilities of \(\epsilon/2\)-best arm for different algorithms:
{
\begin{corollary}\label{cor:non_inclusion}
Under the data-poor condition for $\epsilon/2$,
the non-inclusion probabilities of the $\epsilon/2$-best arm scale as follows:
\begin{itemize}[noitemsep, topsep=2pt]
    \item US:  \(\le \exp\{-\Omega(TN_{\epsilon/2}/N)\}\),
    \item BSH: \(\le \exp\{-\Omega(TN_{\epsilon/2}/((\ln T)^2 N))\}\),
    \item B3: \(\le \exp\{-\Omega(TN_{\epsilon/2}/N)\}\).
\end{itemize}
\end{corollary}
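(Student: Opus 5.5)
This corollary follows by substituting the candidate-set sizes of Proposition~\ref{prop:c} into the bound of Theorem~\ref{thm:noninclusion}, applied with accuracy level $\epsilon/2$ in place of $\epsilon$. Theorem~\ref{thm:noninclusion} is stated for a generic accuracy parameter. Under the data-poor condition for $\epsilon/2$, i.e.\ $c_0^\pi \le N - N_{\epsilon/2}$, it gives for each algorithm $\pi$
\[
\p\left(\mu_1 - \mu_*(C^\pi) > \epsilon/2\right) \le \exp\left\{-\Omega\left(c_0^\pi N_{\epsilon/2}/N\right)\right\}.
$$
Here the number of $\epsilon$-good arms is replaced by $N_{\epsilon/2}$, the number of $\epsilon/2$-best arms.

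\textbf{Step 1: check the hypotheses.} The data-poor condition for $\epsilon/2$ forces $c_0 < N$. So we are also in the plain data-poor condition ($\epsilon=0$), which is exactly the regime in which Proposition~\ref{prop:c} is stated. Both results can therefore be invoked simultaneously.

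\textbf{Step 2: substitute the sizes from Proposition~\ref{prop:c}.}
\begin{itemize}[noitemsep, topsep=2pt]
\item For US and B3, $c_0 = \Theta(T)$, so there is a constant $c>0$ with $c_0 \ge cT$. The exponent $\Omega(c_0 N_{\epsilon/2}/N)$ is then $\Omega(T N_{\epsilon/2}/N)$.
\item For BSH, $c_0 = \Theta(T(\log_2 T)^{-2})$. Using $\log_2 T = \ln T/\ln 2$, the constant $(\ln 2)^2$ is absorbed into $\Omega$, giving $\Omega(T N_{\epsilon/2}/((\ln T)^2 N))$.
\end{itemize}
Since $x \mapsto \exp(-x)$ is decreasing, replacing $c_0$ by its lower bound in the exponent only weakens the bound, so each stated inequality holds.

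\textbf{Main care point.} The only step needing attention is that the substitution is monotone in the right direction. We use only the \emph{lower} bound $c_0 = \Omega(\cdot)$ from Proposition~\ref{prop:c}, together with the fact that the implicit constant in Theorem~\ref{thm:noninclusion} does not depend on $T$, $N$, or $\epsilon$. With that checked, the three bounds follow immediately.
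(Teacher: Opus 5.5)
Your proposal is correct and follows the paper's route exactly: the paper also obtains this corollary by substituting the candidate-set sizes from Proposition~\ref{prop:c} into the bound of Theorem~\ref{thm:noninclusion}, applied at accuracy $\epsilon/2$. Your extra checks are sound and spell out what the paper leaves implicit, namely that the data-poor condition for $\epsilon/2$ implies $c_0<N$, and that only the lower bound on $c_0$ is needed.
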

This follows by combining the candidate set sizes in Proposition~\ref{prop:c}
with the general non-inclusion bound in Theorem~\ref{thm:noninclusion}.
}
\subsection{Misidentification Probability of the \(\epsilon/2\)-Best Arm Within Set \(C\)}\label{sec:misidentify}
\subsubsection{Worst-case upper bound of SH}
Identifying the best arm within the Candidate Set \(C\) is equivalent to solving the BAI problem in a non-data-poor condition, where \(C\) represents the entire set of arms. To analyze the \(\epsilon/2\)-best arm misidentification probability for B3, we first revisit SH, as it provides critical insights into the upper bounds for misidentification probabilities within the set \(C\) of B3.

SH proceeds by iteratively allocating sampling budgets across elimination levels.
At each level, all surviving arms are sampled equally, and approximately half of them are discarded
based on their empirical means.
The parameter \(r_0\) controls the growth rate of the per-level budget \(T_l\),
which in turn determines how much sampling is concentrated at early versus later levels.
In particular, larger values of \(r_0\) spread the budget more evenly across levels,
reducing the number of samples allocated at the base level.

The following theorem provides a bound on the misidentification probability for SH with sufficient budget $T$:
\begin{theorem}\label{thm:sh}
Consider the SH algorithm with total budget \(T\),
where the per-level budget is allocated as \(T_l = \lceil r_0^l \rceil\) for some \(r_0 \in (1,2]\).
Then the probability of misidentifying an $\epsilon$-best arm satisfies
\[
\p(\mu_1 - \mu_{a_T} > \epsilon) \leq 
\begin{cases}
\exp\left\{-\Omega\left(\frac{T\epsilon^2}{N}\right)\right\}, & r_0 \in (1,2), \\
\exp\left\{-\Omega\left(\frac{T\epsilon^2}{N\log_2 N}\right)\right\}, & r_0 = 2,
\end{cases}
\]
reflecting a qualitative change in performance when the budget growth rate reaches the critical value \(r_0=2\).
\end{theorem}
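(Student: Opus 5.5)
The plan is the standard SH analysis in the style of Karnin et al., adapted to the geometric budget schedule \(T_l=\lceil r_0^l\rceil\) and to \(\epsilon\)-best identification. Let \(S_l\) be the surviving arms at level \(l\), with \(|S_l|\approx N/2^l\) and \(L=\lceil\log_2 N\rceil\) levels. Each arm in \(S_l\) receives \(T_l\) additional pulls, so its cumulative sample count is \(n_l=\sum_{k\le l}\lceil r_0^k\rceil=\Theta(r_0^l)\). The total budget is \(T=\sum_l |S_l| T_l\approx N\sum_l (r_0/2)^l\). When \(r_0<2\) this is a convergent geometric series, so \(T=\Theta(N)\) times a per-arm scale factor; in other words, we normalize \(T_l = \lceil \kappa r_0^l\rceil\) with \(\kappa=\Theta(T/N)\). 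When \(r_0=2\) every level costs the same, so \(\kappa=\Theta(T/(N\log_2 N))\). The whole dichotomy should follow from this accounting.

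The key step is a per-level bound on the event \(E_l\) that the best remaining mean drops by more than \(\epsilon_l\) in passing from level \(l\) to \(l+1\), where \(\sum_l\epsilon_l\le\epsilon\). For the best surviving arm to be eliminated while no \(\epsilon_l\)-good arm survives, at least half of \(S_l\) must have empirical mean above it. There are two ways this can happen.

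\begin{itemize}
\item \emph{Few \(\epsilon_l\)-good arms.} Then at least about \(|S_l|/3\) arms that are \(\epsilon_l\)-worse must beat it. By sub-Gaussian concentration with \(\kappa n_l\) samples, each such arm beats it with probability at most \(\exp(-c\kappa r_0^l\epsilon_l^2)\). The Markov-type argument of Karnin et al. then gives \(\p(E_l)\le 3\exp(-c\kappa r_0^l\epsilon_l^2)\).
\item \emph{Many \(\epsilon_l\)-good arms.} Then one of them survives unless all of them are beaten, which is handled the same way.
\end{itemize}

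I would choose \(\epsilon_l = \epsilon(1-\beta)\beta^{L-l}\)-type weights that increase toward the top, for example \(\epsilon_l\propto \epsilon\,(r_0^{-1/2}\gamma)^{l}\) suitably normalized with \(1<\gamma<r_0^{1/2}\). This makes \(r_0^l\epsilon_l^2\gtrsim\epsilon^2\,\gamma^{2l}\cdot\text{const}\), so the exponent is at least \(c\kappa\epsilon^2\) uniformly in \(l\) and \(\sum_l\epsilon_l\le\epsilon\).

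The union bound over levels then yields \(\sum_l 3\exp(-c\kappa\epsilon^2\gamma^{2l})\). This sum is dominated by its first term up to a constant, because the exponents grow geometrically; this is what avoids a \(\log N\) factor from the union bound. Substituting \(\kappa\):

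\begin{itemize}
\item for \(r_0\in(1,2)\), \(\kappa=\Theta(T/N)\) gives \(\exp\{-\Omega(T\epsilon^2/N)\}\);
\item for \(r_0=2\), the cost per level is constant, so \(\kappa=\Theta(T/(N\log_2N))\) gives the second case.
\end{itemize}

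If \(\epsilon^2\kappa\) is small the bound is trivial, because the \(\Omega\) absorbs constants.

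I expect the main obstacle to be choosing the split \(\epsilon_l\) so that two requirements hold at once: the geometric growth of \(n_l\) must compensate for shrinking \(\epsilon_l\) at the lower levels (or growing \(\epsilon_l\) at the upper levels), and the resulting series must still sum without a \(\log N\) loss. With \(r_0<2\) there is slack \(r_0^{l}\) against a halving population, but the allocation must be arranged so that early levels, which have many arms but few samples each, need only a small \(\epsilon_l\) drop. I would first try \(\epsilon_l=\epsilon(1-r_0^{-1/4})r_0^{-(L-l)/4}\) measured from the top, check that \(r_0^l\epsilon_l^2\) is minimized at \(l=0\) at a level \(\Omega(\epsilon^2)\) after the \(\kappa\) normalization, and adjust the exponent if this fails.
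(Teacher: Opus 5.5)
Your skeleton is the paper's proof. Both arguments use the same four pieces:
\begin{itemize}
\item the telescoping inclusion of $\{\mu_1-\mu_{a_T}>\epsilon\}$ in $\bigcup_l\{\mu_*(S_l)-\mu_*(S_{l+1})>\epsilon_l\}$ with $\sum_l\epsilon_l\le\epsilon$;
\item the Karnin-type per-level bound $3\exp(-c\,\kappa r_0^l\epsilon_l^2)$;
\item a split of $\epsilon$ chosen so that $r_0^l\epsilon_l^2$ is nondecreasing in $l$;
\item $\kappa=\Theta(T/N)$ for $r_0<2$ versus $\Theta(T/(N\log_2 N))$ for $r_0=2$.
\end{itemize}
The only real difference is the split. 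The paper takes $p_l=(1-r_0^{-1/2})^2(l+1)r_0^{-l/2}$, so the exponent grows like $(l+1)^2$. It then bounds the sum over levels by an integral comparison plus $\int_z^\infty e^{-ax^2}\,dx\le a^{-1/2}e^{-az^2}$. Your choice $\epsilon_l\propto(\gamma/\sqrt{r_0})^l$ with $1<\gamma<\sqrt{r_0}$ makes the exponent grow like $\gamma^{2l}$. Then the geometric comparison $\sum_l e^{-a\gamma^{2l}}\le e^{-a}/(1-\gamma^{-2a})$ suffices. Either works, and both leave a prefactor that matters only when $\kappa\epsilon^2$ is small, absorbed equally loosely.

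Your last paragraph, however, gets the direction backwards, and the ``first try'' there would fail.
\begin{itemize}
\item \emph{Which level needs the slack.} Level $0$ has the fewest samples per arm, so it needs the largest share of $\epsilon$, not a small drop.
\item \emph{Why the top-anchored choice fails.} $\epsilon_l=\epsilon(1-r_0^{-1/4})r_0^{-(L-l)/4}$ gives $r_0^0\epsilon_0^2=\Theta(\epsilon^2 r_0^{-L/2})=\Theta(\epsilon^2N^{-\log_2(r_0)/2})$. That is a polynomial-in-$N$ loss in the exponent, not $\Omega(\epsilon^2)$.
\item \emph{Why adjusting the exponent won't help.} Any weights $\beta^{L-l}$ with fixed $\beta<1$ lose a factor $\beta^{2L}$ at level $0$.
\item \emph{Same confusion earlier.} ``Weights that increase toward the top'' contradicts your own formula $\epsilon_l\propto(\gamma/\sqrt{r_0})^l$. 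That formula decreases in $l$, is anchored at the bottom, and is the one to keep.
\end{itemize}

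Two smaller fixes.
\begin{itemize}
\item \emph{Use fresh samples.} Base each round's comparison on the $\kappa\lceil r_0^l\rceil$ new samples of that round, as the paper's per-level bound does, not on cumulative averages. Survivors' cumulative means are selection-biased, so your per-arm concentration plus Markov step is not directly valid for them. Nothing is lost, since $\lceil r_0^l\rceil=\Theta(n_l)$.
\item \emph{Drop the few/many case split.} On $E_l$ the best arm of $S_l$ is eliminated and every promoted arm is $\epsilon_l$-bad. Hence at least $|S_l|/2$ $\epsilon_l$-bad arms beat it, which is exactly the event the $3\exp(\cdot)$ bound controls.
\end{itemize}
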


Since the misidentification probability depends exponentially on the base pull count \(t_0\),
this abrupt change in the scaling of \(t_0\) directly translates into the observed degradation
in the error exponent at \(r_0 = 2\).

The extended version of Theorem~\ref{thm:sh} along with its proof is provided in Appendix~\ref{pf:sh}.

Note that this result provides a tighter upper bound in the worst-case scenario compared to the bound described in \citet{bsh}. Specifically, when $r_0 = 2$, the existing bound is given by:
\[
\mathbb{P}(\mu_1 - \mu_{a_T} > \epsilon) \leq \log_2 N \cdot \exp\left( -\Omega\left\{ \frac{T\epsilon^2}{N \log_2 N} \right\} \right).
\]
Due to the $\log_2 N$ pre-factor, this bound exceeds $1$ as $N \to \infty$ even if $T = \Theta(N \log_2 N)$.
In contrast, Theorem~\ref{thm:sh} guarantees that the bound remains finite when \( T \) scales linearly with \( N \) or \( N \log_2 N \). Moreover, the upper bound in Theorem~\ref{thm:sh} with $r_0 \in (1,2)$ matches the order of exponential growth of the lower bound for any BAI algorithm under fixed budget constraints, as established in~\citet{lb_bai} with known reward distribution.

The budget allocation in SH can further minimize the upper bound, as shown below:

\begin{proposition}\label{prop:sh_optimal}
The upper bound of SH described in Theorem~\ref{thm:sh} with budget allocation $T_l=r_0^l$ in Algorithm~\ref{alg:sequential_halving} is minimized when $r_0$ solves $\eqn$ ($r_0\approx 1.728$).
\end{proposition}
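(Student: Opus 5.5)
The plan is to make the constant hidden in the $\Omega(\cdot)$ of Theorem~\ref{thm:sh} (case $r_0\in(1,2)$) explicit as a function of $r_0$, and then maximize that constant over $r_0\in(1,2)$ by elementary calculus. Minimizing the upper bound $\exp\{-c(r_0)\,T\epsilon^2/N\}$ is the same as maximizing $c(r_0)$. The dependence on $T,N,\epsilon$ factors out, so the whole question reduces to a one-variable optimization.

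The first step is to read $c(r_0)$ off the extended version of Theorem~\ref{thm:sh} in Appendix~\ref{pf:sh}. In SH with $T_l=r_0^l$, level $l$ holds about $N/2^l$ arms, each pulled $t_0 r_0^l$ times. The budget constraint $\sum_l (N/2^l)\,t_0 r_0^l\le T$ gives
\[
t_0 \approx \frac{T}{N}\Bigl(1-\frac{r_0}{2}\Bigr),
\]
which is where the factor $(2-r_0)$ and the breakdown at $r_0=2$ come from. The tolerance $\epsilon$ is split across levels as $\epsilon=\sum_l \epsilon_l$, with $\epsilon_l$ decaying geometrically at a rate tied to $r_0$ (e.g.\ $\epsilon_l\propto r_0^{-l/2}$ or $r_0^{-l/4}$). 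This choice makes the per-level exponents $t_0 r_0^l\epsilon_l^2$ comparable, so that the sum over levels is a convergent geometric series. That series contributes a second factor, a power of $(\sqrt{r_0}-1)$ or of $(1-r_0^{-1/2})$ possibly times a power of $r_0$, and it vanishes as $r_0\to1$. The result is an explicit $c(r_0)=(2-r_0)\,h(r_0)$ with $h(1)=0$. Consequently $c$ vanishes at both endpoints $r_0=1$ and $r_0=2$ and is positive in between, so an interior maximizer exists.

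The second step is to solve $\frac{d}{dr_0}\log c(r_0)=0$, i.e.\ $h'(r_0)/h(r_0)=1/(2-r_0)$. I would substitute $x=\sqrt{r_0}$ and clear denominators; the goal is to reduce this to the polynomial $x^3+x^2-4=0$, i.e.\ $r_0+r_0^{1.5}=4$. Uniqueness follows from strict monotonicity of $x^3+x^2$ on $(1,\sqrt2)$: it is $2$ at $x=1$ and $2+2\sqrt2>4$ at $x=\sqrt2$, so exactly one root lies in the interval. It also follows from concavity of $\log c$. A numerical root then gives $r_0\approx1.728$.

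The main obstacle is the first step: pinning down exactly the form of $c(r_0)$ produced by the proof in Appendix~\ref{pf:sh}. That means getting right both the choice of the $\epsilon_l$ schedule and the exponent of $(\sqrt{r_0}-1)$ and of $r_0$ in $h$, because the precise stationary equation depends on them. For instance, the naive choice $c=(1-r_0/2)(1-r_0^{-1/2})^2$ leads to $r_0^{1.5}=2$ rather than $r_0+r_0^{1.5}=4$. So I would carefully re-derive the per-level failure probability bound in that appendix, including how the surviving $\epsilon/2$-good arms at each level enter, and tune the geometric split of $\epsilon$ to optimize the bound jointly with $r_0$. Only after that would I do the routine differentiation.
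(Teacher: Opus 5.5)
There is a genuine gap. Your calculus step is fine once the constant is known: the log-derivative computation works, and uniqueness follows from monotonicity of $x^3+x^2$ on $(1,\sqrt2)$ or from concavity of $\log c$. But the first step is the entire content of the proposition, and you never carry it out. You leave $h$ undetermined, and none of the candidates you list yields the claimed equation. You end up setting $x^3+x^2-4=0$ as a target instead of deriving it.

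The missing ingredient is the specific, non-geometric tolerance split used in the proof of Theorem~\ref{thm:sh}, namely $p_l=(1-r_0^{-1/2})^2(l+1)r_0^{-l/2}$. The factor $(l+1)$ makes the per-level exponents grow like $(l+1)^2$. That lets the union bound over levels sum without an $(L+1)\approx\log_2 N$ prefactor. The price is the normalization $\sum_{l\ge0}(l+1)x^l=(1-x)^{-2}$, which produces $(1-r_0^{-1/2})^2$; this is squared again in the exponent, giving $p_l^2r_0^l=(1-r_0^{-1/2})^4(l+1)^2$. Combined with $t_0\propto 1-r_0/2$ from the budget, the bound is decreasing in $(1-r_0/2)(1-r_0^{-1/2})^4$, and
\[
\frac{d}{dr}\Bigl[4\ln\bigl(1-r^{-1/2}\bigr)+\ln\bigl(1-r/2\bigr)\Bigr]=\frac{2}{r^{1.5}-r}-\frac{1}{2-r}=0
\iff r+r^{1.5}=4 .
\]
The paper phrases this as a Lagrangian in $(r_0,t_0)$ under $t_0/(1-r_0/2)\le\mathrm{const}$, which reduces to exactly this one-variable problem.

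Your plan to ``tune the geometric split jointly with $r_0$'' would also lead away from the statement. The value $r_0\approx1.728$ is tied to this particular split, not to any joint optimum:
\begin{itemize}
\item With $p_l\propto r_0^{-l/2}$, the exponents are constant in $l$, so the level sum does not converge but carries an $(L+1)$ factor. The constant is $(1-r_0^{-1/2})^2$ and you get $r_0^{1.5}=2$, as you observed.
\item With $p_l\propto r_0^{-l/4}$, the constant is $(1-r_0^{-1/4})^2$. The stationarity condition becomes $2+r_0=2r_0^{5/4}$, with root near $1.6$.
\end{itemize}
Different splits trade the exponent constant against the prefactor. So ``the upper bound of Theorem~\ref{thm:sh}'' only has a well-defined minimizer once the split is fixed as in that proof, and the proposition has to be proved for that fixed split.
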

For the proof of Proposition~\ref{prop:sh_optimal}, see Appendix~\ref{pf:sh_optimal}. 

As explained in Section~\ref{sec:intuition}, B3 extends SH, and we will demonstrate in Proposition~\ref{cor:b3_bai} that B3 achieves the same upper bound for the misidentification probability within the candidate set. Consequently, B3 allocates the budget for each level \( l \) as \( r_0^l \), where \( r_0 \approx 1.728 \).

\subsubsection{Upper bounds for Anytime BAI}
Building on the insights from SH, we extend the analysis to BSH and B3. The misidentification probability of \(\epsilon/2\)-best arm within set \(C\) is bounded as follow:

\begin{proposition}\label{cor:b3_bai}
Under the data-poor condition, the probability of misidentifying the \(\epsilon/2\)-best arm within the candidate set $C$,
$$
\p(\mu_*(C) - \mu_{a_T} > \epsilon/2)
$$
is bounded by:
\begin{itemize}[noitemsep, topsep=2pt]
    \item US:  $\leq N\cdot\exp\left\{-\Omega\left(\frac{T\epsilon^2}{N}\right)\right\}$
    \item BSH: $\leq\exp\left\{-\Omega\left(\frac{T\epsilon^2}{N(\ln T)^2}\right)\right\}$
    \item B3: $\leq \exp\left\{-\Omega\left(\frac{T \epsilon^2}{N}\right)\right\}.$
\end{itemize}
\end{proposition}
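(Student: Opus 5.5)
I would treat the three algorithms separately. In each case I would reduce the problem to the SH-type analysis of Theorem~\ref{thm:sh}, run on the candidate set $C$ with its $c_0$ arms from Proposition~\ref{prop:c}, or, for US, to a direct union bound.

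\textbf{US.} Each of the $c_0=\Theta(T)$ sampled arms receives $\Theta(T/N)$ pulls in the data-poor regime. More precisely, the per-arm count is $\Theta(\max\{1,T/N\})$, and the exponent $T\epsilon^2/N$ is meaningful only when it is large. Let $i^*$ be the best arm in $C$. The event $\mu_*(C)-\mu_{a_T}>\epsilon/2$ requires some arm $i\in C$ with $\mu_{i^*}-\mu_i>\epsilon/2$ to have $\hat\mu_i\ge\hat\mu_{i^*}$. By sub-Gaussianity, the difference of two empirical means, each based on $n$ samples, is $\sqrt{2/n}$-sub-Gaussian. Hence each such pair fails with probability at most $\exp(-n\epsilon^2/16)$. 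A union bound over at most $N$ arms then gives $N\exp\{-\Omega(T\epsilon^2/N)\}$.

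\textbf{BSH.} I would invoke the structure of bracketing. The final recommendation comes from a doubling run of SH (with $r_0=2$) on some bracket. Because of the doubling and bracketing overhead, the effective budget devoted to the bracket containing $C$ is $\Theta(T/(\ln T)^2)$, consistent with $c_0=\Theta(T(\log_2 T)^{-2})$. Applying Theorem~\ref{thm:sh} on that bracket, with effective budget $T'$ and arm count $n'$, gives an exponent of order $T'\epsilon^2/n'$, and I would simply reduce this to $T\epsilon^2/(N(\ln T)^2)$. The comparison across brackets at the end adds at most a polynomial factor, which is absorbed into the exponent. I would treat this part loosely, since it only serves as a comparison.

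\textbf{B3.} This is the main case. First, I would count the budget at each level. With $c_0$ candidates, the number of arms that reach level $l$ is about $c_0 2^{-l}$, by the halving argument of Section~\ref{sec:intuition}. Each lift at level $l$ costs $\lceil r_0^l\rceil$ pulls, and the cumulative per-arm sample count at level $l$ is $\sum_{k<l} r_0^k=\Theta(r_0^l)$. Summing $c_0 2^{-l}r_0^l$ over $l$, with $r_0<2$, gives $T=\Theta(c_0)$. Hence, in the data-poor condition, $c_0=\Theta(T)$ and $T/N$ is $\Theta(1)$ or smaller; the bound carries information only through the per-level scaling.

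Next, I would mimic the proof of Theorem~\ref{thm:sh}, which I take in its extended form to give a per-level loss bound. At level $l$, the best surviving arm is discarded or fails to be lifted only if it loses a ternary comparison to arms that are $\epsilon_l$-worse, where $\sum_l\epsilon_l\le\epsilon/2$ is a geometric allocation. Each comparison uses empirical means based on $\Theta(r_0^l)$ samples, so it fails with probability $\exp(-\Omega(r_0^l\epsilon_l^2))$.

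The main obstacle is the deferment mechanism. A shifted arm is compared again with no new samples, so its error events are correlated across $j$. I would handle this by union bounding over the at most $O(1)$-expected number of comparisons per arm per level: a deferred arm is compared roughly geometrically many times, with probability $3^{-j}$ of reaching deferment depth $j$. I would also note that the best arm's own empirical mean is fixed during deferment, so a single deviation event controls all of its comparisons at that level. With that in hand, summing over levels as in Theorem~\ref{thm:sh} with $r_0\in(1,2)$ yields $\exp\{-\Omega(T\epsilon^2/N)\}$ with no $\log N$ factor.
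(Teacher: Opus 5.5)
Your proposal follows essentially the paper's route in all three cases. For US it is a Hoeffding-plus-union bound, for BSH it uses per-bracket SH bounds with the $\Theta(\ln T)$ bracketing overhead, and for B3 it uses the SH-style level-wise decomposition with a summable allocation $p_l$, per-level concentration at $\Theta(r_0^l)$ samples, the budget accounting $T_C=\Theta(T)$, and $T/N=O(1)$ in the data-poor regime.

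The one place you work harder than needed is deferment. The paper's Claim notes that the level-$l$ failure event already forces $l^*$ to lose its very first comparison in $\mathrm{Box}(l,0)$ to a $p_l\epsilon$-worse arm, since otherwise $l^*$ or a $p_l\epsilon$-good arm is lifted. So three deviation events per level suffice, and your heuristic union over deferment depths (with its $3^{-j}$ and selection-bias issues) is unnecessary.
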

The proof of Proposition~\ref{cor:b3_bai} is shown in Appendix~\ref{pf:b3_bai}.


{
\section{Experiments}\label{sec:sim}
\subsection{Setup}

We evaluate the proposed B3 algorithm on the \textbf{New York Cartoon Caption Contest (NYCCC)} dataset, a standard benchmark for BAI~\cite{NYCCC, bai_fdr, eps_bai, bucb, bsh}.  
We focus exclusively on contest \textbf{893}, which contains $N=5{,}513$ captions. Each caption receives categorical responses (\emph{Not Funny}, \emph{Somewhat Funny}, \emph{Funny}). Following common practice, we define the true mean of each arm as the empirical proportion of \emph{Funny} and \emph{Somewhat Funny} responses after preprocessing identical to prior work. The resulting arm means exhibit a maximum–minimum range of approximately $0.529$.

To disentangle the roles of \emph{non-inclusion probability} and \emph{misidentification probability within the candidate set}, we generate rewards under three controlled noise regimes:

\begin{itemize}[noitemsep, topsep=2pt]
    \item \textbf{High-noise}: reward of arm $i \sim \mathcal{N}(\mu_i, 0.5^2)$,
    \item \textbf{Moderate-noise}: reward of arm $i \sim \mathcal{N}(\mu_i, 0.2^2)$,
    \item \textbf{Deterministic}: reward of arm $i = \mu_i$.
\end{itemize}

The deterministic setting represents an idealized regime in which, once the best arm is included in the candidate set, the misidentification probability within the set is exactly zero. In contrast, the high-noise setting induces substantial uncertainty even after inclusion, making the identification of the best arm within the candidate set highly nontrivial.

We compare B3 with US, BUCB ($\delta=0.1$), and BSH under a fixed budget of $T=10{,}000$. All results are averaged over $1{,}000$ independent repetitions.

\subsection{Results}

Figure~\ref{fig:result_nyccc_noise} illustrates how the relative contributions of non-inclusion probability and misidentification probability within the candidate set vary across noise regimes in the NYCCC 893 dataset.
In the deterministic setting, simple regret depends solely on whether the best arm is included in the candidate set, leading US to achieve the lowest regret due to its maximal screening capacity, with B3 performing comparably.

\begin{figure}[!t]
    \centering

    \includegraphics[width=0.6\linewidth]{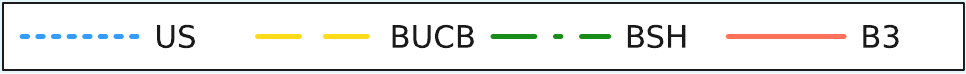}

    \vspace{0.5em}

    \subfloat[High noise: $\mathcal N(\mu,0.5^2)$\label{fig:nyccc_05}]{
        \includegraphics[width=0.3\linewidth]{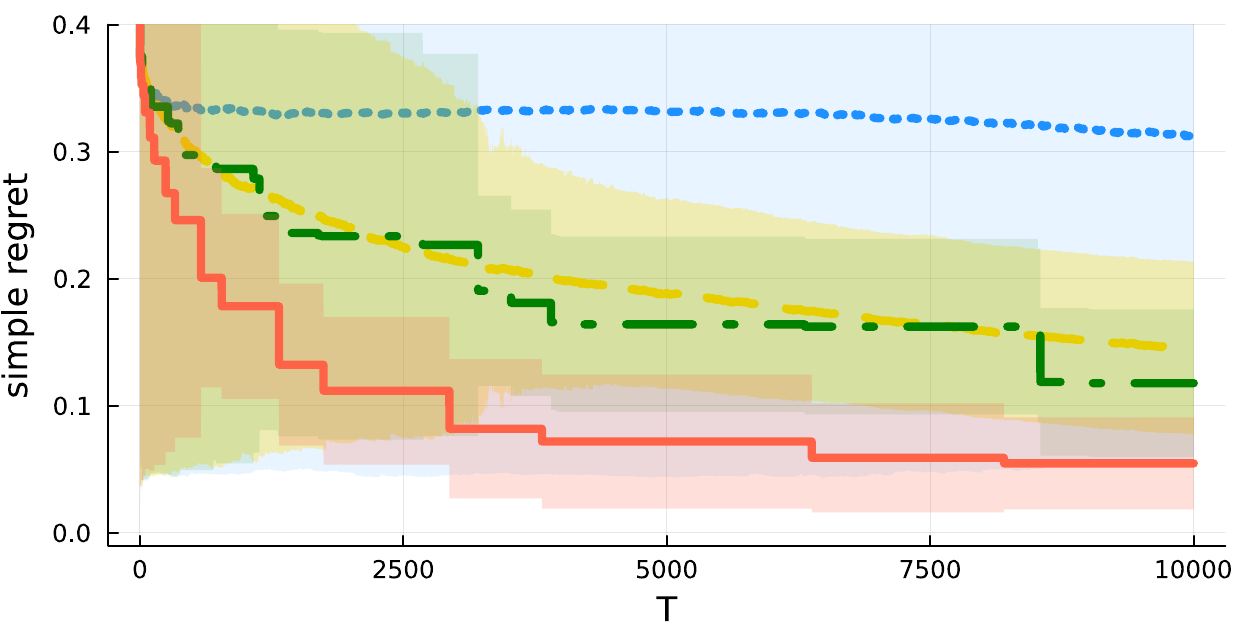}
    }
    \hfill
    \subfloat[Moderate noise: $\mathcal N(\mu,0.2^2)$\label{fig:nyccc_025}]{
        \includegraphics[width=0.3\linewidth]{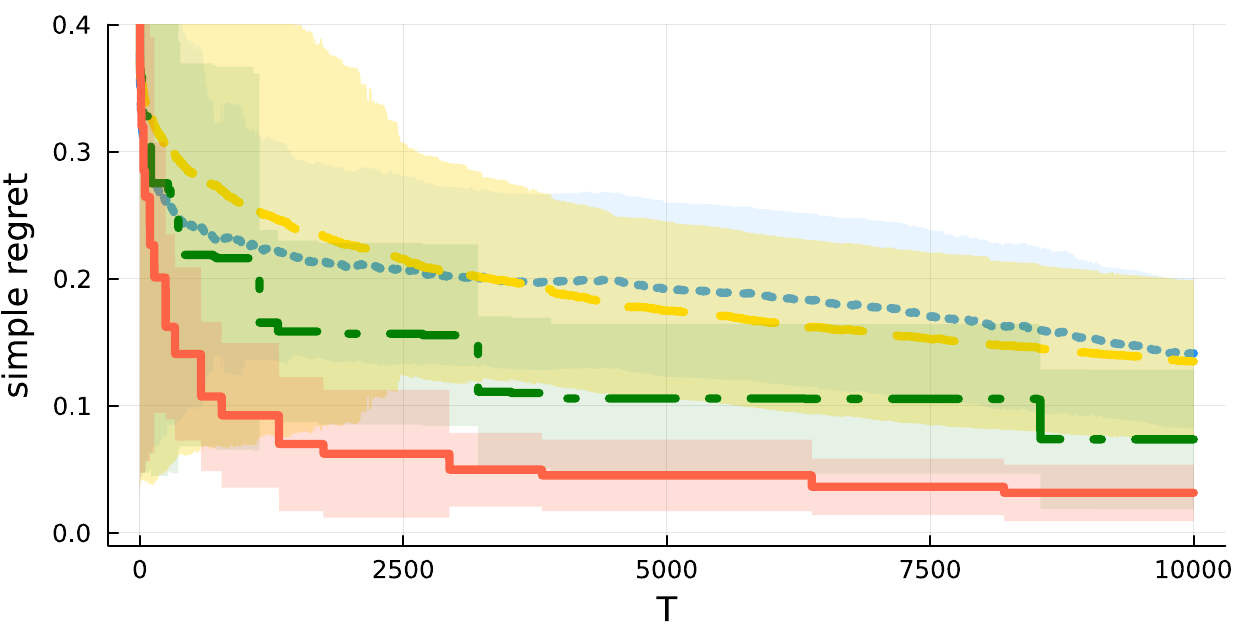}
    }
    \hfill
    \subfloat[Deterministic rewards\label{fig:nyccc_det}]{
        \includegraphics[width=0.3\linewidth]{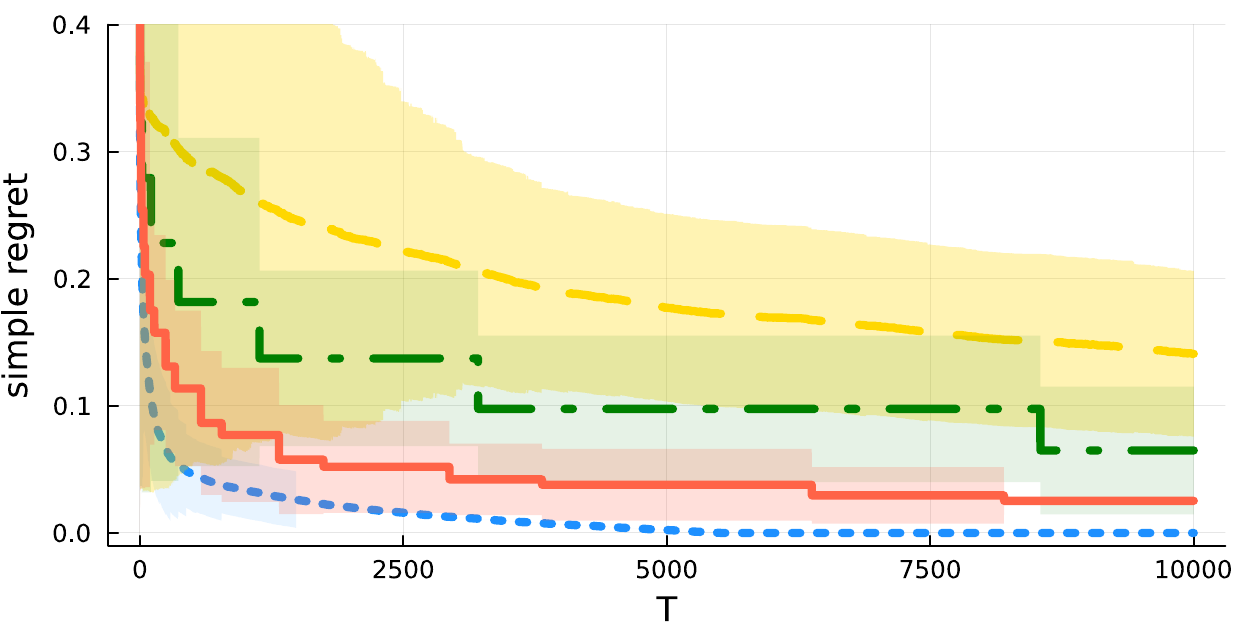}
    }

    \caption{Simulation results on the NYCCC 893 dataset under three reward noise regimes.
    Curves indicate mean performance and shaded regions denote the 25\%--75\% quantile range.}
    \label{fig:result_nyccc_noise}
\end{figure}

}
As observation noise increases, however, misidentification within the candidate set becomes the dominant source of error, and methods that rely on broad but shallow screening deteriorate rapidly.
Across all regimes, B3 consistently achieves strong performance by balancing candidate retention and within-set discrimination, demonstrating that the proposed decomposition of error is not only theoretically meaningful but also predictive of empirical performance under insufficient sampling budgets.

These results demonstrate two key points.  
First, B3 achieves the most balanced trade-off between non-inclusion probability and misidentification probability within the candidate set, leading to robust performance across fundamentally different environments.  
Second, the empirical behavior across noise regimes aligns precisely with our theoretical error decomposition, confirming that separating the total error into non-inclusion and within-set misidentification is not merely a proof technique, but a practical tool for predicting the performance of anytime BAI algorithms under insufficient budget sampling.

Viewed through this lens, the error decomposition provides a simple and actionable
guideline for choosing anytime BAI algorithms under limited budgets:
\begin{itemize}[noitemsep, topsep=2pt]
    \item \textbf{High reward variability:}
    prioritize algorithms with strong discrimination within the candidate set.

    \item \textbf{Low reward variability:}
    prioritize algorithms with large screening capacity (e.g., US or B3).

    \item \textbf{Unknown noise regime:}
    use B3 as a robust default, balancing both sources of error.
\end{itemize}

Additional experiments under alternative reward distributions (Bernoulli and power-law)
as well as non–data-poor settings are reported in Appendix~\ref{appx:sim}.

\section{Conclusion}
In this work, we proposed the B3 algorithm for efficient operation under data-poor
conditions.
We formalized the notion of the data-poor regime and established simple regret and
sample complexity bounds, demonstrating that B3 improves upon existing anytime
algorithms in this setting.

Despite its efficiency, the current version of B3 discards samples from previous
levels when arms are promoted or deferred.
Recent work (e.g., \citet{kone2024bandit}) suggests that retaining and aggregating
historical samples can substantially reduce estimation variance without weakening
theoretical guarantees.
Incorporating such sample reuse into B3’s hierarchical box structure is a promising
direction for future work and may further narrow the gap between anytime methods and
fixed-budget algorithms, particularly in high-variance or complex environments.

\section*{Accessibility \& Software and Data}
We ensure color-blind accessibility by using friendly palettes and distinct line styles. The accompanying code is written in Julia and is available as supplementary material in the OpenReview system.

\section*{Impact Statement}
This paper presents work whose goal is to advance the field of 
Machine Learning. There are many potential societal consequences 
of our work, none which we feel must be specifically highlighted here.


\newpage
\bibliography{ref}
\bibliographystyle{icml2026}

\newpage
\appendix
\onecolumn
\theoremstyle{plain}
\section{A Comprehensive Version of Box Thirding for Non-Data-Poor Conditions}\label{appendix:alg}

Initially, the base level \(l_B\) is set to \(0\), and the iteration level goes from $L$ to $l_B$. When all arms have been examined, \(l_B\) is incremented by \(1\). Subsequently, all arms in the lowest-level boxes(Box($l_B -1, \cdot$)) are lifted to higher levels. Additionally, arms previously discarded with \(r_0^{l_B-1}\) pulls are removed from the discarded set \(D\) and reintroduced into the evaluation process, treating them as if they have not yet been fully examined.

\begin{algorithm}[h!]
   \caption{Box Thirding (B3)}
   \label{alg:box_thirding_modified}
\begin{algorithmic}
   \STATE {\bfseries Input:} Arms $N$, Budget $T$
   \STATE {\bfseries Initialize:} $L \gets 0$, $J_L \gets 0$, $t \gets 0$, $N_a \gets 0$, $l_B \gets 0, D \gets \emptyset$
   \WHILE{$t \leq T$}
       \FOR{$l = L$ {\bfseries to} $l_B$ and $j = J_l$ {\bfseries to} $0$}
            \IF{Box$(l, j)$ is full and Box$(l+1,0)$, Box$(l,j+1)$ are not full}
                \STATE \textsc{ARRANGE\_Box($l, j$)}
            \ENDIF
       \ENDFOR
       \STATE Update $L \gets \max\{l: \text{Box}(l,0) \text{ is not empty}\}$.
       \STATE $J_l \gets \max\{j: \text{Box}(l,j) \text{ is not empty}\}$ for all $l \in \{l_B,\ldots,L\}$.
       \IF{Box$(l_B,0)$ is not full and $N_a < N$}
            \STATE Select an arm $i \notin D \sqcup(\bigsqcup_{\forall l,j}\text{Box}(l,j))$
            \STATE \textsc{Lift}($i, l_B$)
            \STATE $N_a \gets N_a + 1$
       \ELSIF{$N_a = N$}
            \STATE Call \textsc{Update\_Base\_Level($l_B, N_a, t, D$)}
       \ENDIF
   \ENDWHILE
   \STATE {\bfseries Return:} The arm in $\text{Box}(L, 0)$ with the largest mean
\end{algorithmic}
\end{algorithm}

\begin{algorithm}[h!]
   \caption{UPDATE\_BASE\_LEVEL}
   \label{alg:handle_full_arms}
\begin{algorithmic}
   \STATE {\bfseries Input:} Level $l_B$, Arm Counter $N_a$, Total Pulls $t$, Discarded Set $D$
   \STATE $R \gets \bigsqcup_{j \leq J_{l_B}} \text{Box}(F, j)$
   \STATE \textsc{Lift}($i, l_B+1$) for all arms $i \in R$
   \STATE $N_a \gets N_a - |\{i \in D : \text{number of pulls} = r_0^{l_B}\}|$
   \STATE $D \gets D \setminus \{i \in D : \text{number of pulls} = r_0^{l_B}\}$
\end{algorithmic}
\end{algorithm}

\section{Further Numerical Results}\label{appx:sim}
\subsection{Alternative Reward Distributions}
\label{app:nyccc_alt_rewards}

To assess the robustness of Box Thirding beyond the Gaussian noise model used in the
main text, we conduct additional experiments on the NYCCC 893 dataset under alternative
reward distributions.
Specifically, we replace the Gaussian reward noise with Bernoulli and power-law
distributions, while keeping the underlying mean structure of the arms unchanged.
This allows us to isolate the effect of the reward distributional shape on algorithmic
performance without altering the relative difficulty of the instance.

\paragraph{Bernoulli rewards.}
In the Bernoulli setting, rewards are generated as
\(
X_{i,t} \sim \mathrm{Bernoulli}(\mu_i),
\)
where \(\mu_i\) denotes the true mean of arm \(i\).

\paragraph{Power-law rewards.}
To model heavy-tailed noise, we generate rewards using a power-law–type construction
based on the Kumaraswamy distribution.
Specifically, each reward is obtained as
\[
X_{i,s} \sim \mathrm{Kumaraswamy}\!\left(\frac{1}{1/\tau_i - 1},\,1\right),
\]
where \(n_i(t)\) denotes the number of pulls of arm \(i\) up to time \(t\), and
\(\tau_i \in (0,1)\) controls the tail heaviness.
This construction yields a skewed, heavy-tailed reward distribution with finite mean
but substantially larger variance than the Gaussian or Bernoulli cases, increasing the
difficulty of within-set discrimination.

In the power-law setting, rewards are generated using a Kumaraswamy-based construction.
Specifically, each pull yields a sample
$X_{i,t} \sim \mathrm{Kumaraswamy}\!\left(\frac{\mu_i}{1-\mu_i},\,1\right)$,
which satisfies $\mathbb{E}[X_{i,t}]=\mu_i$ and
$\mathrm{Var}(X)=\mu_i(1-\mu_i)^2/(2-\mu_i)$.
This construction yields a skewed, heavy-tailed reward distribution with finite mean
but substantially larger variance than the Gaussian or Bernoulli cases, increasing the
difficulty of within-set discrimination.

\paragraph{Results and discussion.}
\begin{figure}[ht]
    \centering

    \includegraphics[width=0.5\linewidth]{figures/NYCCC_label.pdf}

    \vspace{0.5em}

    \subfloat[Bernoulli Rewards\label{fig:nyccc_bernoulli}]{
        \includegraphics[width=0.48\linewidth]{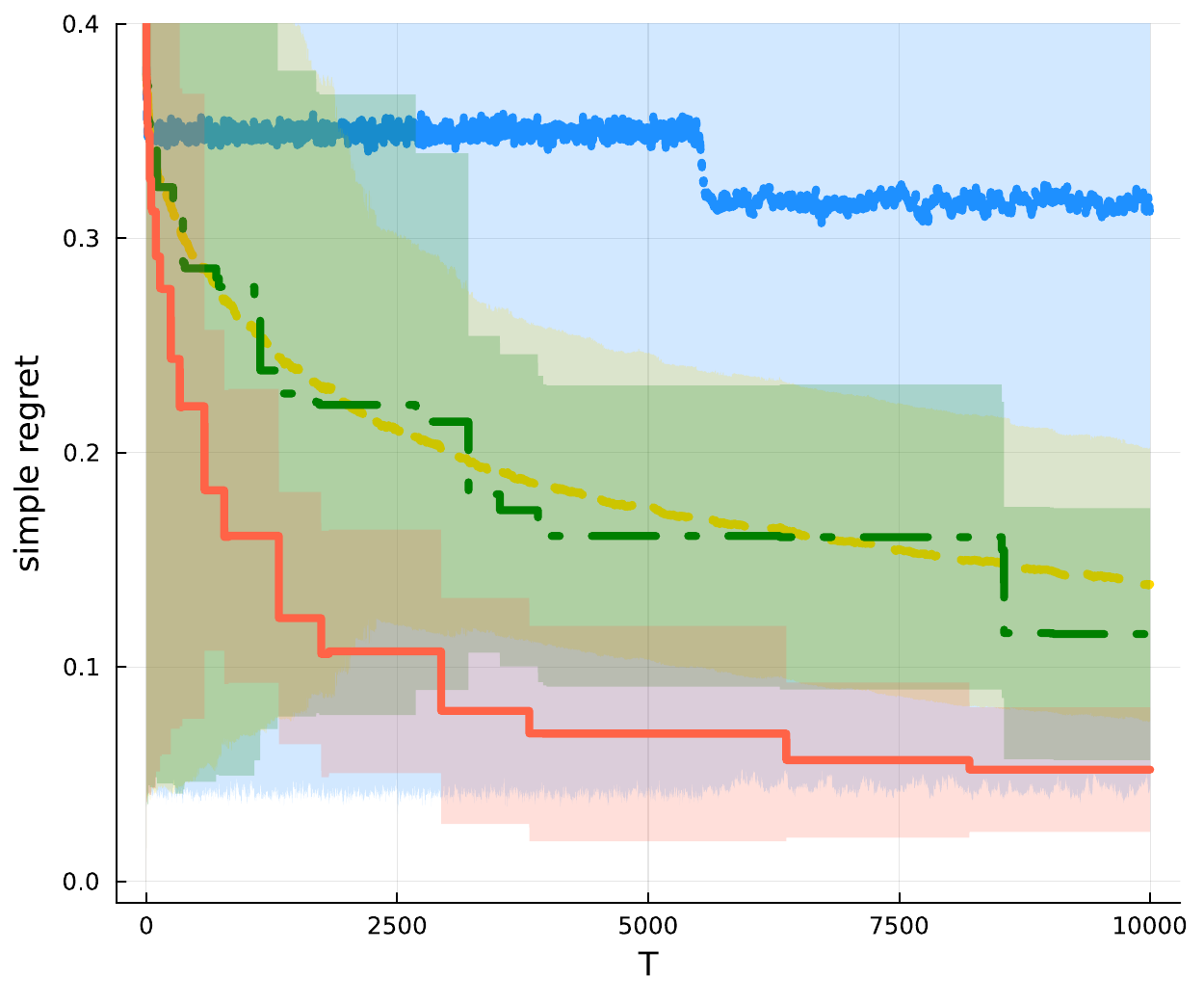}
    }
    \hfill
    \subfloat[Power-law Rewards\label{fig:nyccc_power}]{
        \includegraphics[width=0.48\linewidth]{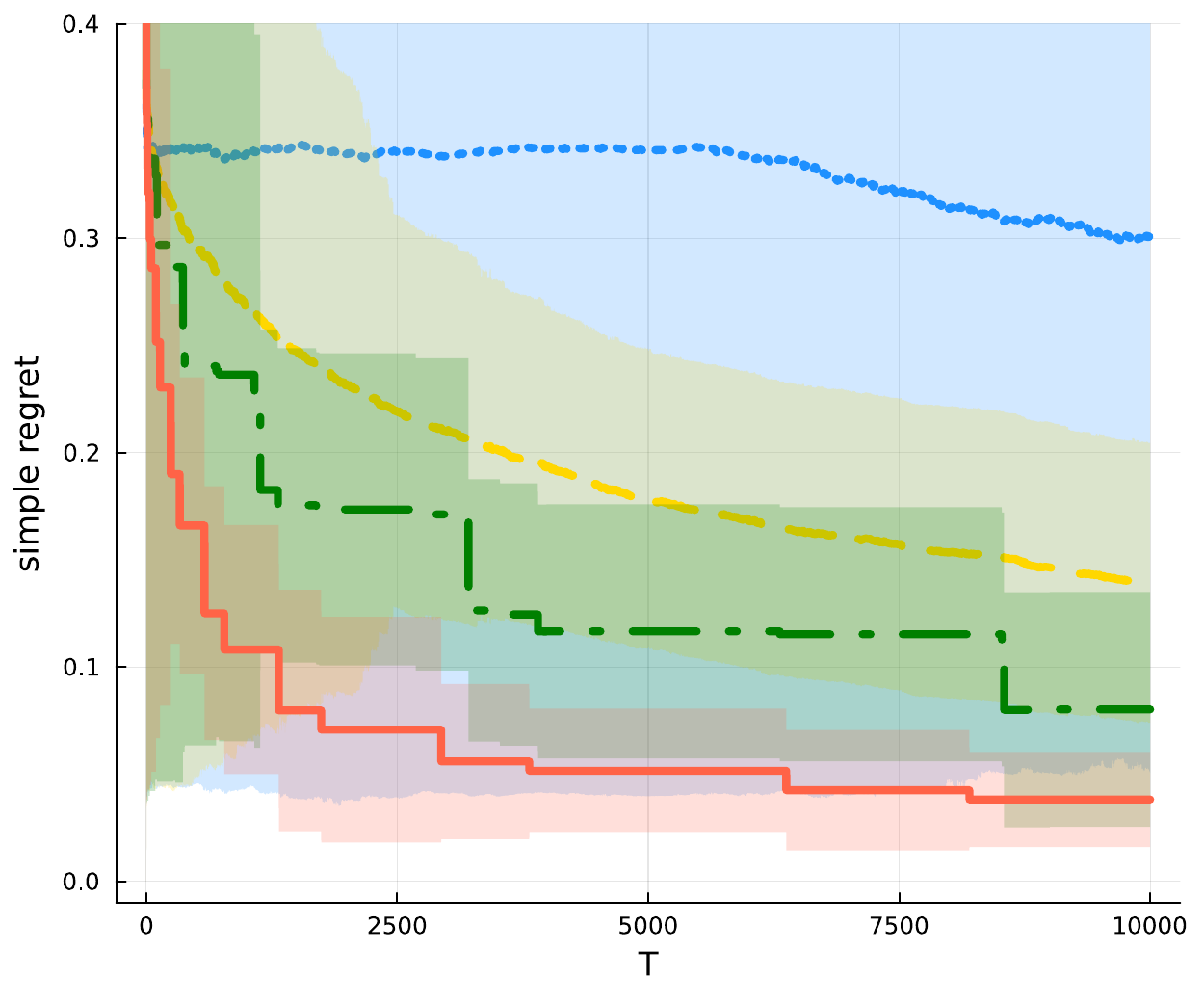}
    }

    \caption{Simulation results on the NYCCC 893 dataset under different reward distributions.
    Curves indicate the mean performance, and shaded regions correspond to the
    25\%--75\% quantile range.}
    \label{fig:result_nyccc_noise}
\end{figure}
Across both alternative reward models, we observe consistent behavior
with the Gaussian experiments reported in Section~\ref{sec:sim}.
In the Bernoulli setting, B3 maintains a balanced trade-off between candidate
set inclusion and within-set misidentification, leading to stable simple regret across
the sampling horizon.
In the power-law setting, US exhibits little reduction in simple regret even as the
budget \(T\) increases.
By contrast, BUCB and BSH achieve smaller simple regret than in the Bernoulli case,
reflecting improved performance under this noise model.
Overall, B3 consistently attains the smallest simple regret across budgets, indicating
robust performance despite the increased difficulty of discrimination.
Compared to Bernoulli rewards, the power-law distribution makes discrimination more
challenging, suggesting that performance is driven less by non-inclusion probability
and more by misidentification within the candidate set.

\subsection{Performance in Data-rich Regime$(c_0 = N)$}
\label{appendix:data-rich}

In this section, we provide additional experimental results to evaluate the robustness of the B3 algorithm in "data-rich" scenarios. These are settings where the budget $T$ is sufficiently large to allow all arms to be sampled at least once, typically satisfying $T \ge N \log_2 N$.

To assess the competitiveness of B3, we compare it against both anytime and fixed-budget baselines. It is important to note an information asymmetry in this setup: fixed-budget algorithms (SH and UCB-E) are provided with the exact value of the total budget $T$ as an input, whereas anytime algorithms (B3, BSH, and BUCB) operate without any prior knowledge of $T$.

The tables below summarize the performance of the algorithms in terms of simple regret. We report the average simple regret alongside the 25\% and 75\% quantiles to illustrate the performance distribution.

\begin{table}[h]
\centering
\caption{Simple regret for $N = 128$ and $T = 896$ ($T = N \log_2 N$).}
\label{table:data-rich-128}
\begin{tabular}{lcccc}
\hline
Algorithm & Knowledge of $T$ & Average Simple Regret & 25\% Quantile & 75\% Quantile \\ \hline
B3 (Ours) & No (Anytime) & 0.0490 & 0.0290 & 0.0682 \\
SH ($r_0 = 1.7$) & Yes (Fixed) & 0.0446 & 0.0281 & 0.0652 \\
SH ($r_0 = 2.0$) & Yes (Fixed) & 0.0516 & 0.0290 & 0.0682 \\
UCB-E & Yes (Fixed) & 0.0664 & 0.0502 & 0.0749 \\
BSH & No (Anytime) & 0.0992 & 0.0587 & 0.1569 \\
BUCB & No (Anytime) & 0.1252 & 0.0682 & 0.1769 \\ \hline
\end{tabular}
\end{table}

\begin{table}[h]
\centering
\caption{Simple regret for $N = 1,024$ and $T = 10,240$ ($T = N \log_2 N$).}
\label{table:data-rich-1024}
\begin{tabular}{lcccc}
\hline
Algorithm & Knowledge of $T$ & Average Simple Regret & 25\% Quantile & 75\% Quantile \\ \hline
B3 (Ours) & No (Anytime) & 0.0268 & 0.0000 & 0.0502 \\
SH ($r_0 = 1.7$) & Yes (Fixed) & 0.0195 & 0.0000 & 0.0290 \\
SH ($r_0 = 2.0$) & Yes (Fixed) & 0.0288 & 0.0000 & 0.0507 \\
UCB-E & Yes (Fixed) & 0.0403 & 0.0222 & 0.0588 \\
BSH & No (Anytime) & 0.1531 & 0.0661 & 0.2475 \\
BUCB & No (Anytime) & 0.2558 & 0.2374 & 0.3039 \\ \hline
\end{tabular}
\end{table}

As shown in Tables \ref{table:data-rich-128} and \ref{table:data-rich-1024}, B3 maintains highly competitive performance even in the data-rich regime. Despite the disadvantage of not knowing the budget $T$, B3 achieves a simple regret comparable to that of algorithms with fixed budget and significantly outperforms existing anytime algorithms like BSH and BUCB. This suggests that the B3 mechanism is an efficient budget allocation strategy that generalizes well across different budget scales.

\section{Notations and Known Inequalities for Section~\ref{sec:pf}}
\subsection{Notations}
\begin{itemize}[noitemsep, topsep=2pt]
    \item $\lceil x\rceil :$ minimum integer which is larger than or equal to $x$
    \item $\lfloor x\rfloor :$ maximum integer which is smaller than or equal to $x$
    \item $C_l \equiv \{i: \text{arm }i\text{ has been in level }l\}$ with $c_l$ as its cardinality. This notation will be used for the proofs related to SH and B3.
    \item $A\sim B :$ asymptotic equivalence, i.e., $\underset{B\rightarrow\infty}{\lim}A/B = 1$.
\end{itemize}

\subsection{Some Equations \& Inequalities}\label{sec:some_bdd}
The followings are general inequalities that will be used for the proofs of the Theorems/Propositions/Corollaries in Section~\ref{sec:pf}.
\begin{enumerate}
    \item\textbf{Sum of Power Series :} $\sum_{l=0}^{\infty} (l+1) r^l \leq (1-r)^{-2}, \forall r: 0<r<1$
    \item\textbf{Bounding Summation by Integral :}\label{eqn:bdd1} $\sum_{l=c}^{L} f(l) \leq \int_{c}^{L} f(x) dx + f(c)$, when $f\geq0$ is a continuous and non-increasing function.
    \item\textbf{Chernoff Bound :} Suppose $X_i$ and $Y_i$ follow 1-subgaussian with mean $\mu_1$ and $\mu_2$ independently for $i\leq n$. If $\mu_X - \mu_Y = \Delta(>0)$, then for the sample mean with $n$ samples $\hat\mu_X^n = (\sum_i X_i)/N,\hat\mu_Y^n = (\sum_i Y_i)/N $, $\p(\hat\mu_Y^n > \hat\mu_X^n)\leq \exp\{-\Delta^2n\}$
    \item\textbf{Gamma Parametrization :} $\int_0^\infty \exp(-x^k)dx = \frac{1}{k}\Gamma(\frac{1}{k})$, where $\Gamma(x) = \int_0^\infty \exp(-t)t^{x-1}dt$. 
\end{enumerate}

We attached useful Theorem that has been proved in \citet{sh, bsh}, and sketch of proof for the reader's convenience:
\begin{theorem}[\citet{sh, bsh}]
    Let \( N \) be the number of arms with associated means \( \mu_1 \geq \mu_2 \geq \ldots \geq \mu_N \). For any \( \epsilon > 0 \), let \( T_l \) be the number of times an arm is sampled.

    \begin{enumerate}
        \item The probability that the empirical average reward of an arm with a true mean less than \( \mu_1 - \epsilon \) exceeds that of the arm with mean \( \mu_1 \) is bounded by:
        \[
        \mathbb{P}\left(\hat{\mu}_j^l > \hat{\mu}_1^l \mid \mu_j < \mu_1 - \epsilon\right) \leq \exp\left(-\epsilon^2t_0 T_l\right).
        \]

        \item Furthermore, the probability that more than \( N/2 \) arms achieve this deviation is bounded by:
        \[
        \mathbb{P}\left(|\{i\neq1:\hat\mu_i^l >\hat\mu_1, \mu_1 -\epsilon > \mu_i\}| > N/2\right) \leq 3\exp\left(-t_0T_l / 8\right).
        \]
    \end{enumerate}
\end{theorem}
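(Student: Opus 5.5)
The plan is to prove the two parts separately. Part 1 comes straight from the Chernoff bound of Section~\ref{sec:some_bdd}. Part 2 reduces to a single-arm tail bound plus Markov's inequality. Throughout, write $n = t_0 T_l$ for the number of samples each surviving arm has received at level $l$. All arms at that level are sampled equally and independently, so the empirical means $\hat\mu_i^l$ are averages of $n$ independent 1-sub-Gaussian draws.

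\textbf{Part 1.} Fix an arm $j$ with $\mu_j < \mu_1 - \epsilon$, so $\Delta = \mu_1 - \mu_j > \epsilon$. The difference $\hat\mu_j^l - \hat\mu_1^l$ is an average of $n$ independent centered sub-Gaussian increments with mean $-\Delta$. I will apply the Chernoff bound (item 3 of Section~\ref{sec:some_bdd}) with $X$ the rewards of arm $1$ and $Y$ those of arm $j$. This gives $\p(\hat\mu_j^l > \hat\mu_1^l) \le \exp(-\Delta^2 n) \le \exp(-\epsilon^2 t_0 T_l)$, using monotonicity in $\Delta$. Nothing more is needed here.

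\textbf{Part 2.} The obstacle is that the events $\{\hat\mu_i^l > \hat\mu_1^l\}$ are dependent through the common $\hat\mu_1^l$, so a union bound or Hoeffding over the count does not apply directly. I will decouple by thresholding arm $1$ at $\mu_1 - \epsilon/2$ and split into two events.
\begin{itemize}
\item First, the event $E = \{\hat\mu_1^l < \mu_1 - \epsilon/2\}$. The sub-Gaussian tail for an average of $n$ samples gives $\p(E) \le \exp(-\epsilon^2 n/8)$.
\item Second, on $E^c$, every counted arm $i$ (one with $\mu_i < \mu_1-\epsilon$ and $\hat\mu_i^l > \hat\mu_1^l$) must satisfy $\hat\mu_i^l > \mu_1 - \epsilon/2 > \mu_i + \epsilon/2$. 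Each such event has probability at most $\exp(-\epsilon^2 n/8)$ and depends only on arm $i$.
\end{itemize}
Let $M$ be the number of bad arms whose empirical mean exceeds $\mu_1-\epsilon/2$. Linearity gives $\mathbb{E}[M] \le N\exp(-\epsilon^2 n/8)$. Markov's inequality then yields $\p(M > N/2) \le 2\exp(-\epsilon^2 n/8)$.

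Combining the two pieces,
\[
\p\bigl(|\{i\ne 1:\hat\mu_i^l>\hat\mu_1^l,\ \mu_1-\epsilon>\mu_i\}|>N/2\bigr) \le \p(E) + \p(M>N/2) \le 3\exp(-\epsilon^2 t_0 T_l/8).
\]
This matches the stated form, with the constant $3$ coming from $1+2$. The stated exponent $t_0T_l/8$ corresponds to reading $\epsilon$ as normalized, or absorbed into $t_0$, as in \citet{sh}. I will flag this normalization explicitly. The only delicate step is the choice of the $\epsilon/2$ threshold, which makes the per-arm events independent given nothing about arm $1$; the rest is routine.
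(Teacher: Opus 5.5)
Your proposal is correct and follows the paper's sketch essentially step for step: Chernoff for Part 1, and for Part 2 the split into $\{\hat\mu_1^l<\mu_1-\epsilon/2\}$ plus a Markov bound on the number of arms with $\hat\mu_i^l>\mu_i+\epsilon/2$, giving $(1+2)\exp(-\epsilon^2 t_0T_l/8)$. You are also right that the stated exponent is missing the $\epsilon^2$: the paper's own argument produces it, and without it the claim fails for small $\epsilon$. One caveat you inherit from the paper: item 3 of Section~\ref{sec:some_bdd} overstates the constant. For 1-sub-Gaussian rewards, $\hat\mu_j^l-\hat\mu_1^l$ has variance proxy $2/(t_0T_l)$, so the honest bound in Part 1 is $\exp(-\Delta^2 t_0T_l/4)$ rather than $\exp(-\Delta^2 t_0T_l)$; this is harmless for the downstream $\Omega(\cdot)$ statements.
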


\begin{proof}[Sketch of Proof]
The first upper bound follows directly from Chernoff's method.

For the second bound, consider the event:
\[
\left\{\hat{\mu}_i^l > \hat{\mu}_1^l \text{ and } \mu_i < \mu_1 - \epsilon \text{ for more than } N/2 \text{ arms among the } N \text{ arms}\right\}.
\]

This event is a subset of:
\[
\left\{\hat{\mu}_1^l < \mu_1 - \frac{\epsilon}{2}\right\} \cup \left\{\text{ There are } \geq N/2 \text{ arms such that } \hat{\mu}_i^l > \mu_i - \frac{\epsilon}{2}\right\}.
\]

We bound the probabilities of these two events separately:

1. \textbf{\(\mathbb{P}\left(\hat{\mu}_1^l < \mu_1 - \frac{\epsilon}{2}\right)\)}:  

   By Chernoff's bound, we have:
   \[
   \mathbb{P}\left(\hat{\mu}_1^l < \mu_1 - \frac{\epsilon}{2}\right) \leq \exp\left(-\frac{\epsilon^2t_0 T_l}{8}\right).
   \]

2. \textbf{\(\mathbb{P}\left(\left|\left\{i : \hat{\mu}_i^l > \mu_i + \frac{\epsilon}{2}\right\}\right| \geq \frac{N}{2}\right)\)}:  

   By Markov's inequality:
   \[
   \mathbb{P}\left(\left|\left\{i : \hat{\mu}_i^l > \mu_i + \frac{\epsilon}{2}\right\}\right| \geq \frac{N}{2}\right) \leq \frac{\mathbb{E}\left|\left\{i : \hat{\mu}_i^l > \mu_i + \frac{\epsilon}{2}\right\}\right|}{N/2}.
   \]

   Now, using the linearity of expectation and the fact that \(\mathbb{P}(\hat{\mu}_i^l > \mu_i + \epsilon/2) \leq \exp\left(-\frac{\epsilon^2t_0 T_l}{8}\right)\) for \(i \neq 1\):
   \[
   \frac{\sum_{i \neq 1} \mathbb{P}\left(\hat{\mu}_i^l > \mu_i + \frac{\epsilon}{2}\right)}{N/2} \leq \frac{N \cdot \exp\left(-\frac{\epsilon^2t_0 T_l}{8}\right)}{N/2} = 2\exp\left(-\frac{\epsilon^2 t_0T_l}{8}\right).
   \]

Combining these two bounds, we get:
\[
\mathbb{P}\left(\text{more than } N/2 \text{ arms deviate as described}\right) \leq 3\exp\left(-\frac{\epsilon^2 t_0T_l}{8}\right).
\]
\end{proof}

\section{Proofs of Theoretical Results}\label{sec:pf}

The proof of Corollary~\ref{cor:simple_regret} and Corollary~\ref{cor:sample_complexity} is straightforward from the result of Theorem~\ref{thm:b3} that:

$$
\p(\mu_1 - \mu_{a_T}>\epsilon)
\leq \exp\left\{-\Omega\left(\min\left\{\Ne, \epsilon^2\right\}\frac{T}{N}\right)\right\}.
$$

\subsection{Proofs of Section~\ref{sec:main}}
\subsubsection{Corollary~\ref{cor:simple_regret}}\label{pf:simple_regret}
In the following corollary, we adopt the \textbf{$\alpha$-parametrization} suggested by \citet{polygap}, where the gap between the best arm and the $i$-th arm is defined as:

\[
\mu_1 - \mu_i = \left(\frac{i-1}{N}\right)^\alpha.
\]

This formulation ensures that the difficulty of identifying the best arm increases as $\alpha$ (where $\alpha > 0$) increases.

Under this setting, the number of $\epsilon$-best arms is given by:

\[
N_\epsilon = \arg\max \{ i : ((i-1)/N)^\alpha < \epsilon \},
\]

which is equivalent to:

\[
N_\epsilon = \arg\max \{ i : (i-1) < N\epsilon^{1/\alpha} \}.
\]

\begin{corollary}[Simple Regret]
Suppose that the number of $\epsilon$-best arms satisfies the approximation:

\[
N_\epsilon = \Theta(N \cdot \epsilon^{1/\alpha}).
\]

Then, the simple regret of B3 is bounded by:

\[
\bigO \left(\max \left\{ \frac{1}{T^\alpha}, \sqrt{\frac{N}{T}} \right\} \right).
\]
\end{corollary}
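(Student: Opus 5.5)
We bound the simple regret through the tail-integral identity
\[
\mathbb{E}(\mu_1-\mu_{a_T})=\int_0^{\Delta_{\max}}\p(\mu_1-\mu_{a_T}>\epsilon)\,d\epsilon ,
\]
where $\Delta_{\max}\le 1$ under the $\alpha$-parametrization. Into this integral we substitute the bound of Theorem~\ref{thm:b3}. With $N_{\epsilon/2}=\Theta(N(\epsilon/2)^{1/\alpha})$, the exponent becomes $\Omega\big(T\max\{(\epsilon/2)^{1/\alpha},\epsilon^2/N\}\big)$. Since the exponent involves a maximum, we may use either term as a lower bound. The integrand is therefore at most $\min\big\{\exp(-cT\epsilon^{1/\alpha}),\exp(-cT\epsilon^2/N)\big\}$ for some constant $c>0$, after absorbing the factor $2^{-1/\alpha}$ into $c$.

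Each term is integrated separately using the Gamma parametrization from Section~\ref{sec:some_bdd}, and we keep whichever estimate is smaller.
\begin{itemize}[noitemsep, topsep=2pt]
  \item First term. Substituting $u=(cT)^{\alpha}\epsilon$ gives
  \[
  \int_0^\infty e^{-cT\epsilon^{1/\alpha}}d\epsilon=(cT)^{-\alpha}\,\alpha\,\Gamma(\alpha)=\bigO(T^{-\alpha}).
  \]
  \item Second term. Substituting $u=\epsilon\sqrt{cT/N}$ gives
  \[
  \int_0^\infty e^{-cT\epsilon^2/N}d\epsilon=\tfrac12\sqrt{\pi N/(cT)}=\bigO(\sqrt{N/T}).
  \]
\end{itemize}
Because the integrand is bounded by each term, the regret is at most the minimum of the two estimates. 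That minimum is a fortiori bounded by the stated $\bigO(\max\{T^{-\alpha},\sqrt{N/T}\})$.

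The main obstacle is that Theorem~\ref{thm:b3} holds only under the data-poor condition for $\epsilon$, namely $c_0\le N-N_\epsilon$. Since $N_\epsilon$ increases with $\epsilon$, and since Proposition~\ref{prop:c} gives $c_0=\Theta(T)$ for B3, this condition fails for large $\epsilon$ once $N_\epsilon> N-c_0$. We handle this by splitting the integral at the threshold $\epsilon^\ast$ where the condition first fails.
\begin{itemize}[noitemsep, topsep=2pt]
  \item For $\epsilon<\epsilon^\ast$, we use the bound above.
  \item For $\epsilon\ge\epsilon^\ast$, we bound $\p(\mu_1-\mu_{a_T}>\epsilon)$ directly. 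In this range we are not in the data-poor regime, so the non-inclusion event is controlled by Theorem~\ref{thm:noninclusion} with $c_0=\Theta(T)$, where we expect its proof to extend. The misidentification part is controlled by Proposition~\ref{cor:b3_bai}, which does not depend on the data-poor condition in the same way. In this range $N_{\epsilon/2}/N$ is bounded below by a constant, so the tail is $\exp(-\Omega(T))$ and contributes $\bigO(e^{-cT})$, which is negligible next to both rates.
\end{itemize}
The $\lceil\cdot\rceil$ discrepancy in the definition of $N_\epsilon$ only changes constants. Finally, we note that only the $\epsilon^2/N$ branch is needed to obtain $\sqrt{N/T}$, and only the $N_{\epsilon/2}$ branch is needed to obtain $T^{-\alpha}$. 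This confirms the $\max$ in the statement and avoids any case analysis on which term of the exponent dominates.
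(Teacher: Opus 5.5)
Your core computation matches the paper's: the tail-integral identity, followed by the Gamma substitutions that give $\bigO(T^{-\alpha})$ and $\bigO(\sqrt{N/T})$ for the two exponentials. The step where you depart from the paper is not sound, however. You read the $\max$ in Theorem~\ref{thm:b3} literally, bound the integrand by the \emph{minimum} of the two exponentials, and conclude that the regret is $\bigO(\min\{T^{-\alpha},\sqrt{N/T}\})$. Theorem~\ref{thm:b3} is proved by a union bound over the non-inclusion and misidentification events. That yields the \emph{sum} $\exp\{-\Omega(TN_{\epsilon/2}/N)\}+\exp\{-\Omega(T\epsilon^2/N)\}$, i.e.\ a $\min$ in the exponent. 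This is how the appendix restates the theorem, and it is what the paper's proof of this corollary starts from. Your intermediate bound is therefore stronger than anything established, and it is implausible. In the data-poor regime $\sqrt{N/T}=\Omega(1)$, so it would give regret $\bigO(T^{-\alpha})$ for every $\alpha$, as if estimation error inside $C$ cost nothing, even though candidates receive only $\bigO(1)$ pulls on average. With the sum you cannot discard a branch. You must integrate both terms and add, obtaining $\bigO(T^{-\alpha})+\bigO(\sqrt{N/T})$. That is exactly the paper's argument, and it is why the statement carries a $\max$.

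The paper's proof simply integrates over all $\epsilon\ge 0$ and ignores the data-poor hypothesis of Theorem~\ref{thm:b3}, so you are right to raise the issue. Your patch for $\epsilon\ge\epsilon^\ast$ fails as written, for three reasons:
(i) $N_{\epsilon/2}/N$ need not be bounded below by a constant there, since $N_{\epsilon^\ast}\approx N-c_0+1$ is $\bigO(1)$ when $c_0$ is close to $N$.
(ii) Even where non-inclusion is negligible, the misidentification term is $\exp\{-\Omega(T\epsilon^2/N)\}$, not $\exp\{-\Omega(T)\}$, so the tail is not $\bigO(e^{-cT})$.
(iii) Proposition~\ref{cor:b3_bai} is itself stated only under the data-poor condition.

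The non-inclusion branch needs no split at all. The hypergeometric bound $\exp(-c_0N_{\epsilon/2}/N)$ in the proof of Theorem~\ref{thm:noninclusion} holds for every $\epsilon$, since the probability is $0$ once $c_0+N_{\epsilon/2}>N$. A simpler route: by Proposition~\ref{prop:c}, the data-poor condition $c_0<N$ together with $c_0=\Theta(T)$ forces $T=\bigO(N)$. Then $\sqrt{N/T}=\Omega(1)$, while the regret never exceeds $\mu_1-\mu_N<1$ under the $\alpha$-parametrization. So wherever Theorem~\ref{thm:b3} applies, the corollary is immediate. Outside that regime, neither your argument nor the paper's supplies the misidentification bound that would be needed.
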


\begin{proof}
By definition, the expected regret is given by:

\[
\mathbb{E}[\mu_1 - \mu_{a_T}] = \int_0^\infty \mathbb{P}(\mu_1 - \mu_{a_T} > \epsilon) \, d\epsilon.
\]

From \textbf{Theorem~\ref{thm:b3}}, we have the probability bound:

\[
\mathbb{P}(\mu_1 - \mu_{a_T} > \epsilon) 
\leq \exp\left(- \text{const} \cdot \frac{T N_\epsilon}{N} \right) + \exp\left(- \text{const} \cdot \frac{T \epsilon^2}{N} \right).
\]

Using the \textbf{Gamma parametrization} defined in Section~\ref{sec:some_bdd}, namely:

\[
\int_0^\infty \exp(-Kx^z) \, dx = \frac{1}{z K^{1/z}} \Gamma\left(\frac{1}{z}\right),
\]

we obtain the following upper bounds:

\[
\int_0^\infty \exp\left(-\Omega\left(\frac{T \epsilon^2}{N}\right)\right) d\epsilon = \bigO\left(\sqrt{\frac{N}{T}}\right),
\]
 
and

\[
\int_0^\infty \exp\left(-T \epsilon^{1/\alpha}\right) d\epsilon = \mathcal{O}\left(\frac{1}{T^\alpha}\right).
\]
\end{proof}

\begin{remark}
A key observation is that the \textbf{critical value} of $\alpha$ is \textbf{$1/2$}. Specifically, when $\alpha \leq 1/2$, the term $T^{-\alpha}$ is always smaller than $\sqrt{T/N}$, meaning that the dominant term in the regret bound depends on the problem difficulty parameter $\alpha$.
\end{remark}

\subsubsection{Corollary~\ref{cor:sample_complexity}}\label{pf:sample_complexity}

\begin{corollary}[(\(\epsilon, \delta\))-Sample Complexity]
    Let $o(N_{\epsilon/2})$ be the order of $N_{\epsilon/2}$ with respect to $\epsilon$.
    The minimum budget required to obtain a \(\p(\mu_1 - \mu_{a_T} \leq \epsilon) \geq 1-\delta\) guarantee is:
    \[
    T \geq const\cdot\left(\max\left\{\frac{N}{N_{\epsilon/2}}, \frac{N}{\epsilon^2}\right\} \ln\left(\frac{1}{\delta}\right)\right).
    \]
\end{corollary}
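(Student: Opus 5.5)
The plan is to invert the tail bound of Theorem~\ref{thm:b3}. In the data-poor regime that theorem gives, for some absolute constant $c>0$,
\[
\p(\mu_1 - \mu_{a_T} > \epsilon) \le \exp\left\{-c\,\max\left\{\frac{N_{\epsilon/2}}{N},\frac{\epsilon^2}{N}\right\}T\right\}.
\]
It is cleaner to work with the decomposed form the paper actually proves. The event $\{\mu_1-\mu_{a_T}>\epsilon\}$ is contained in the union of the Non-Inclusion and Misidentification events, which gives
\[
\p(\mu_1 - \mu_{a_T} > \epsilon) \le \exp\{-c_1 T N_{\epsilon/2}/N\} + \exp\{-c_2 T\epsilon^2/N\}.
\]
The first term comes from Corollary~\ref{cor:non_inclusion} for B3. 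The second comes from Proposition~\ref{cor:b3_bai}.

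It then suffices to make each term at most $\delta/2$. The first term is at most $\delta/2$ once
\[
T \ge \frac{N}{c_1 N_{\epsilon/2}}\ln(2/\delta).
\]
The second is at most $\delta/2$ once
\[
T\ge \frac{N}{c_2\epsilon^2}\ln(2/\delta).
\]
Taking $T$ at least the larger of the two thresholds makes the total failure probability at most $\delta$. This gives the guarantee $\p(\mu_1-\mu_{a_T}\le\epsilon)\ge 1-\delta$ with
\[
T\ge const\cdot\max\left\{\frac{N}{N_{\epsilon/2}},\frac{N}{\epsilon^2}\right\}\ln(1/\delta).
\]

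Two small points remain. First, absorb $\ln(2/\delta)\le 2\ln(1/\delta)$ for $\delta\le 1/2$ into the constant; for larger $\delta$ the claim is essentially trivial. Second, the combination is a maximum over the two requirements because both error sources must be controlled simultaneously. The single-exponent form of Theorem~\ref{thm:b3}, with its max inside, would only require the minimum of the two thresholds. I will use the sum form, which is what the paper's proof actually establishes and which matches the displayed statement.

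The main obstacle is bookkeeping rather than mathematics. Theorem~\ref{thm:b3} is stated under the data-poor condition for $\epsilon$, so I must note that the bound is being applied in that regime. Outside it, $c_0=N$ and the non-inclusion term vanishes, so the requirement only weakens and the stated $T$ still suffices. I will also make sure the $\Omega$ constants are uniform in $N$, $\epsilon$ and $\delta$. I will take that uniformity from the explicit constants in the proofs of Theorem~\ref{thm:noninclusion} and Proposition~\ref{cor:b3_bai}.
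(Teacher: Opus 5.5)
Your proposal is correct and matches the paper's proof essentially step for step: the paper likewise uses the sum form $\exp\{-\Omega(TN_{\epsilon/2}/N)\}+\exp\{-\Omega(T\epsilon^2/N)\}$, forces each term below $\delta/2$, and takes the larger of the two resulting thresholds with $\ln(2/\delta)$ absorbed into the constant. Your remark that the max-inside form of Theorem~\ref{thm:b3} would only give the minimum of the thresholds agrees with the appendix, which restates that bound with a $\min$; your aside that the same $T$ suffices outside the data-poor regime is not supported by the cited results (Proposition~\ref{cor:b3_bai} is only established under the data-poor condition), but the statement as posed does not need it.
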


\begin{proof}
By the result of Theorem~\ref{thm:b3}, we obtain the following probability bound:
\[
\mathbb{P}(\mu_1 - \mu_{a_T} > \epsilon) \leq \exp\left(-\Omega\left(\frac{T N_{\epsilon/2}}{N}\right)\right) + \exp\left(-\Omega\left(\frac{T \epsilon^2}{N}\right)\right).
\]
To ensure that both terms on the right-hand side (RHS) are at most \(\delta/2\), we impose the following conditions.

For the first term:
\[
\exp\left(-\Omega\left(\frac{T N_{\epsilon/2}}{N}\right)\right) \leq \frac{\delta}{2}
\quad \Longrightarrow \quad 
T \geq \text{const} \cdot \frac{N}{N_{\epsilon/2}} \ln\left(\frac{2}{\delta}\right).
\]

For the second term:
\[
\exp\left(-\Omega\left(\frac{T \epsilon^2}{N}\right)\right) \leq \frac{\delta}{2}
\quad \Longrightarrow \quad 
T \geq \text{const} \cdot \frac{N}{\epsilon^2} \ln\left(\frac{2}{\delta}\right).
\]

Combining these two conditions, we conclude:
\[
T \geq const\cdot \left( \max\left\{ \frac{N}{N_{\epsilon/2}}, \frac{N}{\epsilon^2} \right\} \ln\left(\frac{1}{\delta}\right) \right).
\]

\end{proof}

\subsection{Proofs of Section~\ref{sec:non_inclusion}}
\subsubsection{Theorem~\ref{thm:noninclusion}}\label{pf:noninclusion}

Although Theorem~\ref{thm:noninclusion} specifically addresses the case under the data-poor condition, we extend its result to cover all cases (\(c_0 + \Ne \geq N\)) below.

\begin{theorem}[Extended Statement of Theorem~\ref{thm:noninclusion}]
Let \(\Ne\) denote the number of \(\epsilon\)-good arms, \(C\) the set of valid arms, and \(c_0 = |C|\) its cardinality. The probability that an \(\epsilon\)-best arm is not included in \(C\) is given as:
\[
\mathbb{P}(\mu_1 - \mu_1(C) > \epsilon) \leq
\begin{cases}
    0, & \text{if } c_0 + \Ne > N, \\
    \mathcal{O} \left( \left(1 + \frac{\Ne}{N - \Ne} \right)^{-(N - \Ne)} \cdot \left(\frac{N}{\Ne} \right)^{-\Ne} \right), & \text{if } c_0 + \Ne = N, \\
    \exp\left(-\Omega \left(\frac{c_0 \Ne}{N} \right) \right), & \text{if } c_0 + \Ne < N.
\end{cases}
\]

\end{theorem}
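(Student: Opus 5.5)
We reduce the statement to a hypergeometric tail bound. The candidate set $C$ has deterministic size $c_0$, and its composition is determined by the uniformly random order in which new arms are introduced. In B3, new arms are chosen uniformly from the unexamined ones, and the same holds for US and BSH by symmetry of their subsampling. So, conditionally on the reward realizations being irrelevant to which labels are drawn, $C$ is a uniformly random $c_0$-subset of $\{1,\dots,N\}$. The event $\{\mu_1-\mu_*(C)>\epsilon\}$ is exactly the event that $C$ contains none of the $\Ne$ arms that are $\epsilon$-best. This gives
\[
\p(\mu_1-\mu_*(C)>\epsilon)=\binom{N-\Ne}{c_0}\Big/\binom{N}{c_0}=\prod_{k=0}^{c_0-1}\frac{N-\Ne-k}{N-k}.
\]
The first step of the proof is to justify this exchangeability carefully: the identities of arms entering $C$ depend only on the random permutation, while $|C|=c_0$ does not depend on which arms are drawn.

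With the product formula in hand, the three cases follow directly.

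\textbf{Case $c_0+\Ne>N$.} Some factor in the product, namely $k=N-\Ne$, equals zero, so the probability is $0$.

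\textbf{Case $c_0+\Ne<N$.} Each factor satisfies $1-\Ne/(N-k)\le 1-\Ne/N\le e^{-\Ne/N}$. Hence the product is at most $\exp(-c_0\Ne/N)$, which is $\exp\{-\Omega(c_0\Ne/N)\}$.

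\textbf{Boundary case $c_0=N-\Ne$.} The probability is exactly $1/\binom{N}{\Ne}$. Apply Stirling's formula, $\binom{N}{\Ne}\approx \frac{N^N}{\Ne^{\Ne}(N-\Ne)^{N-\Ne}}$ up to polynomial factors. Rewrite $\left(\frac{N}{N-\Ne}\right)^{N-\Ne}=\left(1+\frac{\Ne}{N-\Ne}\right)^{N-\Ne}$. Together with $(N/\Ne)^{\Ne}$, this yields the displayed form. To keep the bound clean as an $\bigO(\cdot)$, use the elementary inequality $\binom{N}{k}\ge \frac{N^N}{k^k(N-k)^{N-k}}\cdot\frac{1}{N+1}$, or absorb the $\sqrt{2\pi \Ne(N-\Ne)/N}$ factor into the constant. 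We expect the stated $\bigO$ to tolerate the missing polynomial factor, though one should check the direction of the inequality carefully.

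The main obstacle is the first step, namely rigorously showing that $C$ is uniformly distributed among $c_0$-subsets and that $c_0$ is deterministic for the algorithm at hand. For B3 this requires an argument that the box dynamics, such as which arms get lifted, do affect which arms reach a returnable level. However, the \emph{definition} of $C$ is a union over all reward distributions, so membership in $C$ depends only on the position in the pulling sequence and not on the realized rewards. We would make this precise by showing that $C$ consists of the first $c_0$ arms introduced, where $c_0$ is a function of $T$ alone. Combined with uniform random introduction, this gives uniformity. The remaining computations are routine.
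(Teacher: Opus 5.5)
Your hypergeometric reduction and your handling of the cases $c_0+\Ne>N$ (a vanishing factor at $k=N-\Ne$) and $c_0+\Ne<N$ (each factor at most $1-\Ne/N\le e^{-\Ne/N}$) are exactly the paper's argument. The paper simply asserts the hypergeometric formula. Your exchangeability step — the set of introduction positions that can reach Box$(L,0)$ depends only on $T$, and arms are introduced in uniformly random order — is therefore extra rigor rather than a different route.

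The genuine gap is the boundary case $c_0=N-\Ne$, and the direction you flagged is the bad one. Write $D=\Ne^{\Ne}(N-\Ne)^{N-\Ne}/N^N$ for the displayed expression. The binomial theorem with $p=\Ne/N$ gives $\binom{N}{\Ne}p^{\Ne}(1-p)^{N-\Ne}\le 1$, so ${\binom{N}{\Ne}}^{-1}\ge D$ always. More precisely, for $1\le \Ne\le N-1$,
\[
\sqrt{2\pi\Ne(N-\Ne)/N}\;D\;\le\;{\binom{N}{\Ne}}^{-1}\;\le\;\sqrt{8\Ne(N-\Ne)/N}\;D .
\]
The polynomial factor is therefore unavoidable, and it is unbounded whenever both $\Ne$ and $N-\Ne$ grow. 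For example, $N=2\Ne$ gives ${\binom{2\Ne}{\Ne}}^{-1}\sim\sqrt{\pi\Ne}\,4^{-\Ne}$, against $D=4^{-\Ne}$. So the factor cannot be absorbed into a constant independent of $\Ne$, and your $(N+1)$ method-of-types bound only enlarges it.

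The middle case as displayed is thus false with an absolute constant in the $\bigO$. What is true is $\p\le\sqrt{8\Ne(N-\Ne)/N}\,D$, which is $\bigO(D)$ only when $\min(\Ne,N-\Ne)$ stays bounded. The paper's own argument does not rescue the display either. Its bound ${\binom{N}{\Ne}}^{-1}\le(\Ne/N)^{\Ne}$ is valid but drops the factor $\bigl(1+\Ne/(N-\Ne)\bigr)^{-(N-\Ne)}$, so it exceeds $D$ by at least $2^{\Ne}$ when $\Ne\le N/2$. You should replace the middle case by the corrected bound with the $\sqrt{8\Ne(N-\Ne)/N}$ prefactor, or settle for the cruder $(\Ne/N)^{\Ne}$, rather than try to match the display.
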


\begin{proof}
We provide a unified and intuitive proof for the non-inclusion probability across all regimes of the candidate set size $c_0$ and the number of $\epsilon$-best arms $N_\epsilon$.

The probability $P$ that no $\epsilon$-best arm is included in the candidate set $C$ is given by the hypergeometric ratio:
\begin{equation}
P(\mu_1 - \mu_1(C) > \epsilon) = \frac{\binom{N-N_\epsilon}{c_0}}{\binom{N}{c_0}} = \prod_{i=0}^{c_0-1} \left( 1 - \frac{N_\epsilon}{N - i} \right)
\end{equation}

\textbf{Case 1:} $c_0 + N_\epsilon > N$ (Sufficient Capacity)

In this regime, the number of candidate slots $c_0$ plus the number of $\epsilon$-best arms $N_\epsilon$ exceeds the total number of arms $N$. By the Pigeonhole Principle, it is impossible to select $c_0$ arms without picking at least one $\epsilon$-best arm. 
Mathematically, the product contains a term where $i = N - N_\epsilon$, making $(1 - \frac{N_\epsilon}{N-i}) = 0$. Thus:
\begin{equation}
P = 0
\end{equation}

\textbf{Case 2:}  $c_0 + N_\epsilon = N$ (Critical Boundary)
Here, the candidate set size is exactly equal to the number of arms that are not $\epsilon$-best. The only way to miss all $\epsilon$-best arms is to pick every single "bad" arm. The probability is:
\begin{equation}
P = \frac{1}{\binom{N}{N_\epsilon}} \leq \left( \frac{N_\epsilon}{N} \right)^{N_\epsilon}
\end{equation}
Using the property $\binom{n}{k} \geq (n/k)^k$, we see that $P$ decays exponentially with the number of $\epsilon$-best arms, which is consistent with the $O(\cdot)$ bound in Theorem D.4.

\textbf{Case 3:}  $c_0 + N_\epsilon < N$ (Data-Poor Condition)

To provide a more intuitive understanding of the non-inclusion probability, we present a simplified proof using the fundamental inequality $1-x \leq e^{-x}$. 

Under the data-poor condition where $c_0 + N_\epsilon < N$, the probability $P$ that none of the $N_\epsilon$ arms belonging to the $\epsilon$-best set are included in the candidate set $C$ of size $c_0$ can be expressed as a ratio of combinations:
\begin{equation}
P = \frac{\binom{N-N_\epsilon}{c_0}}{\binom{N}{c_0}}
\end{equation}

By expanding the binomial coefficients, we can rewrite this probability as a product of $c_0$ terms:
\begin{equation}
P = \prod_{i=0}^{c_0-1} \frac{N - N_\epsilon - i}{N - i} = \prod_{i=0}^{c_0-1} \left( 1 - \frac{N_\epsilon}{N - i} \right)
\end{equation}

Applying the inequality $1 - x \leq \exp(-x)$ for each term in the product, we obtain:
\begin{equation}
P \leq \prod_{i=0}^{c_0-1} \exp\left( - \frac{N_\epsilon}{N - i} \right) = \exp\left( - \sum_{i=0}^{c_0-1} \frac{N_\epsilon}{N - i} \right).
\end{equation}

Since $N - i \leq N$ for all $i \geq 0$, it follows that $\frac{1}{N - i} \geq \frac{1}{N}$. We can lower bound the summation in the exponent as:
\begin{equation}
\sum_{i=0}^{c_0-1} \frac{N_\epsilon}{N - i} \geq \sum_{i=0}^{c_0-1} \frac{N_\epsilon}{N} = \frac{c_0 N_\epsilon}{N}
\end{equation}

Substituting this back into the exponential bound, we arrive at:
\begin{equation}
P \leq \exp\left( - \frac{c_0 N_\epsilon}{N} \right)
\end{equation}
\end{proof}

By combining these cases, we observe a phase transition in the non-inclusion probability:
1. It is identically zero when the budget allows for sufficient screening ($c_0 > N - N_\epsilon$).
2. It is exponentially small at the boundary ($c_0 = N - N_\epsilon$).
3. It follows the rate $\exp(-\Omega(c_0 N_\epsilon / N))$ in the data-poor regime, where the dominant source of error is the limited screening capacity of the algorithm.

\subsubsection{Proposition~\ref{prop:c}}\label{pf:c}
\begin{proposition}
Under the data-poor condition, the sizes of the Candidate Sets($c_0$) for different algorithms are:
\begin{itemize}[noitemsep, topsep=2pt]
    \item US: $c_0 =\Theta(T)$
    \item BSH: $c_0 = \Theta( T(\log_2T)^{-2})$
    \item B3: $c_0 = \Theta(T)$
\end{itemize}
\end{proposition}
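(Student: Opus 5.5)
The plan is to treat the three algorithms separately. In each case I will count how many arms, under a budget \(T\) with \(T\lesssim N\), can ever reach the level from which the output is drawn. The common recipe is a two-sided counting argument. For the upper bound, every arm of \(C\) must have been pulled at least once, so \(c_0\le T\) for US and B3. For BSH the bound needs the bracket structure. For the lower bound, I will exhibit a set of \(\Omega(\cdot)\) arms that each reach a configuration from which some reward realization makes them the returned arm.

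\textbf{US.} Uniform sampling with \(T<N\) pulls \(\min\{T,N\}=T\) distinct arms, each once (or \(\lfloor T/N'\rfloor\) times on a chosen subset). Any pulled arm can be the empirical maximum under a suitable reward distribution, so \(c_0=T=\Theta(T)\). This part is immediate.

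\textbf{B3.} The upper bound \(c_0\le T\) is trivial. For the lower bound I will use the level structure.
\begin{itemize}
\item Let \(L\) be the top level when the budget runs out. An arm is in \(C\) if it can be the arm in Box\((L,0)\) with the largest average.
\item Equivalently, \(C\) contains every arm that has been involved in a ternary comparison whose outcome could propagate to level \(L\). The candidate set is then the set of arms in the ``subtree'' feeding Box\((L,0)\).
\item By the halving flow of Section~\ref{sec:intuition}, reaching level \(l\) requires about \(2^l\) arms at level \(0\) (each \textsc{ARRANGE\_BOX} lifts one arm; eventually half are lifted).
\item Those \(2^l\) arms consume budget \(\sum_{k\le l} 2^{l-k}\lceil r_0^k\rceil=\Theta(2^l)\), because \(r_0<2\) makes the geometric sum \(\sum_k (r_0/2)^k\) converge.
\item Hence \(L=\log_2 T-O(1)\), and the number of arms that have fed into the top box is \(\Theta(2^L)=\Theta(T)\).
\end{itemize}

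I expect the main obstacle to be making ``could be returned under some reward distribution'' precise for B3. Two issues need handling. First, arms sitting in deferred boxes or in incompletely filled lower boxes at stopping time are \emph{not} candidates. Second, discarded arms that lost early are candidates only in the sense that different rewards would have made them win. I would argue that, because the candidate set conditions on the pull sequence, any arm that entered a comparison chain ending at Box\((L,0)\) can be made the winner by assigning it a large enough mean. So \(C\supseteq\) the set of level-\(0\) arms whose comparisons were completed up to level \(L\). This set has size at least a constant fraction of \(2^L\).

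\textbf{BSH.} I will use the bracket/doubling structure of \citet{bsh}.
\begin{itemize}
\item Within bracket \(k\), a subset of size \(n_k\) is run under doubling budgets. Each doubling round runs SH with budget \(B\), which needs about \(n\log_2 n\) pulls to cover \(n\) arms.
\item With \(O(\log T)\) brackets and \(O(\log T)\) doubling rounds sharing \(T\), a single SH run effectively receives a budget of order \(T/\log_2 T\).
\item That run covers \(n\) arms with \(n\log_2 n\asymp T/\log_2 T\), giving \(n=\Theta(T/(\log_2T)^2)\).
\item Only the final (or best) completed runs' survivors compete for the output, so \(c_0\) is of that order.
\end{itemize}
The matching upper bound follows by showing that no completed run can cover more arms than this.
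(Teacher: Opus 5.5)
Your proposal is correct and follows essentially the paper's argument. For B3 the paper also uses the halving recursion $n_{l+1}\approx n_l/2$ with $1\le n_L\le 3$ (so $L\approx\log_2 n_0$), and the convergence of $\sum_l (r_0/2)^l$ for $r_0<2$, to get a candidate budget $\Theta(n_0)=\Theta(T)$. For BSH it solves $(b-1)2^{b-1}+b^2 2^b\le T$, which is your $n\log_2 n\asymp T/\log_2 T$ computation, and bounds the union of completed brackets by twice the largest via their geometric growth.

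Your trivial bound $c_0\le T$ for the B3 upper direction is a harmless shortcut compared with the paper's two-sided budget sandwich. Your aside that arms in deferred boxes are never candidates is false: in the paper's $T=6$ example the arm in Box$(0,1)$ belongs to $C$. This does not affect your argument, since you only use $C\supseteq$ the ancestry of Box$(L,0)$ for the lower bound.
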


\begin{proof}
    The proof for US is straightforward.

    \textbf{Proof of BSH :} $c_0 = \Theta( T/(\ln T)^2 )$

The \( B^{\text{th}} \) bracket is constructed when \( t = (B-1)2^{B-1} \). After its creation, this bracket requires an additional budget of \( B2^B \times B \) to return the best arm identified within bracket \( B \). In the best case, all \( B \) brackets contain non-overlapping arms. Therefore, the total number of valid arms after the \( B^{\text{th}} \) bracket returns the best arm is 
\[
2^1 + 2^2 + \cdots + 2^B = 2^{B+1} - 2.
\]

In the worst case, the arms in the first \( B-1 \) brackets are subsets of the arms in bracket \( B \). Consequently, the total number of valid arms is at least
\[
2^B.
\]

To generalize this, we define 
\[
\mathcal{B} = \max\{2^b : (b-1)2^{b-1} + b2^b\times b \leq T\}.
\]
Using this, the total number of valid arms is 
\[
c_0 = 2\mathcal{B} - 2.
\]

To bound \( \mathcal{B} \), consider first an upper bound. Simplifying the condition, we approximate
\[
\mathcal{B} = \max\{2^b : (2b^2 + b -1)2^b \leq 2T\}.
\]

Since \( (2b^2 + b -1 ) \) grows faster compared to \( 2b^2 \) for all $b\in\{1,2,\ldots\}$, $\mathcal{B}$ can be bounded by
\[
\mathcal{B} \leq \max\{2^b : b^2 2^b \leq T\}.
\]
Expanding this, we have
\[
\mathcal{B} \leq \frac{T}{\left(\log_2\left(\frac{T}{\left(\log_2\left(\frac{T}{\cdots}\right)\right)^2}\right)\right)^2} 
\leq \frac{T}{(\log_2(T) - \log_2\log_2(T))^2}.
\]


For a lower bound on \( \mathcal{B} \), we observe that
\[
\mathcal{B} \geq \max\{2^b : (b+1)^2 2^{b+1} \leq 4T\}.
\]
This simplifies further to
\[
\mathcal{B} \geq \frac{4T}{(\log_2(4T) )^2}.
\]

Combining these bounds, we derive the following bounds for \( c_0 : \mathcal{B}\leq c_0 \leq 2\mathcal{B}-1\):
\[
\frac{4T}{(\log_2(4T))^2} 
\leq c_0 
\leq \frac{2T}{(\log_2T - \log_2\log_2T)^2} - 1.
\]

Both the lower bound and the upper bound for $c_0$ are of order $T/(\log_2 T)^2$. Therefore, we conclude that $c_0 =\Theta( T/(\log_2 T)^2)$.

    \textbf{Proof of B3 :} $c_0 = \Theta(T)$

To make the analysis rigorous, we explicitly define the analytical sets and notation.
Let $N_{total}$ be the total number of arms that have ever been in level $l$.
To distinguish between the candidate set $C$ and the arms that merely pass through a box, we define
$$
C_{l,0}=\{i {\in C} \mid \text{arm }i \text{ experienced a full }Box(l,0)\}
$$
and $n_{l} = |C_{l,0}|$.
By definition, $C_{0,0} = C$ since all arms in the candidate set $C$ must, at some point, have experienced a full Box$(0,0)$ to be considered viable. Therefore, $c_0 = n_0$.

Then the number of arms $n_{l+1}$ that are promoted to level $l+1$ from level $l$ is given by:
$$
n_{l+1} = \left\lfloor\frac{n_l}{3}\right\rfloor + \left\lfloor\frac{n_l}{3^2}\right\rfloor + \ldots + \left\lfloor\frac{n_l}{3^{\lfloor \log_3 n_l \rfloor}}\right\rfloor = \sum_{i=1}^{\lfloor \log_3 n_l \rfloor} \left\lfloor\frac{n_l}{3^i}\right\rfloor
$$

Using $x-1 \leq \lfloor x \rfloor \leq x$, we can bound $n_{l+1}$ as:
\begin{itemize}[noitemsep, topsep=2pt]
    \item $n_{l+1}\leq \frac{n_l}{3}\left(\frac{1-3^{-\log_3 n_l}}{1-3^{-1}}\right) =\frac{n_l}{3}\left(\frac{1 - 1/n_l}{2/3}\right) =  \frac{n_l-1}{2}$. 
  
    Iteratively, we get $n_l \leq \frac{n_0+1}{2^{L}}-1$.
    \item $n_{l+1} \geq \frac{n_l}{3}\left(\frac{1-3^{-\log_3 n_l + 1}}{1-3^{-1}} \right) - \log_3 n_l = \frac{n_l-3}{2} - \log_3 n_l$.

    Since $n_l \leq 3n_{l+1}$, $n_{l+1}$ is larger than or equal to $\frac{n_l-3}{2} - \log_3 3n_{l+1}$.
    With this lower bound of $n_{l} \geq \frac{n_{l-1}-3}{2}-\log_3 n_{l-1}\geq \frac{n_{l-1}-3}{2}-1-\log_3 n_l$ that matches $n_l$ on the LHS and RHS,
    we can get the iterative formulation that:
    \begin{align*}
        n_{l}&\geq \frac{\frac{n_{l-2}-3}{2}-\log_3 n_{l-1}-1-3}{2} - 1 - \log_3 n_{l}\\
        &\geq \frac{\frac{n_{l-2}-3}{2}-\log_3 n_l-2-3}{2} - 1 - \log_3 n_{l}\\
        &=\frac{n_{l-2}}{2^2} - (1/2^2+1/2^1)\cdot3 -(2/2^1+1/2^0) - (1/2 + 1)\log_3 n_{l}\\
        &\geq \ldots \geq \frac{n_0}{2^l} - 3-2\sum_{j=1}^{\infty}j/2^j - 2\log_3 n_l\\
        &= \frac{n_0}{2^l}-7-2\log_3 n_l.
    \end{align*}
\end{itemize}

If $n_L > 3$, then $n_{L+1}\geq 1$ so it is contradiction to the definition of $L = \argmax\{l : \text{Box}(l,0)\text{ is not empty}\}$. i.e., $1\leq n_L \leq 3$.
Combining with the lower/upper bound of $n_L$ obtained above, 
$$
\frac{n_0}{2^L} - 7- 2\log_3 3 \leq n_L \leq \frac{n_0+1}{2^{L}}-1.
$$
By combining with $1\leq n_L \leq 3$, we get two inequalities:

\begin{equation*}
    \begin{aligned}
    1 \leq \frac{n_0+1}{2^{L}}-1 &\iff  2^{L+1} \leq n_0+1 &&\iff L \leq \log_2 (n_0 +1) -1,\\
    \frac{n_0}{2^L} -9\leq 3 &\iff \frac{n_0}{12} \leq 2^L &&\iff \log_2 n_0 - \log_2 12 \leq L.
\end{aligned}
\end{equation*}

Fix $T_l = r^l$ with $1<r<2$.
We define the theoretical budget $T_C$ spent on the candidate arms. The budget used in level $l$ for candidate arms is $T^l$, then $T^l$ equals to $n_l\times r^l$. With bounds of $n_l$, we get the bounds of the total budget $T_C = \sum_l n_lr^l$ for the candidate set as:
\begin{align*}
    T_C&\leq \sum_l \left(n_0+1)\cdot\left(\frac{r}{2}\right)^l-r^l\right)\\
    &\leq \left((n_0+1)\frac{2}{2-r} - \frac{r^{L+1}-1}{r-1}\right)\\
    &\underset{(*)}{\leq} \left(\frac{2(n_0+1)}{2-r} - \frac{(n_0+1)^{\log_2r}-1}{r-1}\right).
\end{align*}
The last inequality in $(*)$ is derived from $L+1\leq\log_2(n_0+1)$.
Furthermore,
\begin{align*}
    T_C&\geq \sum_l \left(\frac{n_0}{2^l}-9\right)r^l\\
    &\geq \left(\frac{2n_0}{2-r} - 9\frac{(n_0+1)^{\log_2r}-1}{r-1}\right).
\end{align*}
Given the total budget $T$, we can infer the number of initial candidate arms $n_0$ that can be effectively processed.
This implies that $T_C = \Theta(n_0)$. Since $n_0$ is the number of initial candidate arms, and the total budget $T$ must be sufficient to process these arms, it follows that $T_C = \Theta(T)$. Specifically, we can test at least
$$
c^L = \max\left\{n_0 : \left(\frac{2n_0}{2-r} - 9\frac{(n_0+1)^{\log_2r}-1}{r-1}\right) \leq T\right\} \geq \frac{T(2-r)}{2n_0}
$$
arms, and at most
$$
c^U = \min\left\{n_0 : T\leq \left(\frac{2(n_0+1)}{2-r} - \frac{(n_0+1)^{\log_2r}-1}{r-1}\right)\right\}
$$
arms.

Note that $\log_2r < 1$ so that the leading term of both $c^U, c^L$ is $n_0$. i.e.,
$T \sim n_0$.
Therefore, we can conclude that $c_0 = \Theta(T)$.

We showed that at least $c^L = const\cdot T$ arms had passed Stage $0$.
With $c^L$ arms, there are at least $\log_2c^L - \log_2 12$ levels, and if the arm is tested before $c^s$ such that
$$
2 \leq \text{const}\frac{c^s +1}{T}
$$
i.e., the arm tested before the $\text{const}\cdot T$th has the chance to be selected as the best arm.
\end{proof}

\begin{remark}
    Although B3 begins by pulling each of the first examined arms only once (i.e., Box($0,0$)), we can generalize this by starting with $t_0$ pulls in the initial examination. The budget allocated for pulling at level $l$ can then be scaled proportionally to $t_0 T_l$ pulls. 

    In this scenario, the total budget required to process $n_0$ initial arms through levels $0$ to $L$, denoted as $T^{1:L}(t_0)$, increases linearly with $t_0$, i.e., 
    \[
    T^{1:L}(t_0) = t_0 T^{1:L}(1).
    \]
    Consequently, for a fixed total budget $T$, the number of arms $n_0$ that the algorithm can vet scales inversely with $t_0$. Specifically, we obtain the relationship \( n_0 = \Theta(T / t_0) \), and the lower bound for the capacity $c^L$ (previously derived for $t_0=1$) is adjusted as:
    \[
    c^L \geq \frac{T(2 - r)}{2 t_0}.
    \]
    This confirms that even with a larger initial pull count $t_0$, the linear relationship between the sampling budget and the candidate screening capacity $c_0$ remains preserved, which is a key property of B3 in the data-poor regime.
\end{remark}

\begin{remark}
    As described in Section~\ref{sec:intuition}, approximately half of the arms placed in level $l$ progress to level $l+1$. If we apply SH with $T_l = r^l$ for some \( r \in (1,2) \), the maximum level generated is approximately \( L \approx \log_2 N \), and the required budget scales as \( T =\Theta( N )\). In other words, the budget $T$ allows for the complete examination of approximately $T$ arms in SH. 

    Our approach exhibits a similar order for the maximum level, satisfying \( L = \Theta(\log_2 T) \), while also ensuring that the number of arms with a chance to be the best arm scales as \( c_0 =\Theta(T)\), all without requiring prior knowledge of \( T \).
\end{remark}

\subsection{Proofs of Section~\ref{sec:misidentify}}

\subsubsection{Theorem~\ref{thm:sh}}\label{pf:sh}

Although Theorem~\ref{thm:sh} establishes an upper bound for SH under the specific
choice $T_l = r_0^l$, the same argument extends to a broader class of sequences
$\{T_l\}$.
To show this, we proceed as follows:

\begin{enumerate}
    \item We first analyze the cases where \( T_l = r^l \) with \( r_0 \in (1,2] \), as stated in Theorem~\ref{thm:sh}.
    \item Next, we derive an upper bound for \( T_l = (l+1)r^l \) for any \( r \in (1,2) \), following a similar approach as in the case \( T_l = r^l \) (Lemma~\ref{lemma:sh2}).
    \item Finally, we extend this technique to show that it applies to the broader range \( \Omega(l^{4+\eta}) \le T_l \le \bigO(2^l) \) (Lemma~\ref{lemma:sh3}).
\end{enumerate}

Before presenting the extended version of Theorem~\ref{thm:sh}, we introduce the following Lemma, which will be useful in our analysis.

\begin{lemma}[Error function bound \cite{erfc}]\label{lemma:erfc}
For any \( a > 0 \) and \( z > 0 \), the following upper bound holds:
\begin{equation*}
    \int_z^{\infty} \exp(-x^2)dx \leq \exp(- z^2).
\end{equation*}
\end{lemma}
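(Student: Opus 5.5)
The plan is a one-variable calculus argument on the difference of the two sides. The parameter $a$ in the statement plays no role, so we fix $z>0$ and define
\[
f(z) = \exp(-z^2) - \int_z^\infty \exp(-x^2)\,dx, \qquad z\ge 0 .
\]
The goal is to show $f(z)\ge 0$ for all $z>0$, using the sign of $f'$ together with the values of $f$ at the two ends of the range.

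First, the boundary values. At $z=0$ we have $f(0)=1-\sqrt{\pi}/2\approx 0.114>0$. As $z\to\infty$, both terms tend to $0$, so $f(z)\to 0$.

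Next, the derivative. By the fundamental theorem of calculus,
\[
f'(z) = -2z\exp(-z^2) + \exp(-z^2) = (1-2z)\exp(-z^2).
\]
Hence $f$ is nondecreasing on $[0,1/2]$ and strictly decreasing on $[1/2,\infty)$.

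We then split into two ranges.
\begin{itemize}
\item On $(0,1/2]$, monotonicity gives $f(z)\ge f(0)>0$.
\item On $[1/2,\infty)$, $f$ is strictly decreasing with limit $0$, so $f(z)>\lim_{w\to\infty}f(w)=0$.
\end{itemize}
Together these cover all $z>0$ and prove the inequality.

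For $z\ge 1/2$ there is also a direct route: since $x/z\ge 1$ on the range of integration,
\[
\int_z^\infty \exp(-x^2)\,dx \le \int_z^\infty \frac{x}{z}\exp(-x^2)\,dx = \frac{\exp(-z^2)}{2z} \le \exp(-z^2).
\]
We could use this in place of the limit argument on that range.

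The main obstacle is the small-$z$ regime, where no crude bound suffices. Bounding the integral by its full value $\sqrt{\pi}/2\approx 0.886$ does not work, because $\exp(-1/4)\approx 0.779$ is smaller than that near $z=1/2$. This is exactly why the derivative and monotonicity argument, anchored at $f(0)>0$, is the step we will rely on.
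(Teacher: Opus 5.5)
Your proof is correct: $f(0)=1-\sqrt{\pi}/2>0$, $f'(z)=(1-2z)e^{-z^2}$, and $f(z)\to 0$ as $z\to\infty$ together force $f>0$ on $(0,\infty)$. Your alternative bound $\int_z^\infty e^{-x^2}\,dx\le e^{-z^2}/(2z)$ for $z\ge 1/2$ is also valid.

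The paper gives no argument of its own here; it defers to \cite{erfc}. The fact presumably being invoked is the classical Chernoff-type bound $\mathrm{erfc}(z)\le e^{-z^2}$, i.e.\ $\int_z^\infty e^{-x^2}\,dx\le \frac{\sqrt{\pi}}{2}e^{-z^2}$. This is stronger than the stated lemma by the factor $\sqrt{\pi}/2<1$.

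Your calculus argument gives a self-contained proof, plus the sharper $e^{-z^2}/(2z)$ tail for large $z$. However, your claim that the small-$z$ regime is the main obstacle is an artifact of the crude bound you tried. Write $x=z+u$ and use $(z+u)^2\ge z^2+u^2$ for $z,u\ge 0$:
\[
\int_z^\infty e^{-x^2}\,dx=\int_0^\infty e^{-(z+u)^2}\,du\le e^{-z^2}\int_0^\infty e^{-u^2}\,du=\frac{\sqrt{\pi}}{2}\,e^{-z^2}.
\]
This handles every $z\ge 0$ at once, with no case split and a better constant.

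Covering all $z>0$, as both arguments do, genuinely matters here. The paper's modified version applies the bound at $\sqrt{a}\,z$ with $a=K\epsilon^2 t_0$, which need not be large.
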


Applying a change of variables, we obtain a more general bound.

\begin{lemma}[Modified Lemma~\ref{lemma:erfc}]
For any \( a > 0 \) and \( z > 0 \), we have:
\begin{equation*}
    \int_z^{\infty} \exp(-a x^2)dx \leq \frac{1}{\sqrt{a}}\exp(- az^2).
\end{equation*}
\end{lemma}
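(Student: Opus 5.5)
The inequality $\int_z^\infty \exp(-a x^2)\,dx \le a^{-1/2}\exp(-a z^2)$ follows from Lemma~\ref{lemma:erfc} by a linear change of variables. Since $a>0$, substitute $y = \sqrt{a}\,x$, so that $dx = dy/\sqrt{a}$. The lower limit $x=z$ becomes $y=\sqrt{a}\,z$, and the upper limit stays at $\infty$. This gives
\[
\int_z^{\infty} \exp(-a x^2)\,dx = \frac{1}{\sqrt{a}} \int_{\sqrt{a}z}^{\infty} \exp(-y^2)\,dy .
\]

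Next apply Lemma~\ref{lemma:erfc} with its threshold taken to be $\sqrt{a}\,z$, which is strictly positive because $a>0$ and $z>0$. This bounds the last integral by $\exp(-(\sqrt{a}z)^2) = \exp(-a z^2)$. Multiplying by the positive factor $1/\sqrt{a}$ preserves the inequality, which yields the claim.

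I do not expect any real obstacle, since the argument is purely a change of variables. The only points to check are that the substitution is monotone increasing, so the orientation of the integral is preserved, and that the rescaled threshold remains in the range $z>0$ where Lemma~\ref{lemma:erfc} applies.
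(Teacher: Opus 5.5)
Your proposal is correct and is essentially the paper's own proof. The paper also substitutes $u=\sqrt{a}\,x$ to get $\frac{1}{\sqrt{a}}\int_{\sqrt{a}z}^{\infty}\exp(-u^2)\,du$, applies Lemma~\ref{lemma:erfc} at the threshold $\sqrt{a}\,z$, and multiplies through by $1/\sqrt{a}$.
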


\begin{proof}
Using the substitution \( u = \sqrt{a} x \), which gives \( du = \sqrt{a} dx \), we transform the integral as follows:
\begin{equation*}
    \int_z^{\infty} \exp(-a x^2)dx = \frac{1}{\sqrt{a}} \int_{\sqrt{a}z}^{\infty} \exp(-u^2)du.
\end{equation*}

Applying the bound from the previous lemma at \( \sqrt{a}z \), we obtain:
\begin{equation*}
    \int_{\sqrt{a}z}^{\infty} \exp(-u^2)du \leq \exp(-a z^2).
\end{equation*}

Thus, we conclude:
\begin{equation*}
    \int_z^{\infty} \exp(-a x^2)dx \leq \frac{1}{\sqrt{a}} \exp(-a z^2).
\end{equation*}
\end{proof}

\begin{theorem}[Theorem~\ref{thm:sh}]
Consider the SH algorithm with a budget schedule \(T_l = \lceil r_0^l \rceil\) for \(r_0 \in (1,2]\). The misidentification probability is bounded as follows:
\[
\p(\mu_1 - \mu_{a_T} > \epsilon) \leq 
\begin{cases}
\exp\left\{-\Omega\left(\frac{T\epsilon^2}{N}\right)\right\}, & \text{if } r_0 \in (1,2), \\
\exp\left\{-\Omega\left(\frac{T\epsilon^2}{N\log_2 N}\right)\right\}, & \text{if } r_0 = 2.
\end{cases}
\]
\end{theorem}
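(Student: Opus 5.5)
We work with the SH in which every surviving arm at level $l$ is pulled $t_0 T_l$ times, with $T_l=\lceil r_0^l\rceil$, and the bottom half is discarded. There are $L=\lceil\log_2 N\rceil$ levels and $N/2^l$ survivors at level $l$. The plan has three steps: fix the base pull count $t_0$ from the budget, bound the failure probability by a sum over levels using the half-deviation lemma of \citet{sh, bsh}, and show that this sum stays of order its first term.

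\textbf{Step 1 (base pull count).} The budget constraint reads $T=t_0\sum_{l=0}^{L}(N/2^l)\lceil r_0^l\rceil$. If $r_0\in(1,2)$, the sum is a geometric series with ratio $r_0/2<1$, so it is at most $\frac{2N}{2-r_0}+\bigO(N)$. Hence $t_0=\Theta(T/N)$. If $r_0=2$, every level contributes $N$, so $t_0=\Theta(T/(N\log_2 N))$. This is exactly where the two regimes in the statement separate.

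\textbf{Step 2 (union over levels).} Track an $\epsilon$-best reference: let $a^{(l)}$ be the best surviving arm at level $l$.
\begin{itemize}
\item The output fails to be $\epsilon$-best only if the top mean drops at some level. We allocate slack $\epsilon_l$ to level $l$ with $\sum_l\epsilon_l\le\epsilon$; a convenient choice is $\epsilon_l=\epsilon(1-\gamma)\gamma^l$ for some $\gamma<1$.
\item At level $l$, the top mean drops by more than $\epsilon_l$ only if at least half of the survivors that are $\epsilon_l$-worse than $a^{(l)}$ overtake it empirically.
\item Part~2 of the cited theorem of \citet{sh, bsh} bounds this probability by $3\exp(-\epsilon_l^2 t_0T_l/8)$.
\end{itemize}
This gives
\[
\p(\mu_1-\mu_{a_T}>\epsilon)\le 3\sum_{l=0}^{L}\exp\Bigl(-c\,\epsilon^2 t_0\,\gamma^{2l}r_0^{l}\Bigr).
\]
Choose $\gamma$ with $\gamma^2r_0>1$, for instance $\gamma=r_0^{-1/4}$. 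Then the exponent grows geometrically in $l$, at rate $q=\gamma^2r_0=r_0^{1/2}>1$.

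\textbf{Step 3 (removing the prefactor).} This is the main obstacle. A naive union bound gives a prefactor of $L$, which is the $\log_2 N$ loss present in \citet{bsh}. Instead, bound
\[
\sum_{l\ge0}e^{-a q^l}\le e^{-a}+\int_0^\infty e^{-aq^x}\,dx,
\]
using the summation-by-integral inequality of Section~\ref{sec:some_bdd}. Substituting $u=q^x$ turns the integral into $\frac{1}{\ln q}\int_1^\infty e^{-au}u^{-1}\,du\le \frac{e^{-a}}{a\ln q}$. So the whole sum is $\bigO(e^{-a})$ whenever $a=c\,\epsilon^2t_0$ is bounded below by a constant. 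When $a$ is small, the claimed bound $\exp\{-\Omega(a)\}$ is at least a constant, and the statement is trivial after adjusting constants. The modified error-function lemma can serve in place of this integral if one prefers to pass to a Gaussian-type exponent.

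Substituting $t_0$ from Step 1 into $\exp\{-\Omega(\epsilon^2t_0)\}$ gives $\exp\{-\Omega(T\epsilon^2/N)\}$ for $r_0\in(1,2)$ and $\exp\{-\Omega(T\epsilon^2/(N\log_2N))\}$ for $r_0=2$.

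The delicate points are the following:
\begin{itemize}
\item The ceilings $\lceil r_0^l\rceil$ only help in Step 2, since they increase $T_l$, and cost only an additive $\bigO(N)$ in Step 1.
\item The choice of $\epsilon_l$ must keep $\sum_l\epsilon_l\le\epsilon$ while the exponent $\epsilon_l^2T_l$ still grows geometrically. This requires $\gamma^2r_0>1$, which is possible for every $r_0>1$.
\end{itemize}
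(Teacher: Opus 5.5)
Your proposal is correct and follows essentially the paper's route. Like the paper, you use a telescoping union bound over levels with slacks summing to at most $\epsilon$, the half-deviation bound at each level, a summation-by-integral step that avoids the $\log_2 N$ prefactor, and the budget constraint to get $t_0=\Theta(T/N)$ for $r_0\in(1,2)$ versus $\Theta(T/(N\log_2 N))$ for $r_0=2$. The only real difference is the slack sequence: the paper takes $p_l=(1-r_0^{-1/2})^2(l+1)r_0^{-l/2}$, so the exponent grows like $(l+1)^2$ and it uses the Gaussian-tail lemma, leaving a $1/\sqrt{\epsilon^2 t_0}$ prefactor, whereas your geometric $\epsilon_l$ makes the exponent grow like $r_0^{l/2}$ and uses an exponential-integral bound, leaving a $1/(\epsilon^2 t_0)$ prefactor; in both cases that prefactor is absorbed only when $\epsilon^2 t_0$ is bounded below, i.e.\ under the usual up-to-constants reading of $\exp\{-\Omega(\cdot)\}$.
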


\begin{proof}
Before proving the Theorem~\ref{thm:sh}, we formalize the SH algorithm as follow:
\begin{algorithm}[H]
   \caption{Sequential Halving (SH, \citet{sh})}
   \label{alg:sequential_halving}
\begin{algorithmic}
   \STATE \textbf{Input:} $N$ (Number of arms), $T$ (Total budget), $T_l$ (Budget schedule per level)
   \STATE Set $t_0 \gets \max\{t : t \cdot \sum_{l=0}^{\lfloor \log_2 N \rfloor} T_l / 2^l \leq T / N \}$
   \STATE Initialize $C_0 \gets [N]$ (Set of all arms)
   \FOR{$l = 0, \ldots, \lfloor \log_2 N \rfloor$}
       \STATE Pull each arm $i \in C_l$ for $t_0 \cdot T_l$ times
       \STATE $C_{l+1} \gets \{\text{Top } \lceil |C_l| / 2 \rceil \text{ arms in } C_l \}$
   \ENDFOR
   \STATE \textbf{Return:} The arm in $C_{\lfloor \log_2 N \rfloor + 1}$
\end{algorithmic}
\end{algorithm}

As described in Algorithm~\ref{alg:sequential_halving}, we denote $t_0$ as the initial budget that satisfies
$$
t_0 = \argmax\{t: tN(T_0 + T_1/2 + \ldots+T_L/2^L)\leq T\}.
$$

Suppose we establish the following two properties:

\begin{enumerate}
    \item For any sequence \((p_l)_{l=0}^L\) such that \(\sum_{l=0}^{L} p_l \leq 1\) and \(p_l \geq 0\), the event \(\mu_1 - \mu_{a_T} > \epsilon\) is a subset of the union of events
    \[
    \bigcup_{l=0}^L \left(\mu_1(C_l) - \mu_1(C_{l+1}) > p_l \epsilon\right).
    \]
    \item The probability of each individual event satisfies the bound
    \[
    \mathbb{P}(\mu_1(C_l) - \mu_1(C_{l+1}) > p_l \epsilon) \leq 3\exp(-K\epsilon^2 t_0 (l+1)^2).
    \]

    \item the initial budget $t_0 =const\cdot T/N$ when $r\in(0,1)$ and $t_0 = const\cdot T/(N\log_2N)$ when $r = 2$.
\end{enumerate}

Using Bounding Summation by Integral (as discussed in Section~\ref{sec:some_bdd}) and applying Lemma~\ref{lemma:erfc}, we derive:

\[
\mathbb{P}(\mu_1 - \mu_{a_T} > \epsilon) \leq \exp(-\Omega(K\epsilon^2 t_0)).
\]

Applying Bounding Summation by Integral introduces the following inequality:
\begin{equation*}
    \begin{aligned}
        &\suml \exp(-K\epsilon^2 t_0 (l+1)^2) \\
        &\leq \sum_{l=1}^L \exp(-K\epsilon^2t_0l^2)\\
        &\leq \int_1^\infty \exp(-K\epsilon^2 t_0 x^2) dx + \exp(-K\epsilon^2 t_0),
    \end{aligned}
\end{equation*}

and Lemma~\ref{lemma:erfc} bounds the integral part in above equation by:

\[
\int_1^\infty \exp(-K\epsilon^2 t_0 x^2) dx \leq \frac{1}{\sqrt{K\epsilon^2 t_0}} \exp(-K\epsilon^2 t_0).
\]

Thus, 
$$
\p(\mu_1 -\mu_{a_T} >\epsilon)
\leq 3\left(\frac{1}{\sqrt{K\epsilon^2 t_0}} +1\right)\cdot\exp(-K\epsilon^2 t_0).
$$

Since the probability \(\mathbb{P}(\mu_1 - \mu_{a_T} > \epsilon)\) is equivalent to \(\mathbb{P}(\mu_1 - \mu_{a_T} \geq \Delta)\), where \(\Delta\) is the smallest gap between \(\mu_1\) and any \(\mu_i\) that is strictly greater than zero, we can refine the bound. Specifically, if \(\epsilon < \Delta\), we can replace \(\epsilon\) in the probability expression with \(\Delta\), yielding:

\[
\mathbb{P}(\mu_1 - \mu_{a_T} > \epsilon) \leq \frac{1}{\sqrt{K(\epsilon \vee \Delta)^2 t_0}} \exp(-K\epsilon^2 t_0) \leq \exp\{-\Omega\left(K\epsilon^2t_0\right)\}.
\]

As we substitute $t_0$ as $T/N$ or $T/(N\log_2N)$ corresponding to $r$, we get the desired result.

To finalize the argument, it is necessary to rigorously establish the three key properties initially assumed.

\textbf{1. Subset Inclusion}

Let \((p_l)_{l=0}^{L}\) be a sequence satisfying \(\sum_{l=0}^{L} p_l \leq 1\). Suppose that at each level, at least one \( p_l \epsilon \)-best arm is promoted to the next level, ensuring that the following condition holds:

\begin{equation*}
    \mu_1(C_l) - \mu_1(C_{l+1}) \leq p_l \epsilon.
\end{equation*}

Since the estimated best arm has a true mean of \(\mu_1(C_{L+1})\), we can express the regret as:

\begin{equation*}
    \mu_1 - \mu_{a_T} = \mu_1(C_0) - \mu_1(C_{L+1}).
\end{equation*}

By applying the telescoping sum property:

\begin{equation*}
    \mu_1(C_0) - \mu_1(C_{L+1}) = \sum_{l=0}^{L} (\mu_1(C_l) - \mu_1(C_{l+1})).
\end{equation*}

Using the given condition, we obtain:

\begin{equation*}
    \sum_{l=0}^{L} (\mu_1(C_l) - \mu_1(C_{l+1})) \leq \sum_{l=0}^{L} p_l \epsilon \leq \epsilon.
\end{equation*}

Thus, the event

\begin{equation*}
    \bigcap_{l=0}^{L} (\mu_1(C_l) - \mu_1(C_{l+1}) \leq p_l \epsilon)
\end{equation*}

is a subset of the event

\begin{equation*}
    (\mu_1 - \mu_{a_T} \leq \epsilon).
\end{equation*}

Applying \textit{De Morgan's law}, we obtain the following relation:

\begin{equation}\label{eqn:subset}
    (\mu_1 - \mu_{a_T} > \epsilon) \subset \bigcup_{l=0}^{L} (\mu_1(C_l) - \mu_1(C_{l+1}) > p_l \epsilon).
\end{equation}

\textbf{Probability Bound:}

The relation in~\eqref{eqn:subset} implies that

\[
\mathbb{P}\left(\mu_1 - \mu_{a_T} > \epsilon\right) \leq \suml \mathbb{P}\left(\mu_1(C_l) - \mu_1(C_{l+1}) > \epsilon \cdot p_l\right).
\]

Applying the result of  bound, we get:
\[
\mathbb{P}\left(\mu_1 - \mu_{a_T} > \epsilon\right) \leq 3 \suml \exp\left(-\frac{\epsilon^2}{8} p_l^2 t_0T_l\right).
\]

When \( T_l = r^l \), we set:

\[
p_l = \left(1 - r^{-0.5}\right)^2 \cdot (l + 1)\cdot r^{-0.5l}.
\]

Then $\suminf p_l = 1$ and $p_l^2r^l = (1 -r^{-0.5})^4(l+1)^2 r^{-l+l} $.

Hence, 

$$
\p(\mu_1(C_l) - \mu_1(C_{l+1}) > p_l\epsilon)
\leq 3\exp(-K\epsilon^2t_0(l+1)^2).
$$

\textbf{3. The order of $t_0$}

If $r \in (1,2)$ for $T_l = r^l$, then the required budget is
\begin{align*}
   N(\lceil t_0\rceil +\lceil t_0r\rceil/2 +\ldots+\lceil t_0r^L\rceil/2^L) 
    &\leq N((t_0+1) +(t_0r+1)/2 +\ldots+(t_0r^L+1)/2^L) \\
    &\leq Nt_0\left(\frac{1-(r/2)^L}{1-r/2}\right)+N\left(2 -1/2^L\right)\\
    &\leq N(t_0+2)/(1-r/2) 
\end{align*}

and $T_l = (l+1)r^l$, the required budget is less than or equal to
$$
N(t_0+2)/(1-r/2)^2.
$$

To ensure the budget constraint is satisfied, one of the following conditions must hold:

1. If \( T_l = r^l \), the constraint is:
\[
N \left(\lceil t_0\rceil + \frac{\lceil t_0 r \rceil}{2} + \frac{\lceil t_0 r^2 \rceil}{2^2} + \cdots + \frac{\lceil t_0 r^L \rceil}{2^L} \right) \leq T,
\]

2. If \( T_l = (l+1)r^l \), the constraint is:
\[
N \left(\lceil t_0 \rceil+ \frac{\lceil 2 \cdot t_0 r \rceil}{2} + \frac{\lceil 3 \cdot t_0 r^2 \rceil}{2^2} + \cdots + \frac{\lceil (L+1) \cdot t_0 r^L \rceil}{2^L} \right) \leq T.
\]

In either case, it is sufficient to set:
\[
t_0 = const\cdot \frac{T}{N}.
\]

\end{proof}

\begin{lemma}\label{lemma:sh2}
Consider the SH algorithm with a budget schedule \(T_l = \lceil (l+1)r^l \rceil\) for \(r_0 \in (1,2)\). The misidentification probability is bounded by:
\[
\p(\mu_1 - \mu_{a_T} > \epsilon) \leq 
\exp\left\{-\Omega\left(\frac{T\epsilon^2}{N}\right)\right\}.
\]
\end{lemma}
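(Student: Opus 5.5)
We would reuse the three-step template from the proof of Theorem~\ref{thm:sh}, changing only the choice of the slack sequence $(p_l)$ and the budget computation.

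\textbf{Reduction.} The subset inclusion~\eqref{eqn:subset} holds for any nonnegative $(p_l)$ with $\suml p_l\le 1$, since its proof never uses the form of $T_l$. Combined with the halving bound of \citet{sh,bsh}, this gives
\[
\p(\mu_1-\mu_{a_T}>\epsilon)\le 3\suml \exp\!\left(-\tfrac{\epsilon^2}{8}p_l^2 t_0 T_l\right).
\]

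\textbf{Choice of $(p_l)$.} With $T_l=(l+1)r^l$ we take
\[
p_l=c_r\,(l+1)^{1/2}\,r^{-l/2},\qquad c_r^{-1}=\suminf (l+1)^{1/2}r^{-l/2}.
\]
The constant $c_r$ is finite because $r>1$, so $\suminf p_l=1$. Then
\[
p_l^2T_l=c_r^2(l+1)^2,
\]
so every summand is at most $3\exp(-K\epsilon^2t_0(l+1)^2)$ with $K=c_r^2/8$. This is exactly the form reached in the proof of Theorem~\ref{thm:sh}. Alternatively, $p_l=(1-r^{-1/2})^2(l+1)r^{-l/2}$ also works and gives the even larger exponent $(l+1)^3$. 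The tail sum is then handled verbatim: bound the sum by an integral (Section~\ref{sec:some_bdd}) and apply the modified Lemma~\ref{lemma:erfc}. This yields
\[
\p(\mu_1-\mu_{a_T}>\epsilon)\le 3\bigl(1+(K\epsilon^2t_0)^{-1/2}\bigr)e^{-K\epsilon^2t_0}.
\]
The polynomial prefactor is absorbed as before, by replacing $\epsilon$ with $\epsilon\vee\Delta$, giving $\exp\{-\Omega(\epsilon^2t_0)\}$.

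\textbf{Order of $t_0$.} It remains to show $t_0=\Theta(T/N)$. Using $\lceil x\rceil\le x+1$, the required budget is at most
\[
N\Bigl(t_0\sum_{l}(l+1)(r/2)^l+\sum_l 2^{-l}\Bigr)\le N\bigl(t_0(1-r/2)^{-2}+2\bigr).
\]
Here we use the power-series bound $\sum(l+1)x^l\le(1-x)^{-2}$ with $x=r/2<1$. Hence $t_0=\lfloor (T/N-2)(1-r/2)^2\rfloor$ is admissible, and $t_0=\Omega(T/N)$ once $T\gtrsim N$. Substituting gives the claimed $\exp\{-\Omega(T\epsilon^2/N)\}$.

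\textbf{Main obstacle.} The delicate point is making sure the extra factor $(l+1)$ in $T_l$ is harmless on both sides. On the budget side it only inflates the geometric constant from $(1-r/2)^{-1}$ to $(1-r/2)^{-2}$, and this stays finite only because $r<2$; at $r=2$ the sum would diverge like $(\log_2 N)^2$. On the error side the extra factor only helps. We would also check the small-$t_0$ regime, where $T/N$ is $O(1)$. There the bound is trivial after adjusting the constant in $\Omega$, because a probability is always at most $1$.
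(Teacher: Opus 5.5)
Your proposal is correct and follows essentially the same route as the paper: you reuse the subset-inclusion and halving bound from Theorem~\ref{thm:sh}, re-choose $(p_l)$ so that $p_l^2T_l$ is non-decreasing in $l$, and use $\sum_l (l+1)(r/2)^l\le(1-r/2)^{-2}$ to get $t_0=\Theta(T/N)$. The only difference is cosmetic: the paper takes the simpler geometric choice $p_l=(1-r^{-1/2})r^{-l/2}$, which gives the linear exponent $K_2\epsilon^2t_0(l+1)$ with $K_2=(1-r^{-1/2})^2/8$, whereas your $p_l\propto(l+1)^{1/2}r^{-l/2}$ reproduces the quadratic exponent, so the Gaussian-tail step of Theorem~\ref{thm:sh} applies verbatim (and your $t_0=\lfloor (T/N-2)(1-r/2)^2\rfloor$ is the correct form of the paper's expression $\frac{T}{N}(1-r/2)^2-2$).
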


\begin{proof}
If \( T_l = (l+1)r^l \), we can get the similar upper bound
\[
3\left(\frac{1}{\sqrt{K_2\epsilon^2t_0}}+1\right)\cdot\exp\left\{-K_2\epsilon^2t_0\right\}
\]
where \( K_2 = \frac{(1 - r^{-0.5})^2}{8} \)

by setting

\[
p_l = \left(1 - r^{-0.5}\right) r^{-0.5l}.
\]

Here, $t_0 = \frac{T}{N}(1-r/2)^2 - 2$ when $T_l = (l+1)r^l$.

\end{proof}

\begin{lemma}\label{lemma:sh3}
    Suppose we apply SH with budget schedule $\Omega(l^{4+\eta})\le T_l \le \bigO(r^l)$ for some $1<r<2$ and $\eta>0$.
    Then 
    $$\p(\mu_1 - \mu_{a_T}>\epsilon) \leq \exp\left\{-\Omega\left(\frac{T\epsilon^2}{N}\right)\right\}$$.
\end{lemma}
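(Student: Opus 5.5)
We reuse the three-step template from the proof of Theorem~\ref{thm:sh}, needing only a new choice of the weights \((p_l)\) and a new estimate of \(t_0\).

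\textbf{Step 1: subset inclusion.} For any nonnegative \((p_l)_{l=0}^L\) with \(\sum_l p_l\le 1\), the telescoping argument behind \eqref{eqn:subset} gives
\[
\p(\mu_1-\mu_{a_T}>\epsilon)\le 3\sum_{l=0}^{L}\exp\left(-\frac{\epsilon^2}{8}p_l^2 t_0 T_l\right).
\]
This step does not depend on the schedule \(T_l\).

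\textbf{Step 2: choice of weights.} Write \(c\,l^{4+\eta}\le T_l\le C r^l\) for \(l\ge 1\), and use \(T_0\ge 1\) at \(l=0\). Set
\[
p_l=\kappa\,(l+1)^{-(1+\eta/2)}, \qquad \kappa=\Big(\sum_{l\ge0}(l+1)^{-(1+\eta/2)}\Big)^{-1},
\]
so that the weights are summable and \(\sum_l p_l= 1\). Then
\[
p_l^2T_l\ \ge\ c'\,(l+1)^{-(2+\eta)}(l+1)^{4+\eta}=c'(l+1)^2 .
\]
This reproduces the key property \(\p(\mu_1(C_l)-\mu_1(C_{l+1})>p_l\epsilon)\le 3\exp(-K\epsilon^2t_0(l+1)^2)\) with \(K=c'/8\) depending only on \(c\) and \(\eta\). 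The exponent \(4+\eta\) is exactly what is needed: the weights must beat \(l^{-1}\) to be summable, while \(p_l^2T_l\) must still grow at least like \((l+1)^2\).

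\textbf{Step 3: order of \(t_0\).} The budget constraint is
\[
N\sum_{l\le L}\lceil t_0T_l\rceil/2^l\le N\Big(t_0C\sum_l (r/2)^l+2\Big)\le N\big(t_0C/(1-r/2)+2\big).
\]
Because \(r<2\), the upper envelope \(T_l=\bigO(r^l)\) makes this geometric sum converge, so \(t_0=\mathrm{const}\cdot T/N\) is admissible, provided \(T\ge c''N\); otherwise the bound is trivial after adjusting constants.

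\textbf{Step 4: summation.} Summing the per-level bounds with the integral comparison and Lemma~\ref{lemma:erfc}, exactly as in the proof of Theorem~\ref{thm:sh}, gives
\[
3\Big(1+(K\epsilon^2t_0)^{-1/2}\Big)e^{-K\epsilon^2t_0}.
\]
Replacing \(\epsilon\) by \(\epsilon\vee\Delta\) absorbs the prefactor, which yields \(\exp\{-\Omega(T\epsilon^2/N)\}\).

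\textbf{Main obstacle.} The delicate point is Step 2, matching the polynomial lower envelope against summable weights while keeping the per-level exponent at least quadratic in \(l\). A secondary issue is that the lower bound \(T_l\ge c\,l^{4+\eta}\) holds only for large \(l\). The finitely many early levels can be handled by shrinking \(K\), using \(T_l\ge1\) there.
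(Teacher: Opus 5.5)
Your proposal is correct and follows essentially the same route as the paper: reuse the telescoping union bound from Theorem~\ref{thm:sh}, choose polynomially decaying summable weights so that $p_l^2T_l\gtrsim(l+1)^2$ (the paper uses exponent $1+\eta/3$, you use $1+\eta/2$), and use the envelope $T_l=\bigO(r^l)$ with $r<2$ to get $t_0=\Theta(T/N)$. Your version is tidier, since you normalize the weights correctly and index by $l+1$ (the paper's $p_l=\zeta(1+\eta/3)/l^{1+\eta/3}$ has the $\zeta$ factor inverted and is undefined at $l=0$); your final absorption of the prefactor via $\epsilon\vee\Delta$ is exactly as informal as the paper's.
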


\begin{proof}
    As we have shown in Theorem~\ref{thm:sh},
    for $p_l$ such that $\suml p_l \leq 1$,
    \[
    \mathbb{P}\left(\mu_1 - \mu_{a_T} > \epsilon\right) \leq \suml \mathbb{P}\left(\mu_1(C_l) - \mu_1(C_{l+1}) > \epsilon \cdot p_l\right).
    \]
    As we set $p_l = \frac{\zeta(1+\eta/3)}{l^{1+\eta/3}}$ where $\zeta(x = \sum_n n^{-x}$, we have upper bound of $\mathbb{P}\left(\mu_1(C_l) - \mu_1(C_{l+1}) > \epsilon \cdot p_l\right)$ as
    $$
    3\exp\left\{-\frac{\epsilon^2}{8}p_l^2t_0T_l\right\}
    = 3\exp\left\{ -\frac{\epsilon^2\zeta(1+\eta/3)^2}{8}l^{2+2\eta/3}t_0T_l\right\}
    \leq 3\exp\left\{ -\frac{\epsilon^2\zeta(1+\eta/3)^2}{8}t_0l^2\right\}.
    $$
    
    By considering \( l \in \{0,1,\ldots,L\} \) and substituting \( t_0 \) with \( N \) when \( T_l \) is strictly less than \( 2^l \) (up to a constant factor), or with \( N \log_2 N \) when \( T_l = \Theta(2^l) \), we obtain a similar upper bound for \( T_l = r^l \) with \( r \in (1,2) \).

\end{proof}

\subsubsection{Proposition~\ref{prop:sh_optimal}}\label{pf:sh_optimal}
As described in Section~\ref{pf:sh}, we provide the optimal budget allocation rules for both cases: $T_l = r^l$ and $T_l = (l+1)r^l$.
Here, the proof requires the exact formula for the upper bound as described in Theorem~\ref{thm:sh}, rather than its big-O approximation.
\begin{proposition}
Suppose \( T = k N \) for some constant $k$ such that \( t_0T_0 \geq 1 \), where 
$$
t_0 = \argmax\{t : tN\cdot(T_0 + T_1/2 + \ldots+T_L/2^L)\leq T\}.
$$
\begin{enumerate}
    \item \textbf{For the budget schedule \( T_l = r^l \) with \( r \in (1, 2) \):}
    
    Setting \( r_0 \) as the solution to the equation \( r_0 + r_0^{1.5} - 4 = 0 \) (with \(r_0 \approx 1.728\)) achieves a near-optimal upper bound.

    \item \textbf{For the budget schedule \( T_l = (l+1) r^l \) with \( r \in (1, 2) \):}
    Setting \( r_0 \) as the solution to the equation \( r_0 - 2 r_0^{1.5} + 2 = 0 \) (with \(r_0 \approx 1.434\)) achieves a near-optimal upper bound.
\end{enumerate}
\end{proposition}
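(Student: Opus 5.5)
The plan is to write the upper bound of Theorem~\ref{thm:sh} with its constants kept explicit, and then optimize the resulting exponent over $r$. From the proof of Theorem~\ref{thm:sh}, the bound has the form
\[
3\left(\frac{1}{\sqrt{K\epsilon^2 t_0}}+1\right)\exp\{-K\epsilon^2 t_0\}.
\]
For the dominant exponential term, the exponent $K t_0$ depends on $r$ in two places.

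The first is through $K$. For $T_l=r^l$ we use $p_l=(1-r^{-0.5})^2(l+1)r^{-0.5l}$, so $K\propto(1-r^{-1/2})^4$. For $T_l=(l+1)r^l$ we use $p_l=(1-r^{-0.5})r^{-0.5l}$, so $K\propto(1-r^{-1/2})^2$.

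The second is through $t_0$, via the budget constraint $T=kN$. We have $t_0\approx (T/N)\big/\sum_l T_l2^{-l}$. For $T_l=r^l$ this gives $t_0\approx k(1-r/2)$. For $T_l=(l+1)r^l$ it gives $t_0\approx k(1-r/2)^2$. The hypothesis $t_0T_0\ge1$ lets us drop the ceiling and the additive corrections up to constants, treating $L\to\infty$.

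For the first schedule, the steps are as follows. Minimizing the bound is equivalent (up to the polynomial prefactor, which is monotone in the same quantity) to maximizing
\[
f(r)=(1-r^{-1/2})^{4}(1-r/2).
\]
Take logarithms and differentiate:
\[
\frac{4\cdot\tfrac12 r^{-3/2}}{1-r^{-1/2}}-\frac{1/2}{1-r/2}=0.
\]
Clearing denominators gives $2r^{-3/2}(2-r)=\tfrac12(1-r^{-1/2})$. Multiplying by $2r^{3/2}$ gives $4(2-r)=r^{3/2}-r$, which simplifies to $r+r^{1.5}=\ldots$. I expect the constant to come out to the stated $4$, i.e. $r_0+r_0^{1.5}=4$, once the exact power of $(1-r^{-1/2})$ and the form of $t_0$ are fixed. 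Then check that $f\to0$ at both endpoints $r\to1^+$ and $r\to2^-$. This shows the unique interior critical point is the maximizer, numerically $r_0\approx1.728$.

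For the second schedule, repeat the same steps with
\[
g(r)=(1-r^{-1/2})^{2}(1-r/2)^{2}.
\]
Its log-derivative condition, after multiplying by $r^{3/2}$, reduces to a relation of the form $r-2r^{1.5}+2=0$, giving $r_0\approx1.434$.

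The main obstacle is bookkeeping rather than ideas: the optimizing equation is sensitive to the exact exponent on $(1-r^{-1/2})$ and to whether $t_0$ carries $(1-r/2)$ or $(1-r/2)^2$. I will first fix these by rederiving $\sum_l p_l=1$ and $p_l^2T_l$ precisely, and the geometric sums $\sum_l r^l2^{-l}=1/(1-r/2)$ and $\sum_l(l+1)r^l2^{-l}=1/(1-r/2)^2$. Only then will I differentiate, reconciling any constant discrepancy against the target equations. I will also justify calling the result ``near-optimal'': we optimize the dominant exponent while ignoring the $\epsilon$-dependent prefactor and the ceiling effects, which are lower order when $t_0T_0\ge1$ and $k$ is large.
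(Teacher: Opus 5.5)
Your setup is the same as the paper's. The bound $3\bigl(1/\sqrt{K\epsilon^2t_0}+1\bigr)e^{-K\epsilon^2t_0}$ is decreasing in $Kt_0$. You substitute $t_0\propto(1-r/2)$ (resp.\ $(1-r/2)^2$) into $K\propto(1-r^{-1/2})^4$ (resp.\ $(1-r^{-1/2})^2$), and the paper's Lagrangian reduces to exactly your $\ln f$ (resp.\ $\ln g$) once $\partial_{t_0}\mathcal{L}=0$ forces $\lambda=-1$. Your second case does reduce correctly: the log-derivative condition is $r^{-3/2}(2-r)=2(1-r^{-1/2})$, i.e.\ $2r^{3/2}-r-2=0$. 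Your endpoint check ($f,g\to 0$ at $r=1,2$), together with the strict monotonicity of $r+r^{3/2}$ and $2r^{3/2}-r$ on $(1,2)$, gives uniqueness and maximality of the critical point. That is slightly more than the paper, which only writes first-order conditions.

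The first case, as written, does not reach the stated equation because of a factor-of-$2$ slip when clearing denominators. Since $1-r/2=(2-r)/2$, cross-multiplying
\[
\frac{2r^{-3/2}}{1-r^{-1/2}}=\frac{1/2}{1-r/2}
\]
gives $r^{-3/2}(2-r)=\tfrac12(1-r^{-1/2})$, not $2r^{-3/2}(2-r)=\tfrac12(1-r^{-1/2})$. Your version leads to $4(2-r)=r^{3/2}-r$, i.e.\ $r^{3/2}+3r=8$ with root near $1.84$. That does not ``simplify to $r+r^{1.5}=\ldots$''. With the correct factor, multiplying by $2r^{3/2}$ gives $2(2-r)=r^{3/2}-r$, which is exactly $r+r^{3/2}=4$. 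So the mismatch you anticipated does not come from the exponent on $(1-r^{-1/2})$ or the form of $t_0$. Both are already right: $p_l^2r^l=(1-r^{-1/2})^4(l+1)^2$ and $\sum_l(r/2)^l=1/(1-r/2)$. ``Reconciling'' by altering them would break an otherwise correct derivation. Fix the factor of $2$ and your argument matches the paper's, at the same heuristic level for ceilings and $L\to\infty$.
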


\begin{proof}[proof of $T_l = r^l$]

    Note that the bound is a decreasing function of $K = t_0(1-r^{-0.5})^4$, but $(r, t_0)$ must satisfy the following budget constraint:
    \begin{align*}
    \frac{1}{2^0}\cdot \lceil t_0r^0\rceil + \frac{1}{2^1}\lceil t_0r^1\rceil +\ldots+\frac{1}{2^{L}}\lceil t_0r^{L}\rceil 
        &\leq const.
    \end{align*}
    Since $\lceil x\rceil\leq x+1$, the required budget is less than 
    \begin{align}\label{eqn:ub}
        \frac{1}{2^0}\cdot (t_0r^0+1) + \frac{1}{2^1} (t_0r^1+1) +\ldots+\frac{1}{2^{L}}(t_0r^{L}+1) = t_0\frac{1-(r/2)^L }{1-r/2} + 2-1/2^{L}.
    \end{align}
    This implies the constraint that
    $$
    t_0/(1-r/2) \leq const.
    $$
    Using Lagrangian Multiplier technique, we can define
    \begin{align*}
        \mathcal{L}(r, t_0, \lambda) = 4\ln(1-r^{-0.5})+\ln t_0 + \lambda\left(const +\ln t_0 -\ln(1-r/2)\right).
    \end{align*}
    Here, $4\ln(1-r^{-0.5})+\ln t_0$ is the $\log K$.

    As differentiating $\mathcal{L}$, we get
    \begin{equation*}
    \begin{aligned}
        \frac{\partial \mathcal{L}}{\partial t_0} &= 0 
        &&\iff \frac{1}{t_0} + \lambda \frac{1}{t_0} = 0 
        &&\iff \lambda = -1, \\[8pt]
        \frac{\partial \mathcal{L}}{\partial r} &= 0 
        &&\iff -r - r^{1.5} + 4 = 0 
        &&\iff r_0 \approx 1.728, \\[8pt]
        \frac{\partial \mathcal{L}}{\partial \lambda} &= 0 
        &&\iff \frac{t_0}{1 - r/2} = const 
        &&\iff t_0 = const \cdot \left(1 - \frac{r_0}{2}\right).
    \end{aligned}
    \end{equation*}
    Since $r$ does not depend on $t_0$ or $T$, we can set $\hat r$ first and then $t_0$ corresponding to the constraint with selected $\hat r$.
\end{proof}

\begin{proof}[proof of $T_l = (l+1)r^l$]

    Similar to $T_l = r^l$, we should maximize $g_2(r,t_0) = 2\ln(1-r^{-0.5})+\ln t_0$ constrained to the budget bound
    $$
    t_0/(1-r/2) \leq const.
    $$
    Applying the same argument to $g_2$ and budget bound, we can define $\mathcal{L}(r,t_0,\lambda)$ as 
    \begin{align*}
        \mathcal{L}(r, t_0, \lambda) = 2\ln(1-r^{-0.5})+\ln t_0 + \lambda\left(const +\ln t_0 -2\ln(1-r/2)\right).
    \end{align*}
    and obtain $\hat r$ as a solution of 
    \begin{equation*}
    \begin{aligned}
        \frac{\partial \mathcal{L}}{\partial r} &= 0 
        &&\iff r - 2r^{1.5} + 2 = 0 
        &&\iff r_0 \approx 1.434, \\[8pt]
        \frac{\partial \mathcal{L}}{\partial \lambda} &= 0 
        &&\iff \frac{t_0}{1 - r/2} = const 
        &&\iff t_0 = const \cdot \left(1 - \frac{r_0}{2}\right).
    \end{aligned}
    \end{equation*}

\end{proof}

\subsection{Proposition~\ref{cor:b3_bai}}\label{pf:b3_bai}

\begin{corollary}
Under the data-poor condition, the probability of misidentifying the \(\epsilon/2\)-best arm within the candidate set $C$,
$$
\p(\mu_1(C) - \mu_{a_T} > \epsilon/2)
$$
is bounded by:
\begin{itemize}[noitemsep, topsep=2pt]
    \item US:  $\leq N\cdot\exp\left\{-\Omega\left(\frac{T\epsilon^2}{N}\right)\right\}$
    \item BSH: $\leq\exp\left\{-\Omega\left(\frac{T\epsilon^2}{N(\ln T)^2}\right)\right\}$
    \item B3: $\leq\exp\left\{-\Omega\left(\frac{T \epsilon^2}{N}\right)\right\}.$
\end{itemize}
\end{corollary}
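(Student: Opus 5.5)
The approach is to treat each algorithm separately. In each case, the candidate set $C$ is viewed as a self-contained BAI instance run with the sampling schedule the algorithm actually uses. We then reuse the telescoping and $p_l$-weighting argument from the proof of Theorem~\ref{thm:sh}. In every case we write $C_l$ for the set of candidate arms that ever reached level $l$ and use
\[
(\mu_*(C)-\mu_{a_T}>\epsilon/2)\subset\bigcup_{l}\bigl(\mu_*(C_l)-\mu_*(C_{l+1})>p_l\epsilon/2\bigr),\qquad \sum_l p_l\le 1 .
\]
This reduces the problem to a per-level bound and an identification of the effective base pull count $t_0$.

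\textbf{US.} Here $C$ is the set of $\Theta(T)$ sampled arms by Proposition~\ref{prop:c}, each pulled $\Theta(T/c_0)$ times. Under the data-poor condition $c_0\le N$, so each arm gets at least order $T/N$ pulls. A union bound over at most $N$ arms, each compared against the best candidate with a sub-Gaussian Chernoff bound at gap $\epsilon/2$, gives $N\exp\{-\Omega(T\epsilon^2/N)\}$. This is the only case where the $N$ prefactor survives, because there is no hierarchical elimination to which a Markov-type argument could be applied.

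\textbf{BSH.} The returned arm comes from a bracket of size $2^B$ with $c_0=\Theta(T/(\log_2T)^2)$. Inside that bracket, SH with doubling is run with $r_0=2$, so the base allocation per arm is of order $T/(c_0'\log c_0')$ for the final doubling round. Applying the $r_0=2$ case of Theorem~\ref{thm:sh} and accounting for the bracket overhead of the $\log^2 T$ factor gives exponent $T\epsilon^2/(N(\ln T)^2)$. We replace $c_0$ by $N$ using $c_0\le N$, which only weakens the bound. The comparison among bracket winners costs at most a $\log T$ union, and that is absorbed into the $\Omega$.

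\textbf{B3.} This is the main case. The level-$l$ sample size is $\lceil r_0^l\rceil$ with $r_0\approx1.728<2$. From the proof of Proposition~\ref{prop:c}, the level counts satisfy $n_l\asymp n_0/2^l$ and $T=\Theta(n_0)=\Theta(c_0)$, so the effective $t_0$ is of constant order times $T/c_0\ge T/N$ in the data-poor regime.

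The key step is a per-level analogue of the SH halving bound: $\p(\mu_*(C_l)-\mu_*(C_{l+1})>p_l\epsilon/2)\le 3\exp(-K\epsilon^2t_0p_l^2r_0^l)$. In SH this follows from the best arm beating all but half of the arms. In B3 the best arm at level $l$ fails to be lifted only if it loses a ternary comparison twice in a row: once to a box-mate, making it the median, and then again in the deferred box, possibly repeatedly. Each such loss requires some $p_l\epsilon/2$-worse arm with $r_0^l$ samples to beat it, or the best arm itself to underestimate. We bound this by splitting on the event $\{\hat\mu_{\text{best}}<\mu-p_l\epsilon/4\}$. On the complement, we need a geometric number of independent overestimates among the at most $\log_3 n_l$ competitors in the deferment chain. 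A union over these gives a factor $\log n_l$, which I would try to absorb by choosing $p_l$ as in Theorem~\ref{thm:sh} so that the exponent grows like $(l+1)^2$.

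I expect this step to be the main obstacle. The deferred arm's estimate is reused rather than refreshed, so the comparisons are not independent across $j$, and the competitors in Box$(l,j)$ are themselves selected arms. I would handle this by conditioning on the best arm's level-$l$ estimate and using the fact that competitors' fresh level-$l$ samples are independent of it.

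Once the per-level bound holds, we choose $p_l=(1-r_0^{-1/2})^2(l+1)r_0^{-l/2}$. The Lemma~\ref{lemma:erfc} summation from the proof of Theorem~\ref{thm:sh} then yields $\exp\{-\Omega(\epsilon^2t_0)\}=\exp\{-\Omega(T\epsilon^2/N)\}$, which is the claimed bound for B3.
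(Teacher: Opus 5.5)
Your overall structure matches the paper's. You use the same level-wise telescoping inclusion with weights $p_l$, the same per-level target $3\exp(-K\epsilon^2t_0p_l^2r_0^l)$, the same choice $p_l=(1-r_0^{-1/2})^2(l+1)r_0^{-l/2}$ summed via Lemma~\ref{lemma:erfc}, and essentially the same union bound for US. Your observation that $t_0\asymp T/c_0\ge T/N$ is a cleaner route to the $T/N$ rate than the paper's $T_C=\Theta(T)$ step.

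The trouble is the step you flag as the obstacle: the route you propose would not give the stated rate.
\begin{itemize}
\item \emph{The union prefactor.} A union over the up to $2\log_3 n_l$ competitors along the best arm's deferment chain leaves a prefactor of order $\log n_l$. No choice of $p_l$ absorbs it. In the data-poor regime $t_0=\Theta(1)$ and $n_0=\Theta(T)$, so the $l=0$ term is of order $\log T\cdot e^{-K\epsilon^2p_0^2}$ with $p_0\le 1$. This grows with $T$ for fixed $\epsilon$ and $T/N$.
\item \emph{The conditioning.} Your conditioning argument does not make the deferred competitors' estimates fresh. \textsc{SHIFT} does not resample, and those arms reached $\mathrm{Box}(l,j)$, $j\ge 1$, precisely because of their level-$l$ estimates.
\end{itemize}

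The missing idea, which is the paper's Claim in Step~1, is that the deferment chain never needs to be examined. Let $l^*$ be the best arm of $C_l$.
\begin{itemize}
\item If $l^*$ is the maximum of its first full $\mathrm{Box}(l,0)$, it is lifted and the level-$l$ event fails.
\item On the level-$l$ event, the maximum of that box must lie more than $p_l\epsilon/2$ below $\mu_*(C_l)$; otherwise that arm is lifted into $C_{l+1}$ and the event fails.
\end{itemize}
So the level-$l$ error event is contained in the event that $l^*$ is beaten in $\mathrm{Box}(l,0)$ by a $p_l\epsilon/2$-bad box-mate. All later deferred comparisons can simply be dropped. This also disposes of your ``twice in a row'' description, which overlooks immediate discard.

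In $\mathrm{Box}(l,0)$ all three arms hold fresh samples of order $r_0^l$ from their \textsc{LIFT} into level $l$. These were drawn after the lower-level selections that determine $C_l$ and the box composition. Splitting at $p_l\epsilon/4$ gives one underestimate term and two overestimate terms, i.e.\ $3\exp(-c\,r_0^lp_l^2\epsilon^2)$. The constant is $3=1+2$, with no $\log n_l$ factor and no dependence issue.

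The same caution applies to your BSH sketch. A flat $\log T$ union over brackets cannot be absorbed into the $\Omega$, because in BSH's data-poor regime ($T\lesssim N(\ln T)^2$) the exponent $T\epsilon^2/(N(\ln T)^2)$ may be only $O(\epsilon^2)$. The paper instead sums the per-bracket bounds $\exp\{-\Omega(T\epsilon^2/(2^b(\ln T)^2))\}$, whose exponents double as $b$ decreases, rather than paying a flat factor $B$.
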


\begin{proof}
    \textbf{Proof of US:} $\leq N\cdot\exp\left\{-\Omega\left(\frac{T\epsilon^2}{N}\right)\right\}$

We define \( t_0 \) as the number of pulls per arm in US, and let \(\hat{\mu}_i\) denote the average reward of arm \(i\) after \( t_0 \) pulls.

Suppose the average reward of best arm satisfies 
$$
(\hat{\mu}_1 > \mu_1 - \epsilon/2)
$$
and all arms which are not $\epsilon/2$-best arm also satisfy
$$
(\hat{\mu}_i \leq \mu_i +
\epsilon/2).
$$

Then $\hat{\mu}_1  -\hat{\mu}_i > (\mu_1 - \epsilon/2) - (\mu_i + \epsilon/2)  = (\mu_1 - \mu_i ) - \epsilon > 0$, thus the non-$\epsilon/2$ best arms cannot be selected as the estimated best arm. Specifically,
$$
(\mu_1 - \mu_{a_T}> \epsilon) 
\subset \left((\hat{\mu}_1 > \mu_1 - \epsilon/2) \cap \bigcap_{i: \mu_1 - \mu_i >\epsilon}(\hat{\mu}_i \leq \mu_i +
\epsilon/2)\right)^c,
$$
or more generally, 
\[
(\hat{\mu}_1 < \mu_1 - \epsilon/2) \cup \left(\bigcup_{i \neq 1} \left(\hat{\mu}_i > \mu_i + \epsilon/2\right)\right).
\]
($\because \{i : \mu_1 - \mu_i >\epsilon\} \subset \{i=2,3,\ldots,N\}$)

Using the Hoeffding's bound, the probability of each event is bounded as:
\[
\p(\hat{\mu}_i > \mu_i + \epsilon/2) \quad \text{or} \quad \p(\hat{\mu}_i < \mu_i - \epsilon/2) \leq \exp\left(-\frac{t_0 \epsilon^2}{8}\right).
\]

Since \( t_0 = \Theta(T/N) \), we have:
\[
\p(\mu_1 - \mu_{a_T} > \epsilon) 
\leq \p(\hat{\mu}_1 < \mu_1 - \epsilon/2) + \sum_{i \neq 1} \p(\hat{\mu}_i > \mu_i + \epsilon/2)
\leq N \exp\left(-\frac{T \epsilon^2}{8N}\right).
\]

    \textbf{Proof of BSH:} $\leq\exp\left\{-\Omega\left(\frac{T\epsilon^2}{N(\ln T)^2}\right)\right\}$

    Suppose there are $B$ brackets within the budget $T$ and $B_C$ is the number of brackets in the candidate set $C$.
    Then the event that $\epsilon$-best arm within set $C$ is misidentified is a subset of the event that there exists a bracket which fails to find $\epsilon$-best arm within set $C$:
    \begin{equation}\label{eqn:bsh_set}
        \begin{aligned}
        &\left(\mu_1(C) - \mu_{a_T} > \epsilon\right)\\
        &\subset \bigcup_{b\leq B_C}\left(\mu_1(\{i\in \text{bracket }b\}) -\mu_{\text{the estimated arm in bracket }b} >\epsilon\right).
        \end{aligned}
    \end{equation}

    The $b^{th}$ bracket has the following upper bound of misidentification probability:
    \begin{equation*}
        \begin{aligned}
            &\p\left(\mu_1(\{i\in \text{bracket }b\}) -\mu_{\text{the estimated arm in bracket }b} >\epsilon\right)\\
            &\leq \exp\left\{-\Omega\left(\text{the initial budget of bracket }b\text{ at the last round R}\cdot\epsilon^2\right)\right\}.
        \end{aligned}
    \end{equation*}

    \begin{enumerate}
        \item The number of brackets, $B$(opened) and $B_C$(brackets in the candidate set)
        
            \[
B = \arg\max \{ b \mid (b-1) \ln 2 \cdot e^{(b-1) \ln 2} \leq T \}.
\]

The \textbf{Lambert function}, denoted as \( W(x) \), is defined as the inverse function of \( x = W(x) e^{W(x)} \), meaning it satisfies:

\[
W(x) e^{W(x)} = x.
\]

Since \( W(x) \) has an asymptotic order of \( \ln x \)\cite{lambert}, we obtain:

\[
B = W(T) + 1 \sim \ln T.
\]

Similarly, \( B_C \) is derived in the proof of Proposition \ref{cor:b3_bai} as the solution to:

\[
\mathcal{B} = \arg\max \left\{ 2^b \mid (b-1) 2^{(b-1)} + b^2 2^b \leq T \right\} = \Theta( T/(\log_2T)^2 ).
\]

From this, we conclude:

\[
B_C = \log_2\mathcal{B} =\Theta( \ln T - 2\ln\ln T).
\]
        \item The last round finished $R$
            Then the $b^{th}$ bracket is allocated the budget
    $$
    \frac{(b+1)2^{b+1} - b2^b}{b} + \frac{(b+2)2^{b+2}-(b+1)2^{b+1}}{b+1} + \ldots +\frac{T - (B-1)2^{B-1}}{B}.
    $$
    \begin{enumerate}
        \item $b = B$
        
        \textbf{Claim: }This bracket cannot return the estimated best arm.

        $$B_C = \argmax\{b:(2b^2 + b-1)2^b \leq T-1\}.$$
        Suppose $B$ satisfy the above condition. Then, 
        $$
        (2B^2 + B - 1)2^B \leq T-1
        \leq B2^B - 1.
        $$
        When $B = 1$,
        $(2B^2 + B - 1)2^B = 2\times 2^1$ and $B2^B = 2$. Since the LHS grows faster than RHS, this is a contradiction.

        \item $b = B-1$

        As the same argument with $b = B$, the following inequality should be satisfied for bracket $b$ to be in the candidate set $C$:
        $$
        (2(B-1)^2 + (B-1)-1)2^{B-1}
        \leq T-1 \leq B2^B - 1.
        $$
        The above inequality is satisfied only when $B = 2 (\because B\geq 2$ to be $b = B-1 \geq 1$).
        When $B = 2$, the allocated budget to $b = B-1$ equals to
        $$
        \frac{2^B}{B-1} + \frac{T}{B} - 2^{B-1}
        = \frac{2^{B-1}+T}{B}=\Theta\left( \frac{T}{B}\right)(\because 2^B = \Theta(T)).
        $$

        \item $b \leq B-2$

        The allocated budget can be further simplified as 
        \begin{align*}
        &\sum_{k=b}^{B-2}\frac{(k+1)2^{k+1}}{k\cdot(k+1)} + \frac{T}{B} - \frac{b2^b}{b}\\
        &=\sum_{k=b}^{B-2}\frac{2^{k+1}}{k} + \frac{T}{B}-2^b\\
        &\geq \frac{2^{b+1}}{B-2}\cdot(2^{B-2-b+1}-1)+\frac{T}{B}-2^b\\
        &\underset{(*)}{\gtrsim} \frac{T}{B}
        \end{align*}

        The inequality in terms of order in $(*)$ is due to
        \begin{itemize}[noitemsep, topsep=2pt]
            \item $B =\Theta( \ln T)$,
            \item $B2^b \leq B2^B \leq T$.
        \end{itemize}
        Since each round $r$ starts from $2^r$ and requires $2^r \cdot b2^b$ budget,
        the round $R$ satisfying the following inequality is guaranteed to be finished:
        $$
        \sum_{r=0}^{R-1} 2^r\cdot b2^b \lesssim T/B
        \iff (2^R-1) \lesssim \frac{T}{b2^b\cdot B}.
        $$
    \end{enumerate}
    \end{enumerate}
    Combining these, we get
    \begin{equation*}
        \begin{aligned}
            &\p\left(\mu_1(\{i\in \text{bracket }b\}) -\mu_{\text{the estimated arm in bracket }b} >\epsilon/B\right)\\
            &\leq \exp\left\{-\Omega\left(\frac{T}{b2^b\cdot B}\cdot\epsilon^2\right)\right\}\\
            &\leq\exp\left\{-\Omega\left(\frac{T\epsilon^2}{2^b(\ln T)^2}\right)\right\}. (\because b\leq B, B = \Theta(\ln T))
        \end{aligned}
    \end{equation*}

    Thus, 
    \begin{equation*}
        \begin{aligned}
            \p(\mu_1(C) - \mu_{a_T}>\epsilon)
            &\leq \sum_{b=1}^B \exp\left\{-\Omega\left(\frac{T\epsilon^2}{2^b(\ln T)^2}\right)\right\}\\
            &\leq \exp\left\{-\Omega\left(\frac{T\epsilon^2}{(\ln T)^2}\cdot\frac{2^b}{N}\right)\right\}
        \end{aligned}
    \end{equation*}
    The last inequality comes from the fact that $2^b \leq N$ under the data-poor condition.
    Using the technique used in the proof of Theorem~\ref{thm:sh}, we get
    $$
    \p(\mu_1(C) - \mu_{a_T}>\epsilon)
    \leq \exp\left\{-\Omega\left(\frac{T\epsilon^2}{N(\ln T)^2}\right)\right\}.
    $$

\textbf{Proof of B3:} $\mathbb{P}(\mu_1 - \mu_{a_T} > \epsilon) \leq \exp\left\{-\Omega\left(\frac{T \epsilon^2}{N}\right)\right\}$

To provide a rigorous proof, we decompose the analysis into three steps, as established in our budget and screening capacity analysis. We first define:
\begin{itemize}[noitemsep, topsep=2pt]
    \item $l^*$ : the arm index of the best arm in set $C_l$,
    \item $C_l(\epsilon)$ : the set of arms in $C_l$ whose true mean is larger than $\mu_1(C_l) - \epsilon$,
    \item Box($l,j; i$) : the set of arms in Box($l,j$) when arm $i$ is present and the box is full. 
\end{itemize}

As established in Section 4.1, the event $(\mu_1 - \mu_{a_T} > \epsilon)$ is contained within the union of level-wise error events. Specifically, for any sequence $(p_l)_l$ such that $\sum_l p_l \leq 1$:
\[
(\mu_1 - \mu_{a_T} > \epsilon) \subseteq \bigcup_{l=0}^L (\mu_1(C_l) - \mu_1(C_{l+1}) > p_l \epsilon)
\]
where
\[
C_l \equiv \{i: \text{arm }i\text{ has been in level }l\}
\]

\paragraph{Step 1: Bounding Level-wise Error and Candidate Budget $T_C$}
For the event $(\mu_1(C_l) - \mu_1(C_{l+1}) > p_l \epsilon)$ to occur, arm $l^*$ must fail to be promoted. This happens if:
\begin{enumerate}
    \item The average reward of $l^*$ is either the median or the minimum in Box($l,0; l^*$).
    \item There exists an arm $i \in$ Box($l,0; l^*$) such that $i \notin C_l(p_l \epsilon)$.
\end{enumerate}

\begin{claim}
    The event $(\mu_1(C_l) - \mu_1(C_{l+1}) > p_l \epsilon)$ implies that the empirical mean of the best arm $l^*$ is not the strictly largest in its box: $\hat{\mu}_{l^*}^l \leq \hat{\mu}_{\text{med}[l,0;l^*]}^l$.
\end{claim}
\begin{claimproof}
    In B3, an arm $i \in C_l$ is only removed or deferred if it fails to be the largest in its box comparison. The suspension mechanism (SHIFT) ensures that even if an arm is the median, it is revisited. However, for the best arm to be lost from the candidate set at level $l$, it must be either discarded (minimum) or suspended indefinitely. In both cases, the initial condition is $\hat{\mu}_{l^*}^l \leq \hat{\mu}_{\text{med}[l,0;l^*]}^l$.
\end{claimproof}

By applying the concentration inequalities derived in Section D.2, the probability of this error at level $l$ is:
\[
\mathbb{P}(\mu_1(C_l) - \mu_1(C_{l+1}) > p_l \epsilon) \leq 3 \exp\left(-\frac{r_0^l p_l^2 \epsilon^2}{4}\right).
\]
By setting the sequence $p_l = (1 - r_0^{-0.5})^2 \cdot (l+1) \cdot r_0^{-0.5l}$ and summing over all levels $l$, we obtain a bound as a function of the total pulls allocated to candidate arms, $T_C$:
\[
\mathbb{P}(\mu_1 - \mu_{a_T} > \epsilon) \leq \exp\left(-\Omega\left( \frac{T_C \epsilon^2}{N} \right)\right).
\]

\paragraph{Step 2: Proving the Budget Proportionality $T_C = \Theta(T)$}
A critical part of our analysis is ensuring that the candidate budget $T_C$ is not negligible compared to the total budget $T$. 
As shown in the proof of $c_0 = \Theta(T)$ (Appendix A), the number of initial candidate arms $n_0$ is proportional to $T$. Since B3 allocates a constant base pull $r_0$ and the hierarchical structure ensures that the total number of pulls follows a geometric progression across levels, the total budget spent on processing these $n_0$ arms ($T_C$) maintains the same order as $T$.
Formally, there exists a constant $\alpha > 0$ such that:
\[
T_C \geq \alpha T.
\]

\paragraph{Step 3: Final Substitution and Result}
Substituting the proportionality $T_C = \Theta(T)$ into the bound obtained in Step 1:
\[
\mathbb{P}(\mu_1 - \mu_{a_T} > \epsilon) \leq \exp\left(-\Omega\left( \frac{\alpha T \epsilon^2}{N} \right)\right) = \exp\left(-\Omega\left(\frac{T \epsilon^2}{N}\right)\right).
\]
This concludes that the misidentification probability of B3 in the data-poor regime matches the optimal rates up to constant factors, while maintaining its fully anytime property.

\end{proof}

\end{document}


=======experiments=========


\subsection{Setup}
We compared our method with US, BUCB($\delta = 0.1$) and BSH using three types of true means described in Table~\ref{tbl:sim_setting} for $5,513$ arms with budget $T = 10,000$.
\begin{table}[h]
\caption{Three types of simulation setting. A larger $N_{\epsilon}$(the number of $\epsilon$-best arm) indicates a lower non-inclusion probability, while a larger $H_2= \sup_{i\neq1} i(\mu_1 - \mu_i)^{-2}$ signifies greater difficulty in identifying the best arm. $N$ is set to $5,513$.}
\label{tbl:sim_setting}
\begin{tabular}{@{}llll@{}}
\toprule
               & $\mu_1 - \mu_{i+1}$                            & $N_{\epsilon}$           & $H_2$      \\ \midrule
Polynomial Gap & \(3 \cdot (i/N)^{0.5}\) & $N\cdot\epsilon^{1/0.5}$ & $N^{-0.5}$ \\
Equal Gap      & \(3 \cdot (i/N)\)                & $N\cdot\epsilon$         & $N$        \\
Square Gap     & \(3 \cdot (i/N)^2\) & $N\cdot\epsilon^{1/2}$   & $N^2$   \\ \bottomrule
\end{tabular}
\end{table}

We also evaluated our method on the \textbf{New York Cartoon Caption Contest (NYCCC)} dataset~\cite{NYCCC}, a standard BAI benchmark~\cite{bai_fdr, eps_bai, bucb, bsh}. Using contest 893, which contains $5,513$ captions, with budget \(T = 10,000\). Each caption receives one of three responses: Not Funny/Somewhat Funny/Funny. Rewards were modeled as Bernoulli, with the true mean for each caption defined as the proportion of "Funny" and "Somewhat Funny" responses. A more detailed explanation of the simulation settings and further results are given in Appendix~\ref{appx:sim}.

All experiments were run for 1,000 iterations per setting.

\subsection{Results}
Figure~\ref{fig:result_sim} illustrates the average simple regrets of four algorithms for each setting. Initially, algorithms with lower non-inclusion probabilities achieve smaller simple regrets, but as \(T\) grows, the regret depends more on the misidentification probability within the candidate set. While US reduces simple regret quickly in the early phase, it stagnates after a few pulls. In contrast, bracketing-based methods like BUCB and BSH continue to improve, with B3 achieving the lowest simple regret overall. Notably, despite having a clearly defined best arm, the polynomial gap instance results in the highest simple regret, suggesting that simple regret is more influenced by the number of \(\epsilon\)-best arms than by the difficulty of identifying the best arm.
\begin{figure}[h]
    \centering
    \subfigure[Polynomial Gap]{
        \includegraphics[width=0.8\linewidth]{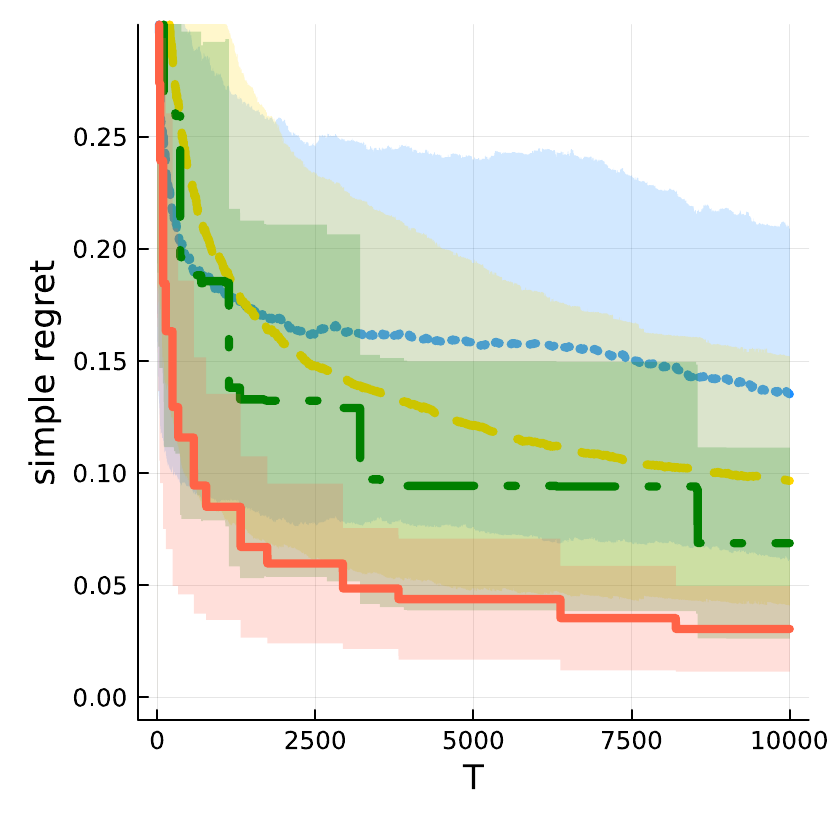} 
    }

    \subfigure[Equal Gap]{
        \includegraphics[width=0.8\linewidth]{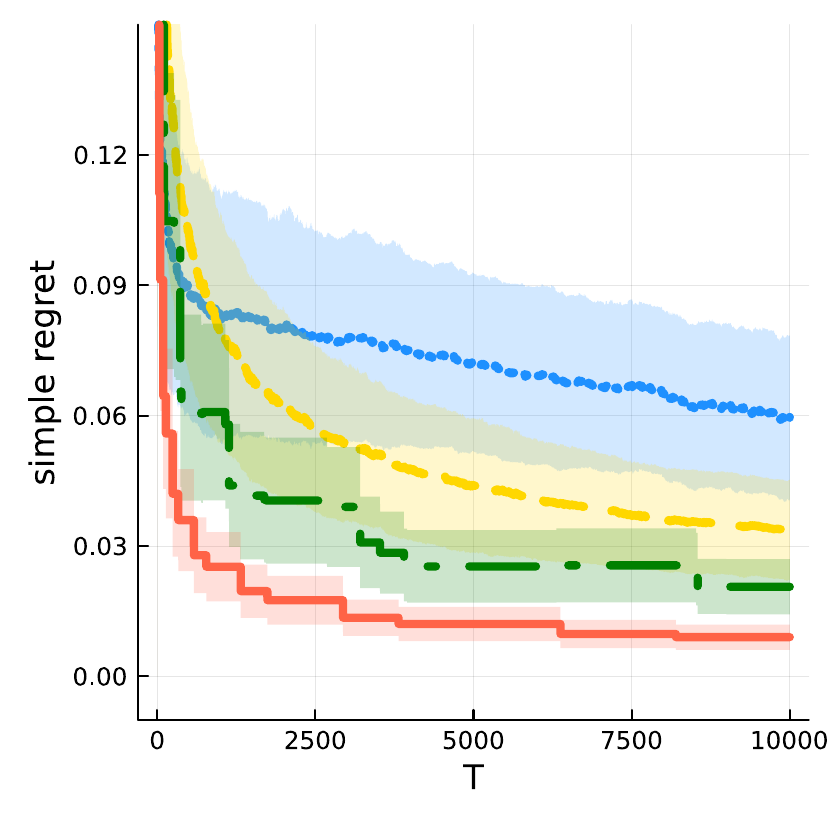} 
    }

    \subfigure[Square Gap]{
        \includegraphics[width=0.8\linewidth]{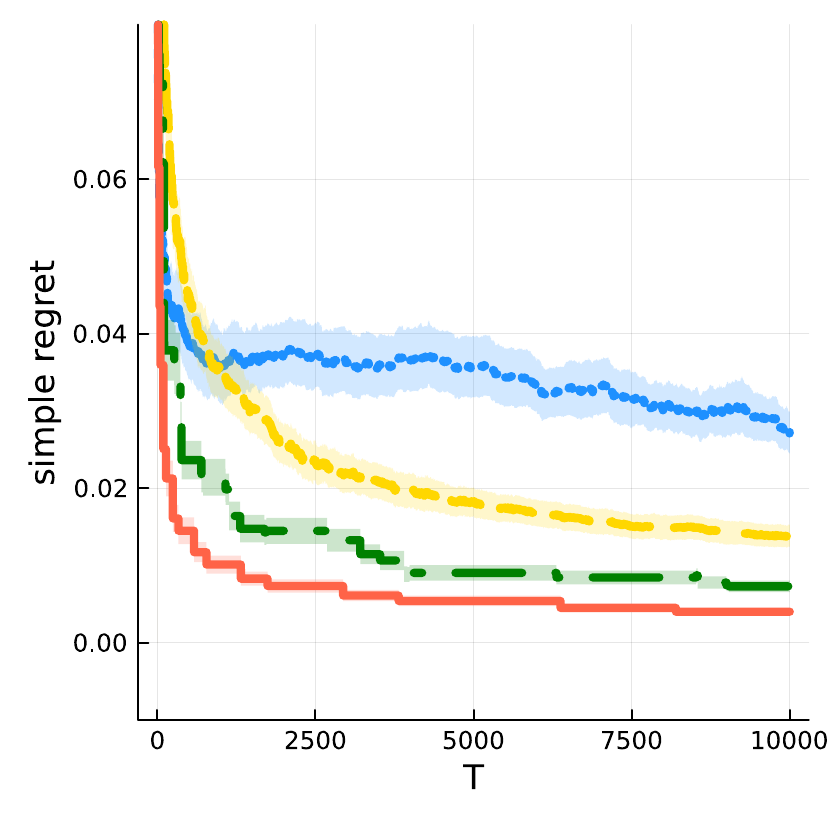} 
    }
    \caption{The average simple regrets from $T = 1$ to $10,000$.}
    \label{fig:result_sim}
\end{figure}

\begin{figure}[h]
    \centering
    \subfigure[Average simple regret over \( T = 1 \) to \( T = 10,000 \)]{%
        \includegraphics[width=\linewidth]{figures/NYCCC_normal.pdf} 
        \label{fig:1}
    }
    \hfill
    \subfigure[Box plot of simple regrets at \( T = 5,000\)(left), \(10,000\)(right)]{%
        \includegraphics[width=\linewidth]{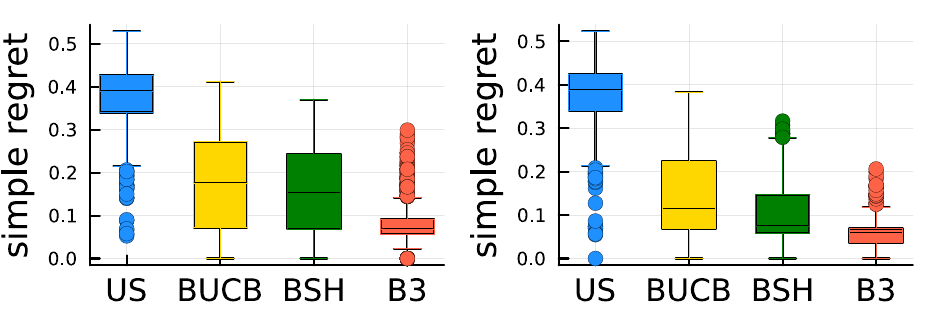} 
        \label{fig:2}
    }
    
    \caption{Simulation results of NYCCC 893 over 1000 repetitions. (a) The average simple regret across \( T = 1 \) to \( T = 10,000\). (b) Box plots of simple regrets at $T =5000, 10,000$.}
    \label{fig:result_nyccc}
\end{figure}

Figure~\ref{fig:result_nyccc} confirms this trend in the NYCCC dataset: the initial phase is dominated by non-inclusion probability, while the later phase depends on the best arm misidentification probability. Across all settings, B3 consistently achieves the lowest simple regret except first few pulling.